\documentclass{article}






\usepackage[utf8]{inputenc} 
\usepackage[T1]{fontenc}    
\usepackage{hyperref}       
\usepackage{url}            
\usepackage{booktabs}       
\usepackage{amsfonts}       
\usepackage{nicefrac}       
\usepackage{microtype}      
\usepackage{xcolor}         
\usepackage{url}
\usepackage{amsfonts}
\usepackage{nicefrac}
\usepackage{microtype}      
\usepackage{xcolor}         
\usepackage{graphicx}
\usepackage{booktabs} 
\usepackage{natbib}
\usepackage{booktabs}   \usepackage{mathrsfs}
\usepackage{amsmath,amssymb}
\usepackage{algorithm}
\usepackage{algorithmic}
\usepackage{changepage}
\usepackage{makecell}
\usepackage{microtype}  \usepackage{amsthm}
\usepackage{changepage}
\usepackage{color}
\usepackage{hyperref}
\hypersetup{
  colorlinks,
  citecolor=blue!70!black,
  linkcolor=blue!70!black,
  urlcolor=blue!70!black}
\usepackage{bbm}
\newcommand{\cX}{\mathcal{X}}

\newcommand{\cG}{\mathcal{G}}
\newcommand{\cS}{\mathcal{S}}
\newcommand{\cA}{\mathcal{A}}
\newcommand{\cR}{\mathcal{R}}

\newcommand{\cC}{\mathcal{C}}
\newcommand{\cF}{\mathcal{F}}

\newcommand{\cD}{\mathcal{D}}
\newcommand{\cZ}{\mathcal{Z}}

\newcommand{\cN}{\mathcal{N}}

\newcommand{\EE}{\mathbb{E}}
\newcommand{\RR}{\mathbb{R}}

\newcommand{\ds}{\mathrm{dim}}

\newcommand{\poly}{\mathrm{poly}}
\newcommand{\err}{\mathrm{err}}
\newcommand{\unif}{\mathrm{Unif}}

\newcommand{\one}{\mathbbm{1}}

\newtheorem{assum}{Assumption}

\newtheorem{thm}{Theorem}
\newtheorem{lem}{Lemma}

\newtheorem{defn}{Definition}
\newtheorem{remark}{Remark}
\renewcommand{\epsilon}{\varepsilon}
\renewcommand{\tilde}{\widetilde}
\DeclareMathOperator*{\argmin}{arg\,min}
\DeclareMathOperator*{\argmax}{arg\,max}

\usepackage{fullpage}

\title{Provably Feedback-Efficient Reinforcement Learning via Active 
Reward Learning\footnote{36th Conference on Neural Information Processing Systems (NeurIPS 2022).}
}

%

\author{
Dingwen Kong\\
Peking University\\
\texttt{dingwenk@pku.edu.cn}\\
\and Lin F. Yang\\
University of California, Los Angles\\
\texttt{linyang@ee.ucla.edu}
}

\begin{document}

\maketitle

\begin{abstract}
An appropriate reward function is of paramount importance in specifying a task in reinforcement learning (RL). Yet, it is known to be extremely challenging in practice to design a correct reward function for even simple tasks. Human-in-the-loop (HiL) RL allows humans to communicate complex goals to the RL agent by providing various types of feedback. However, despite achieving great empirical successes, HiL RL usually requires \emph{too much} feedback from a human teacher and also suffers from insufficient theoretical understanding. In this paper, we focus on addressing this issue from a theoretical perspective, aiming to provide provably feedback-efficient algorithmic frameworks that take human-in-the-loop to specify rewards of given tasks. We provide an \emph{active-learning}-based RL algorithm that first explores the environment without specifying a reward function and then asks a human teacher for only a few queries about the rewards of a task at some state-action pairs. After that, the algorithm guarantees to provide a nearly optimal policy for the task with high probability. We show that, even with the presence of random noise in the feedback, the algorithm only takes $\tilde{O}(H{{\dim_{R}^2}})$ queries on the reward function to provide an $\epsilon$-optimal policy for any $\epsilon > 0$. Here $H$ is the horizon of the RL environment, and $\dim_{R}$ specifies the complexity of the function class representing the reward function. In contrast, standard RL algorithms require to query the reward function for at least $\Omega(\operatorname{poly}(d, 1/\epsilon))$ state-action pairs where $d$ depends on the complexity of the environmental transition.

\end{abstract}

\section{Introduction}





A suitable reward function is essential for specifying a reinforcement learning (RL) agent to perform a complex task. Yet obvious approaches such as hand-designed reward is not scalable for large number of tasks, especially in the multitask settings~\citep{wilson2007multi,brunskill2013sample,yu2020gradient,sodhani2021multi}, and can also be extremely challenging (e.g., \citep{ng1999policy,marthi2007automatic} shows even intuitive reward shaping can lead to undesired side effects). 
Recently, a popular framework called Human-in-the-loop (HiL) RL~\citep{knox2009interactively,christiano2017deep,macglashan2017interactive, ibarz2018reward, lee2021pebble,wang2022skill} gains more interests as it allows humans to communicate complex goals to the RL agent directly by providing various types of feedback. 
In this sense, a reward function can be learned automatically and can also be corrected at proper times if unwanted behavior is happening.
Despite its promising empirical performance, HiL algorithms still suffer from insufficient theoretical understanding and possess drawbacks \cite{arakawa2018dqn}, e.g., it assumes humans can give precise numerical rewards and do so without delay and at every time step, which are usually not true.
Moreover, these approaches usually train on every new task separately and cannot incorporate exiting experiences.

In this paper, we attempt to address the above issues of incorporating humans' feedback in RL from a theoretical perspective. In particular, we would like to address (1) the high \emph{feedback complexity} issue -- i.e., the algorithms in practice usually require large amount feedback from humans to be accurate; (2) feedback from humans can be noisy and non-numerical; (3) in need for support of multiple tasks.
In particular we consider a fixed unknown RL environment, and formulate  a task as an unknown but fixed reward function.
A human who wants the agent to accomplish the task needs to communicate the reward to the agent.
It is not possible to directly specify the parameters of the reward function as the human may not know it exactly as well, but is able to specify good actions at any given state.
To capture the non-numerical feedback issue, we assume that the feedback we can get for an action is only binary -- whether an action is ``good'' or ``bad''. We further assume that the feedback is noisy in the sense that the feedback is only correct with certain probability. 
Lastly, we require that the algorithm, after some initial exploration phase, should be able to accomplish multiple tasks by only querying the reward rather than the environment again. 

In the supervised learning setting, if we only aim to learn a reward function, the feedback complexity can be well-addressed by the active learning framework \cite{settles2009active,hanneke2014theory} -- an algorithm only queries a few samples of the reward entries and then provide a good estimator. 
Yet this become challenging in the RL setting as it is a sequential decision making problem -- state-action pairs that are important in the supervised learning setting may not be accessible in the RL setting.
Therefore, to apply similar ideas in RL, we need a way to explore the environment and collect samples that are important for reward learning. 
Fortunately, there were a number of recent works focusing ``reward-free'' exploration \cite{jin2020reward,wang2020reward} on the environment. Hence, applying such an algorithm would not affect the feedback complexity. 
Additionally it is possible for us to reuse the collected data for multiple tasks.

Our proposed theoretical framework is a non-trivial integration of reward-free reinforcement learning and active learning. The algorithm possesses two phases: in phase I, it performs reward-free RL to explore the environment and collect the small but necessary amount of the information about the environment; in phase II, the algorithm performs active learning to query the human for the reward at only a few state-action pairs and then provide a near optimal policy for the tasks with high probability. 
The algorithm is guaranteed to work even the feedback is noisy and binary and can solve multiple tasks in phase II. Below we summarize our contributions:
\begin{enumerate}
    \item We propose a theoretical framework for incorporating humans' feedback in RL. The framework contains two phases: an unsupervised exploration and an active reward learning phase. Since the two phases are separated, our framework is suitable for multi-task RL.
    \item Our framework deals with a general and realistic case where the human feedback is stochastic and binary-i.e., we only ask the human teacher to specify whether an action is ``good'' or ``bad''. We design an efficient active learning algorithm for learning the reward function from this kind of feedback.
    \item Our query complexity is minimal because it is independent of both the environmental complexity $d$ and target policy accuracy $\epsilon$. In contrast, standard RL algorithms require query the reward function for at least $\Omega(\poly(d, 1/\epsilon))$ state-action pairs. Thus our work provides a theoretical validation for the recent empirical HiL RL works where the number of queries is significantly smaller than the number of environmental steps.
    \item Moreover, we shows the efficacy of our framework in the offline RL setting, where the environmental transition dataset is given beforehand. 
\end{enumerate}

\subsection{Related Work}
\label{related1}
\paragraph{Sample Complexity of Tabular and Linear MDP.}
There is a long line of theoretical work on the sample complexity and regret bound for tabular MDP. See, e.g., \citep{kearns2002near, jaksch2010near, azar2017minimax, jin2018q, zanette2019tighter, agarwal2020model,wang2020long,li2022settling}. The linear MDP is first studied in \citet{yang2019sample}. See, e.g., \citep{yang2020reinforcement,jin2020provably,zanette2020learning,ayoub2020model,zhou2021nearly,zhou2021provably} for sample complexity and regret bound for linear MDP.
\paragraph{Unsupervised Exploration for RL.} The reward-free exploration setting is first studied in \citet{jin2020reward}. This setting is later studied under different function approximation scheme: tabular~\citep{kaufmann2021adaptive,menard2021fast,wu2022gap}, linear function approximation~\citep{wang2020reward,zanette2020provably,zhang2021reward,huang2022towards,wagenmaker2022reward,agarwal2020flambe,modi2021model}, and general function approximation~\citep{qiu2021reward,kong2021online,chen2022statistical}. Besides, \citet{zhang2020task,yin2021optimal} study task-agnostic RL, which is a variety of reward-free RL. \citet{wu2021accommodating} studies multi-objective RL in the reward-free setting. \citet{bai2020provable,liu2021sharp} studies reward-free exploration in Markov games.
\paragraph{Active Learning.}
Active learning is relatively well-studied in the context of unsupervised learning. See, e.g., \citet{dasgupta2007general,balcan2009agnostic,settles2009active,hanneke2014theory} and the references therein. Our active reward learning algorithm is inspired by a line of works~\citep{cesa2009robust,dekel2010robust,agarwal2013selective} considering online classification problem where they assume the response model $P(y|x)$ is linear parameterized. However, their works can not directly apply to the RL setting and also the non-linear case.
There are also many empirical study-focused paper on active reward learning. See, e.g., \citep{daniel2015active,christiano2017deep,sadigh2017active,biyik2019asking,biyik2020active,wilde2020active,lindner2021information,lee2021pebble}. Many of them share similar algorithmic components with ours, like information gain-based active query and unsupervised pre-training. But they do not provide finite query complexity bounds.

\section{Preliminaries}
\subsection{Episodic Markov Decision Process}
In this paper, we consider the finite-horizon Markov decision process (MDP) $M=(\mathcal{S},\mathcal{A},P,r,H,s_{1})$, where $\mathcal{S}$ is the state space, $\mathcal{A}$ is the action space, $P=\{P_h\}_{h=1}^H$ where $P_h:\mathcal{S} \times \mathcal{A} \rightarrow \mathcal{\triangle}(\mathcal{S})$ are the transition operators, $r=\{r_h\}_{h=1}^H$ where $r_h:\mathcal{S} \times \mathcal{A} \rightarrow \{0,1\}$ are the deterministic \emph{binary} reward functions, and $H$ is the planning horizon. Without loss of generality, we assume that the initial state $s_{1}$ is fixed.\footnote{For a general initial distribution $\rho$, we can treat it as the first stage transition probability, $P_1$.}
In RL, an agent interacts with the environment episodically. Each episode consists of $H$ time steps. A deterministic policy $\pi$ chooses an action $a \in \mathcal{A}$ based on the current state $s\in \mathcal{S}$ at each time step $h \in [H]$. Formally, $\pi=\{\pi_h\}_{h=1}^H$ where for each $h \in [H]$, $\pi_h:\mathcal{S}\rightarrow \mathcal{A}$ maps a given state to an action. In each episode, the policy $\pi$ induces a trajectory
$$
s_1,a_1,r_1,s_2,a_2,r_2,...,s_H,a_H,r_H,s_{H+1}
$$
where $s_1$ is fixed, $a_1=\pi_1(s_1)$, $r_1=r_1(s_1,a_1)$, $s_2 \sim P_1(\cdot|s_1,a_1)$, $a_2=\pi_2(s_2)$, etc.

We use Q-function and V-function to evaluate the long-term expected cumulative reward in terms of the current state (state-action pair), and the policy deployed. Concretely, the Q-function and V-function are defined as:
$
Q_h^\pi(s,a)=\mathbb{E}\big[\sum_{h'=h}^Hr_{h'}(s_{h'},a_{h'})|s_h=s,a_h=a,\pi\big]
$
and
$
V_h^\pi(s)=\mathbb{E}\big[\sum_{h'=h}^Hr_{h'}(s_{h'},a_{h'})|s_h=s,\pi\big]
$.
We denote the optimal policy as $\pi^*=\{\pi_h^*\}_{h\in[H]}$,  optimal values as $Q_h^*(s,a)$ and $V_h^*(s)$. Sometimes it is convenient to consider the Q-function and V-function where the true reward function is replaced by a estimated one $\hat{r}=\{\hat{r}_h\}_{h\in[H]}$. We denote them as $Q_h^\pi(s,a,\hat{r})$ and $V_h^\pi(s,\hat{r})$. We also  denote the corresponding optimal policy and value as $\pi^*(\hat{r})$, $Q_h^*(s,a,\hat{r})$ and $V_h^*(s,\hat{r})$.

\paragraph{Additional Notations.} We define the infinity-norm of function $f:\mathcal{S}\times\mathcal{A}\rightarrow \mathbb{R}$ as $$\|f\|_\infty=\sup_{(s,a)\in\mathcal{S}\times\mathcal{A}}|f(s,a)|.$$
For a set of state-action pairs $\mathcal{Z}\subseteq \mathcal{S}\times\mathcal{A}$ and a function $f:\mathcal{S}\times\mathcal{A}\rightarrow \mathbb{R}$, we define $$\|f\|_{\mathcal{Z}}=\left(\sum_{(s,a)\in{\cZ}}f(s,a)^2\right)^{1/2}.$$
\section{Technical Overview}
In this section we give a overview of our learning scenario and notations, as well as the main techniques. The learning process divides into {two} phases.
\subsection{Phase 1: Unsupervised Exploration} 
The first step is to explore the environment without reward signals. Then we can query the human teacher about the reward function in the explored region. We adopt the \emph{reward-free exploration} technique developed in~\citet{jin2020reward, wang2020reward}. The agent is encouraged to do exploration by maximizing the cumulative \emph{exploration bonus}. Concretely, we gather $K$ trajectories $\cD=\{(s_h^k,a_h^k)\}_{(h,k)\in[H]\times[K]}$ by interacting with the environment. We can strategically choose which policy to use. At the beginning of the $k$-th episode, we calculate a policy $\pi_k$ based on the history of the first $k-1$ episodes and use $\pi_k$ to induce a trajectory $\{(s_h^k,a_h^k)\}_{h\in[H]}$.

A similar approach called \emph{unsupervised pre-training}~\citep{sharma2020dynamics,liu2021behavior} has been successfully used in practice. Concretely, in unsupervised pre-training agents are encouraged to do exploration by maximizing various \emph{intrinsic rewards}, such as prediction errors~\citep{houthooft2016vime} and count-based state-novelty~\citep{tang2017exploration}.
\subsection{Phase 2: Active Reward Learning}
The second step is to learn a proper reward function from human feedback. Our work assumes that the underlying valid reward is 1-0 binary, which is interpreted as good action and bad action. We remark that RL problems with binary rewards represent a large group of RL problems that are suitable and relatively easy for having human-in-the-loop. A representative group of problems is the binary judgments: For example, suppose we want a robot to learn to do a backflip. A human teacher will judge whether a flip is successful and assign a reward of 1 for success and 0 for failure. Furthermore, our framework can also be generalized to RL problems with $n$-uniform discrete rewards. The detailed discussion is defered to Appendix~\ref{sec:beyondbinary} due to space limit.

Concretely, consider a fixed stage $h\in[H]$, and we are trying to learn $r_h$ from the human response. Each time we can query a datum $z=(s,a)\in\cD$ and receive an independent random response $Y\in\{0,1\}$ from the human expert, with distribution:
\[
P(Y=1|z)=1-P(Y=0|z)=f_h^{*}(z).
\]
Here $f^*_h$ is the human response model and needs to be learned from data. We assume that the underlying valid reward of $z$ can be determined by $f^*_h(z)$ in the following manner:
\[
\textstyle{
r_h(z)=\left\{\begin{aligned}
    1,\ \ f_h^*(z)>1/2\\
    0,\ \ f_h^*(z)\leq 1/2.
\end{aligned}
\right.}
\]
Note that the query returns 1 with a probability greater than $\frac12$ if and only if the underlying valid reward is 1. 
To make the number of queries  as small as possible, we choose a small subset of informative data to query the human. We adopt ideas in the \emph{pool-based active learning} literature and select informative queries greedily. We show that only $\tilde{O}(H{\dim_{R}^2})$ queries need to be answered by the human teacher. The active query method is widely used in human-involved reward learning in practice and shows superior performance than uniform sampling~\citet{christiano2017deep,ibarz2018reward,lee2021pebble}.

After we learn a proper reward function $\hat{r}$, we use past experience $\cD$ and $\hat{r}$ to plan for a good policy. Note that in this phase we are not allowed for further interaction with the environment. In the multi-task RL setting, we can run Phase 2 for multiple times and reuse the data collected in Phase 1.

Now we discuss the efficacy of our framework. A naive approach for reward learning via human feedback is asking the human teacher to evaluate the reward function in each round. This approach results in equal environmental steps and number of queries. This high query frequency is unacceptable for large-scale problems. For example, in~\citet{lee2021pebble} the agent learns complex tasks with very few queries ($\sim10^2 \text{ to } 10^3$ queries) to the human compared to the number of environmental steps ($\sim10^6$ steps) by utilizing active query technique. From the theoretical perspective, usual RL sample complexity bound scales with $\propto\poly(d,\frac{1}{\epsilon})$, where $d$ is the complexity measure of the environmental transition and $\epsilon$ is the target policy accuracy. This quantity can be huge when the environment is complex (i.e., $d$ is large) or with small target accuracy. Our query complexity is desirable since it is independent of both $d$ and $1/\epsilon$.

\section{Pool-Based Active Reward Learning}
In this section we formally introduce our algorithm for active reward learning. We consider a fixed stage $h$ and learn $r_h$ by querying a small subset of $\cZ_h=\{(s_h^k,a_h^k)\}_{k\in[K]}$. We omit the subscript $h$ in this section, i.e., we use $\cZ,z_k,r,f^*$ to denote $\cZ_h,z_h^k,r_h,f^*_h$ in this section. 
Since $\cZ$ is given before the learning process starts, we refer to this learning scenario as \emph{pool-based} active learning. Our purpose is to learn a reward function $\hat{r}(\cdot)$ such that $r(z)=\hat{r}(z)$ for most of $z$ in $\cZ$. At the same time, we hope the number of queries  can be as small as possible.

We assume $\cF$ is a pre-specified function class to learn $f^*$ from, and $\cF$ is known as a prior. We assume that $\cF$ has enough expressive power to represent the human response. Concretely, we assume the following \emph{realizability}.
\begin{assum}[Realizability]
\label{assum:real}
$f^*\in\cF$.
\end{assum}
The learning problem can be arbitrarily difficult, especially when $f^*(z)$ is close to $\frac12$, in which case it will be difficult to determine the true value of $r(z)$. To give a problem-dependent bound, we assume the following \emph{bounded noise} assumption. In the literature on statistical learning, this assumption is also referred to as \emph{Massart noise}~\citep{massart2006risk,gine2006concentration,hanneke2014theory}. {Our framework can also work under the \emph{low noise} assumption - for brevity, we defer the discussion to Appendix~\ref{sec:beyondbounded}.}
\begin{assum}[Bounded Noise]
\label{assum:margin}
There exists $\Delta>0$, such that for all $z\in\cS\times\cA$,
$$
|f^*(z)-\frac12|>\Delta.
$$
\end{assum}
{The value of the margin $\Delta$ depends on the intrinsic difficulty of the reward learning problem and the capacity of the human teacher. For example, if the reward is rather easy to specify and the human teacher is a field expert, and can always give the right answer with a probability of at least 80\%, then $\Delta$ will be 0.3. But if the learning problem is hard or the human teacher is unfamiliar with the problem and can only give near-random answers, then $\Delta$ will be very small. But in that case, we won’t hope the human teacher can help us in the first place. So a typical good value for $\Delta$ should be a constant.}
\paragraph{Examples.} We give two examples of $\cF$ that is frequently studied in the active learning literature. In the linear model, the function class $\cF$ consists of $f$ in the following form:
$$
f(z)=\frac{\left<\phi(z),w\right>+1}{2}.
$$
In the logistic model, the function class $\cF$ consists of $f$ in the following form:
$$
f(z)=\frac{\exp{\left<\phi(z),w\right>}}{1+\exp{\left<\phi(z),w\right>}}.
$$
Here $\phi:\cS\times\cA\rightarrow \mathbb{R}^d$ is a fixed and known feature extractor, and $w\in\mathbb{R}^d$.

The complexity of $\cF$ essentially depends on the learning complexity of the human response model. We use the following \emph{Eluder dimension}~\citep{russo2014learning} to characterize the complexity of $\cF$. The eluder dimension serves as a common complexity measure of a general non-linear function class in both reinforcement learning literature~\citep{osband2014model,ayoub2020model, wang2020reinforcement,  jin2021bellman} and active learning literature~\citep{chen2021active}.
\begin{defn}[Eluder Dimension]
Let $\varepsilon\ge 0$ and $\cZ =\{(s_i, a_i)\}_{i=1}^n\subseteq \cS\times\cA$ be a sequence of state-action pairs. \\
(1) A state-action pair $(s,a)\in \cS\times \cA$ is \emph{$\varepsilon$-dependent} on $\cZ$ with respect to $\cF$ if any $f, f'\in \cF$ satisfying $\|f - f'\|_{\cZ} \le \varepsilon$ also satisfies $|f(s,a) - f'(s,a)|\le \varepsilon$.  \\
(2) An $(s,a)$ is \emph{$\varepsilon$-independent} of $\cZ$ with respect to $\cF$ if $(s,a)$ is not $\varepsilon$-dependent on $\cZ$. \\
(3) The $\varepsilon$-eluder dimension $\ds_E(\cF,\varepsilon)$ of a function class $\cF$ is the length of the longest sequence of elements in $\cS \times \cA$ such that, for some $\varepsilon' \ge \varepsilon$, every element is $\varepsilon'$-independent of its predecessors.\\
(4) The \emph{eluder dimension} of a function class $\cF$ is defined as
$$
\dim_E(\cF):=\limsup_{\alpha\downarrow0}\frac{\dim_E(\cF,\alpha)}{\log(1/\alpha)}.
$$
\end{defn}

We remark that a wide range of function classes, including linear functions, generalized linear functions and bounded degree polynomials, have bounded eluder dimension. 

\begin{defn}[Covering Number and Kolmogorov Dimension]
\label{def:cover}
For any $\varepsilon >0$, there exists an $\varepsilon$-cover $\cC(\cF, \varepsilon) \subseteq \cF$ with size $|\cC(\cF, \varepsilon)| \le \cN(\cF, \varepsilon)$, such that for any $f\in \cF$, there exists $f'\in \cC(\cF,\varepsilon)$ with  $\|f-f'\|_{\infty}\le \varepsilon$. The Kolmogorov dimension of $\cF$ is defined as:
$$
\dim_{K}(\cF):=\limsup_{\alpha\downarrow0}\frac{\log(\cN(\cF,\alpha))}{\log(1/\alpha)}.
$$
\end{defn}
The Kolmogorov dimension is also bounded by $O(d)$ for linear/generalized linear function class. Throughout this paper, we denote
$$
\dim(\cF):=\max\{\dim_E(\cF),\dim_K(\cF)\} 
$$
as the complexity measure of $\cF$. When $\cF$ is the class of $d$-dimensional linear/generalized linear functions, $\dim(\cF)$ is bounded by $O(d)$.
\subsection{Algorithm}
We describe our algorithm for learning the human response model and the underlying reward function. We sequentially choose which data points to query. Denote $\cZ_k$ the first $k$ points that we decide to query and initial $\cZ_0$ to be an empty set. For each $z\in\cZ$, we use the following bonus function to measure the information gain of querying $z$, i.e., how much new information $z$ contains compared to $\cZ_{k-1}$:
\[
b_k(\cdot)\leftarrow \sup_{f,f'\in\cF,\|f-f'\|_{\cZ_{k-1}}\leq \beta}|f(\cdot)-f'(\cdot)|
\]
We then simply choose $z_k$ to be $\arg\max_{z\in\cZ}b_k(z)$. After the $N$ query points are determined, we query their labels from a human. The human response model is then learned by solving a least-squares regression:
\[
\tilde{f}\leftarrow
\min_{f\in\cF}\sum_{z\in\cZ_N}(f(z)-l(z))^2.
\]
The human response model is used for estimating the underlying reward function. {We round $\tilde{f}$ to the cover $\cC(\cF,\Delta/2)$ to ensure that there are a finite number of possibilities of such functions – this gives us the convenience of applying union bound in our analysis. Indeed, we believe a more refined analysis would remove the requirement of rounding but will make the analysis much more involved.} The whole algorithm is presented in Algorithm~\ref{alg:active_reward_learning}. Note that such an interactive mode with the human teacher is \emph{non-adaptive} since all queries are given to the human teacher in one batch. This property makes our algorithm desirable in practice. {Here we assume the value of $\Delta$ is known as a prior. We can extend our results to the case where $\Delta$ is unknown. For brevity, we defer the discussion to Appendix~\ref{sec:unknown}.}
\begin{algorithm}[hbt]
	\caption{Active Reward Learning($\cZ$, $\Delta$, $\delta$)\label{alg:active_reward_learning}}
	\begin{algorithmic}
		\STATE \textbf{Input:} Data Pool $\cZ=\{z_i\}_{i\in[T]}$, margin $\Delta$, failure probability $\delta\in(0,1)$
		\STATE $\cZ_0\leftarrow \{\}$ {\color{blue}//Query Dataset}
		\STATE Set
		$
		N\leftarrow C_1\cdot \frac{(\dim^2(\cF)+\dim(\cF)\cdot \log(1/\delta))\cdot (\log^2(\dim(\cF)))}{\Delta^2}
		$
	    \FOR{$k=1,2,...,N$}
		\STATE $\beta\leftarrow C_2\cdot \sqrt{\log(1/\delta)+\log N\cdot\dim(\cF)}$
		\STATE Set the bonus function: 
		$$b_k(\cdot)\leftarrow \sup_{f,f'\in\cF,\|f-f'\|_{\cZ_{k-1}}\leq \beta}|f(\cdot)-f'(\cdot)|
		$$
		
		\STATE $z_k\leftarrow \arg\max_{z\in\cZ}b_k(z)$
		\STATE  $\cZ_{k}\leftarrow\cZ_{k-1}\cup\{z_k\}$
		\ENDFOR
		\FOR{$z\in\cZ_N$}
		\STATE Ask the human expert for a label $l(z)\in\{0,1\}$
		\ENDFOR
		\STATE Estimate the human model as
		$$	\tilde{f}=\arg\min_{f\in\cF}\sum_{z\in\cZ_N}(f(z)-l(z))^2
		$$
		\STATE Let $\hat{f}\in\cC(\cF,\Delta/2)$ such that $\|\hat{f}-\tilde{f}\|_{\infty}\leq \Delta/2$
		\STATE Estimate the underlying true reward:
		$
		\hat{r}(\cdot)=\left\{\begin{aligned}
    1,\ \ \hat{f}(\cdot)>1/2\\
    0,\ \ \hat{f}(\cdot)\leq 1/2
\end{aligned}
\right.
		$
		\STATE \textbf{return:} The estimated reward function $\hat{r}$.
	\end{algorithmic}
\end{algorithm}

\subsection{Theoretical Guarantee}
\begin{thm}
\label{thm:arl}
With probability at least $1-\delta$, for all $z\in\cZ$,
 we have, $
\hat{r}(z)=r(z).
$
The total number of queries is bounded by
$$
O\left(\frac{(\dim^2(\cF)+\dim(\cF)\cdot \log(1/\delta))\cdot (\log^2(\dim(\cF)))}{\Delta^2}\right).
$$
\end{thm}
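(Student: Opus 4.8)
The plan is to separate the claim into two essentially independent parts: the query-complexity bound, which is immediate, and the correctness statement $\hat r\equiv r$ on $\cZ$, which is the substance. The query bound needs no work, since Algorithm~\ref{alg:active_reward_learning} issues exactly $N$ queries and $N$ is set to the claimed value. For correctness my target is to show that, with probability $1-\delta$, $|\hat f(z)-f^*(z)|\le\Delta$ simultaneously for every $z\in\cZ$. Once this pointwise accuracy is in hand, Assumption~\ref{assum:margin} finishes the job: since $|f^*(z)-1/2|>\Delta$, the estimate $\hat f(z)$ lies strictly on the same side of $1/2$ as $f^*(z)$, so the thresholds defining $\hat r(z)$ and $r(z)$ agree. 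I would split the bound $|\hat f-f^*|\le\Delta$ into the rounding error $\|\hat f-\tilde f\|_\infty\le\Delta/2$ (which holds by construction of the cover $\cC(\cF,\Delta/2)$) and a regression error $|\tilde f(z)-f^*(z)|\le\Delta/2$ to be established uniformly over $z$.

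The first pillar is a least-squares confidence bound. Because the query set $\cZ_N$ is chosen \emph{before} any label is requested (the bonus $b_k$ depends only on $\cF$ and the geometry of $\cZ$, not on the responses), the labels $l(z)$ are independent $\mathrm{Bern}(f^*(z))$ draws at fixed points, letting me avoid adaptive martingale machinery. Expanding the optimality inequality $\sum_{z\in\cZ_N}(\tilde f(z)-l(z))^2\le\sum_{z\in\cZ_N}(f^*(z)-l(z))^2$ (valid since $f^*\in\cF$ by Assumption~\ref{assum:real}) gives $\|\tilde f-f^*\|_{\cZ_N}^2\le 2\sum_{z\in\cZ_N}(l(z)-f^*(z))(\tilde f(z)-f^*(z))$; the right-hand side is controlled for a fixed $f$ by a Hoeffding/Bernstein estimate and made uniform over $\cF$ via a union bound over an $\varepsilon$-cover. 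This produces $\|\tilde f-f^*\|_{\cZ_N}\le\beta$ with $\beta^2\approx\log(1/\delta)+\dim_K(\cF)\log N$, exactly the value used in the algorithm, and the rounding to $\cC(\cF,\Delta/2)$ is what keeps this covering/union-bound argument finite.

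The second pillar is an eluder-dimension width argument that transfers accuracy from the queried points to all of $\cZ$. Two structural facts drive it: $b_k(z)$ is non-increasing in $k$ (as $\cZ_{k-1}$ grows, the constraint $\|f-f'\|_{\cZ_{k-1}}\le\beta$ only tightens), and the greedy rule makes $b_k(z_k)=\max_{z\in\cZ}b_k(z)$. Since $f^*$ and $\tilde f$ are both feasible for the supremum defining $b_{N+1}$, we have $|\tilde f(z)-f^*(z)|\le b_{N+1}(z)$ for every $z$, so it suffices to force $\max_z b_{N+1}(z)\le\Delta/2$. Arguing by contradiction, if some $z^\star$ had $b_{N+1}(z^\star)>\Delta/2$, then monotonicity and greediness give $b_k(z_k)\ge b_{N+1}(z^\star)>\Delta/2$ for all $k\in[N]$; but the standard eluder-dimension width inequality~\citep{russo2014learning} bounds the number of queries with width exceeding $\Delta/2$ by $O\!\big((\beta/\Delta)^2\,\dim_E(\cF,\Delta/2)\big)$, a contradiction once $N$ exceeds this.

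To close the loop I would substitute $\beta^2\approx\log(1/\delta)+\dim_K(\cF)\log N$ into the width bound and solve the resulting self-referential inequality $N\gtrsim\Delta^{-2}\dim_E(\cF,\Delta/2)\big(\log(1/\delta)+\dim_K(\cF)\log N\big)$; writing $\dim(\cF)=\max\{\dim_E,\dim_K\}$ and absorbing the $\log N$ term (which contributes the extra $\log\dim(\cF)$ factor, hence $\log^2(\dim(\cF))$) recovers the stated choice of $N$. I expect the genuine difficulty to lie in this second pillar --- calibrating $\beta$ against $\Delta$ through the eluder inequality and then untangling the implicit bound on $N$ --- whereas the regression bound, though technical, is a routine covering argument made considerably easier by the non-adaptive, batch query design.
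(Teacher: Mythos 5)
Your proposal is correct and follows essentially the same route as the paper's proof: a Hoeffding-plus-covering argument for the least-squares bound $\|\tilde f-f^*\|_{\cZ_N}\lesssim\sqrt{\log(1/\delta)+\dim_K(\cF)\log N}$, the Russo--Van Roy eluder-width bound combined with monotonicity and greediness of $b_k$ to force the maximal width below $\Delta/2$, and the bounded-noise margin to convert $|\hat f(z)-f^*(z)|\le\Delta$ into $\hat r(z)=r(z)$. The only (cosmetic) difference is that you invoke the counting form of the eluder inequality directly via contradiction, whereas the paper bounds $\sum_k b_k(z_k)$ and divides by $N$ --- but the paper's Lemma 2 proves exactly your counting bound internally, so the two are interchangeable.
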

\vspace{-2mm}
\paragraph{Proof Sketch} The first step is to show that the sum of bonus functions $\sum_{k=1}^K b_k(z_k)$ is bounded by $O(d\sqrt{K})$ using ideas in \citet{russo2014learning}. Note that the bonus function $b_k(\cdot)$ is non-increasing. Thus we can show that after selecting $N=\widetilde{O}(\frac{d^2}{\Delta^2})$ points, for all $z\in\cZ$, the bonus function of $z$ does not exceed $\Delta$. By the {bounded}-noise assumption, we know that the reward label for $z$ is correct for all $z$.

\section{Online RL with Active Reward Learning}
In this section we consider how to apply active reward learning method in the online RL setting. In this setting the agent is allowed to actively explore the environment without reward signal in the exploration phase. We consider both tabular MDP and linear MDP cases. 
\subsection{Linear MDP with Positive Features}
The linear MDP assumption was first studied in \citet{yang2019sample} and then applied in the online setting \cite{jin2020provably}. It is assumed that the agent is given a feature extractor $\phi:\cS\times\cA\rightarrow\RR^d$ and the transition model can be predicted by linear functions of the give feature extractor. In our work, we additionally assume that the coordinates of $\phi$ are all \emph{positive}. This assumption essentially reduce the linear MDP model to the soft state aggregation model~\citep{singh1994reinforcement,duan2019state}. As will be seen later, the latent state structure helps the learned reward function to generalize.
\begin{assum}[Linear MDP with Non-Negative Features]
\label{assum:linear}
For all $h\in[H]$, we assume that there exists a function $\mu_h:\cS\rightarrow \RR^d$ such that $P_{h}(s'|s,a)=\left<\mu_{h}(s'),\phi(s,a)\right>$. Moreover, for all $(s,a)\in\cS\times\cA$, the coordinates of $\phi(s,a)$ and $\mu_h(s,a)$ are all non-negative.
\end{assum}

\subsection{Exploration Phase}
In the exploration phase, inspired by former works on reward-free RL, we use optimistic least-squares value iteration (LSVI) based algorithm with zero reward. In the linear case, for any $V:\cS\rightarrow \RR$, we estimate $P_h V$ in the following manner
\begin{align}
\label{eqn:lsvi}
\widehat{P}_h^k {V}  (\cdot,\cdot)\leftarrow {w}^T \phi(\cdot,\cdot), \text{ where }{w}\leftarrow \argmin_{w\in\RR^d} \sum_{\tau=1}^{k-1}(w^T \phi(s_h^\tau,a_h^\tau)-V(s_{h+1}^\tau))^2+\|w\|_2^2.
\end{align}
For the tabular case, we simply use the empirical estimation of $P_h$:
\begin{align}
\label{eqn:lsvitbl}
\widehat{{P}}^k_h(s'|s,a)=\left\{\begin{aligned}
&\textstyle{\frac{N_h^k(s,a,s')}{N_h^k(s,a)}},\quad &N_h^k(s,a)>0\\
&\textstyle{\frac{1}{S}},\quad &N_h^k(s,a)=0
\end{aligned}
\right.
\end{align}
and define $\widehat{P}_h^k V$ in the conventional manner. Here $$N_h^k(s,a,s')=\sum_{\tau=1}^{k-1}\mathbbm{1}\{(s_h^{\tau},a_h^{\tau},s_{h+1}^{\tau})=(s,a,s')\}$$ and  $$N_h^k(s,a)=\sum_{\tau=1}^{k-1}\mathbbm{1}\{(s_h^{\tau},a_h^{\tau})=(s,a)\}$$ are the numbers of visit time.

The following \emph{optimism bonus} $\Gamma_h^k(\cdot,\cdot)$ is sufficient to guarantee optimism in standard regret minimization RL algorithms. (The choices of $\beta_{\text{tbl}}$ and $\beta_{\text{lin}}$ is specified in the appendix)
\begin{align}
\label{eqn:opt_bonus}
\Gamma_h^k(\cdot,\cdot)\leftarrow
\left\{
\begin{aligned}
&\min\{\beta_{\text{lin}}\cdot(\phi(\cdot,\cdot)^T(\Lambda^k_h)^{-1}\phi(\cdot,\cdot))^{1/2},H\}, \ \ &\text{(Linear Case)}\\
&\min\{\beta_{\text{tbl}}\cdot N_h^k(\cdot,\cdot)^{-1/2},H\}, \ \ &\text{(Tabular Case)}.
\end{aligned}
\right.
\end{align}
In our setting we enlarge the optimism bonus to the following \emph{exploration bonus}. 
\begin{equation}
\label{eqn:explore_bonus}
b_h^k(\cdot,\cdot)\leftarrow
\left\{
\begin{aligned}
&3\Gamma(\cdot,\cdot), \ \ &\text{(Linear Case)}\\
&C\cdot\textstyle{\frac{H^2 S}{N_h^k(\cdot,\cdot)}}+2\Gamma_h^k(\cdot,\cdot), \ \ &\text{(Tabular Case)}.
\end{aligned}
\right.
\end{equation}
We then set the optimistic Q-function as \[\overline{Q}^k_h(\cdot,\cdot)\leftarrow\Pi_{[0,H-h+1]}[ \widehat{P}^k_h\overline{V}_{h+1}^k (\cdot,\cdot)+b_h^k(\cdot,\cdot)]\]
and define the exploration policy as the greedy policy with respect to $\overline{Q}^k_h$.
{\small
\begin{algorithm}[h]
	\caption{UCBVI-Exploration \label{alg:explore}}
	\begin{algorithmic}
		\FOR{$k=1,2,...,K$}
		\STATE $\overline{V}^k_{H+1}\leftarrow 0$, $\overline{Q}^k_{H+1}\leftarrow 0$
		\FOR{$h=H,H-1,...,1$}
		\STATE Estimate $\widehat{P}_h^k\overline{V}_{h+1}^k (\cdot,\cdot)$ using \eqref{eqn:lsvi} or \eqref{eqn:lsvitbl}
		\STATE Set the optimism bonus $\Gamma_h^k(\cdot,\cdot)$ using \eqref{eqn:opt_bonus} 
		\STATE Set the exploration bonus $b_h^k(\cdot,\cdot)$ using \eqref{eqn:explore_bonus}. 
		\STATE Set the optimistic Q-function $\overline{Q}^k_h(\cdot,\cdot)\leftarrow\Pi_{[0,H-h+1]}[ \widehat{P}^k_h\overline{V}_{h+1}^k (\cdot,\cdot)+b_h^k(\cdot,\cdot)]$
		\STATE ${\pi}^k_h(\cdot)\leftarrow \argmax_{a\in\cA}\overline{Q}^k_h(\cdot,a)$
		\STATE $\overline{V}^k_h(\cdot)\leftarrow \max_{a\in\cA}\overline{Q}^k_h(\cdot,a)$
		\ENDFOR
		\STATE Execute policy $\pi^{k}=\{\pi_h^k\}_{h\in[H]}$ to induce a trajectory $s_1^k,a_1^k,...,s^k_H,a^k_H,s^k_{H+1}$.
		\ENDFOR
        \STATE \textbf{return:} Dataset $\cD=\{(s_h^k,a_h^k)\}_{(h,k)\in[H]\times[K]}$
	\end{algorithmic}
\end{algorithm}
}
\subsection{Reward Learning \& Planning Phase}
After the exploration phase, we run the active reward learning algorithm introduced before on the collected dataset. In the linear setting, we replace the original action with uniform random action. We then use the learned reward function to plan for a near-optimal policy. We still add optimism bonus to guarantee optimism. The whole algorithm is presented in Algorithm~\ref{alg:plan}
{\small
\begin{algorithm}[h]
	\caption{UCBVI-Planning \label{alg:plan}}
	\begin{algorithmic}
		\STATE \textbf{Input:} Dataset $\cD=\{(s_h^k,a_h^k)\}_{(h,k)\in[H]\times[K]}$
		\FOR{$h=1,2,...,H$}
		\IF{Linear Case}
        \STATE $\widetilde{\cZ}_h\leftarrow\{(s_h^k,\tilde{a}_h^k)\}_{k\in[K]}$, where $\{\tilde{a}^{k}_{h}\}_{k\in[K]}$ are sampled i.i.d. from $\unif(\cA)$
        \STATE  $\hat{r}_h\leftarrow \text{Active Reward Learning}(\widetilde{\cZ}_h,\Delta,\delta/(2H))$.
        \ELSIF{Tabular Case}
        \STATE ${\cZ}_h\leftarrow\{(s_h^k,{a}_h^k)\}_{k\in[K]}$
        \STATE  $\hat{r}_h\leftarrow \text{Active Reward Learning}({\cZ}_h,\Delta,\delta/(2H))$.
        \ENDIF
		\ENDFOR
		\FOR{$k=1,2,...,K$}
		\STATE ${V}^k_{H+1}\leftarrow 0$, ${Q}^k_{H+1}\leftarrow 0$
		\FOR{$h=H,H-1,...,1$}
		\STATE Estimate $\widehat{P}_h^k {V}_{h+1}^k (\cdot,\cdot)$ using \eqref{eqn:lsvi} or \eqref{eqn:lsvitbl}
		\STATE Set the optimism bonus $\Gamma_h^k(\cdot,\cdot)$ using \eqref{eqn:opt_bonus} 
		\STATE Set the optimistic Q-function ${Q}^k_h(\cdot,\cdot)\leftarrow\Pi_{[0,H-h+1]}[\hat{r}_h(\cdot,\cdot) + \widehat{P}_h^k {V}_{h+1}^k+\Gamma_h^k(\cdot,\cdot)]$
		\STATE $\hat{\pi}_h^k(\cdot)\leftarrow \argmax_{a\in\cA}{Q}^k_h(\cdot,a)$
		\STATE ${V}_h^k(\cdot)\leftarrow \max_{a\in\cA}{Q}_h^k(\cdot,a)$
		\ENDFOR
		\ENDFOR
		\STATE \textbf{return:} $\hat{\pi} $ drawn uniformly from $\{\hat{\pi}^k\}_{k=1}^{K}$ where $\hat{\pi}^k=\{\hat{\pi}^k_h\}_{h\in[H]}$
	\end{algorithmic}
\end{algorithm}
}
\subsection{Theoretical Guarantee}
\begin{thm}
\label{thm:main}
In the linear case, our algorithm can find an $\epsilon$-optimal policy with probability at least $1-\delta$, with at most
\[
{O}\left(\frac{|\cA|^2d^5\dim^3(\cF)H^4\iota^3}{\epsilon^2}\right),\ \ \iota=\log(\frac{HSA}{\epsilon\delta\Delta})
\]
episodes. In the tabular case, our algorithm can find an $\epsilon$-optimal policy with probability at least $1-\delta$, with at most
\[
O\left(\frac{H^4SA\iota}{\epsilon^2}+\frac{H^3S^2A\iota^2}{\epsilon}\right), \ \ \iota=\log(\frac{HSA}{\epsilon\delta})
\]
episodes. In both cases, the total number of queries to the reward is bounded by $\tilde{O}(H\cdot\dim^2(\cF)/\Delta^2)$.
\end{thm}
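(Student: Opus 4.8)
The query bound is immediate and separate from the statistical analysis: Algorithm~\ref{alg:plan} calls Active Reward Learning once per stage $h\in[H]$, each with failure probability $\delta/(2H)$. By Theorem~\ref{thm:arl} each call spends $\tilde{O}(\dim^2(\cF)/\Delta^2)$ queries and, with probability $1-\delta/(2H)$, returns $\hat r_h$ with $\hat r_h(z)=r_h(z)$ for every $z$ in the stage-$h$ pool ($\cZ_h$ in the tabular case, $\widetilde{\cZ}_h$ in the linear case); a union bound gives the total $\tilde{O}(H\dim^2(\cF)/\Delta^2)$ and makes all $H$ reward estimates simultaneously correct on their pools with probability $1-\delta/2$. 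For the value guarantee the plan is to write $\pi^*=\pi^*(r)$, use $V_1^*(s_1,\hat r)\ge V_1^{\pi^*}(s_1,\hat r)$, and bound
\begin{align*}
V_1^*(s_1,r)-V_1^{\hat\pi}(s_1,r)
&\le \big[V_1^{\pi^*}(s_1,r)-V_1^{\pi^*}(s_1,\hat r)\big]
+\big[V_1^*(s_1,\hat r)-V_1^{\hat\pi}(s_1,\hat r)\big]\\
&\quad+\big[V_1^{\hat\pi}(s_1,\hat r)-V_1^{\hat\pi}(s_1,r)\big],
\end{align*}
and then to drive each of the three brackets below $\epsilon/3$.

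\textbf{Planning error (middle bracket).} This is the error of \emph{planning with the known, bounded reward} $\hat r$. Because the value iteration in Algorithm~\ref{alg:plan} estimates only the linear quantity $\widehat P_h^k V_{h+1}^k$ and adds $\hat r_h$ exactly, the reward's possible non-linearity is irrelevant here and this term reduces to the transition-estimation error of reward-free exploration~\citep{jin2020reward,wang2020reward}. I would invoke optimism through $\Gamma_h^k$ to get $V_1^*(s_1,\hat r)-V_1^{\hat\pi^k}(s_1,\hat r)\le\sum_h\EE_{\pi^k}[\Gamma_h^k(s_h,a_h)]$, then average over the $K$ planning rounds (recall $\hat\pi$ is drawn uniformly from $\{\hat\pi^k\}$) and bound $\sum_k\sum_h\EE_{\pi^k}[\Gamma_h^k]$ by the elliptical-potential lemma (linear) or a visitation-count pigeonhole (tabular); choosing $K$ at the stated value makes this bracket at most $\epsilon/3$.

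\textbf{Reward-estimation error (outer brackets).} By the value-difference lemma each outer bracket equals $\EE_\pi[\sum_h (r_h-\hat r_h)(s_h,a_h)]$ for $\pi\in\{\pi^*,\hat\pi\}$, so it suffices to control $\sum_h\EE_{d_h^\pi}[|r_h-\hat r_h|]$, where $d_h^\pi$ denotes the stage-$h$ state-action occupancy. Since $|r_h-\hat r_h|\in\{0,1\}$ and, by Assumption~\ref{assum:margin}, $r_h\ne\hat r_h$ forces $|\hat f_h-f_h^*|>\Delta$, I would bound this by $\sum_h\Pr_{d_h^\pi}[\,|\hat f_h-f_h^*|>\Delta\,]$. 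In the tabular case the pool is exactly the set of visited pairs, so $\hat r_h=r_h$ off the never-visited set and the error is at most $\sum_h\Pr_{d_h^\pi}[N_h^K(s_h,a_h)=0]$; a counting argument over the $K$ exploration episodes caps the mass any policy can place on never-visited pairs, producing the lower-order $H^3S^2A\iota^2/\epsilon$ term and finishing the tabular case.

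\textbf{Main obstacle: reward generalization in the linear case.} The hard part is that against a \emph{continuous} occupancy $d_h^\pi$ the ``correct on the finite pool'' guarantee of Theorem~\ref{thm:arl} is vacuous, so I must upgrade it to the population bound $\Pr_{d_h^\pi}[\,|\hat f_h-f_h^*|>\Delta\,]\le\epsilon/(3H)$. The plan is to chain three ingredients: (i) reward-free exploration, which makes the exploration covariance $\Lambda_h^K$ dominate every feature direction reachable by any policy; (ii) the uniform-action sampling defining $\widetilde{\cZ}_h$, which via $\pi_h(a\mid s)\le|\cA|\cdot\unif(a)$ lets me pass from the deterministic occupancy of $\pi^*,\hat\pi$ to the sampling distribution at the cost of an $|\cA|$ factor; and (iii) the Eluder-dimension argument behind $b_N(\cdot)\le\Delta$, which propagates smallness of $\|\hat f_h-f_h^*\|_{\widetilde{\cZ}_h}$ to smallness of $|\hat f_h-f_h^*|$ on all well-covered features. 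The non-negativity in Assumption~\ref{assum:linear} is what makes this work: it turns the linear MDP into soft state aggregation, so $d_h^\pi$ is a mixture over the $d$ latent states represented in $\widetilde{\cZ}_h$ and accuracy at the sampled representatives transfers to the whole occupancy. Carefully executing this change of measure — rather than wrestling with the thresholded, non-linear nature of $r_h-\hat r_h$ directly — is the most delicate step and is what yields the $|\cA|^2d^5\dim^3(\cF)$ dependence. Combining the three $\le\epsilon/3$ bounds then closes the proof.
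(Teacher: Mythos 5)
Your high-level decomposition (three brackets, planning error vs.\ reward-estimation error, query bound via Theorem~\ref{thm:arl} plus a union bound over $h$) matches the paper's proof, and your tabular sketch and the latent-state/$|\cA|$ change-of-measure skeleton for the linear case are also the right shape. However, two of your key steps do not go through as stated.

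First, your claim that in the middle bracket ``the reward's possible non-linearity is irrelevant here'' is wrong. The regression targets in \eqref{eqn:lsvi} are $V_{h+1}^k(s_{h+1}^\tau)$, and by the backward induction $V_{h+1}^k$ is built from $\hat r_{h+1},\dots,\hat r_H$; the self-normalized concentration bound (Lemma~\ref{lem:confidence}) must therefore hold uniformly over a value-function class that is parameterized by the learned reward, which is data-dependent and non-linear. The paper handles this by rounding $\tilde f$ to the cover $\cC(\cF,\Delta/2)$ so that $\hat r_h$ ranges over a finite class $\cR$ with $\log|\cR|\lesssim\dim(\cF)\log(1/\Delta)$, and then union-bounding over $\cR$; this is exactly where the extra $\sqrt{\dim(\cF)}$ in $\beta_{\text{lin}}$ (and one of the three $\dim(\cF)$ factors in the episode bound) comes from. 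Your planning-error argument, as written, silently assumes a concentration bound that you have not justified.

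Second, your ingredient (iii) for linear-case generalization --- using the Eluder-dimension argument behind $b_N(\cdot)\le\Delta$ to propagate accuracy from $\widetilde{\cZ}_h$ to the occupancy $d_h^\pi$ --- does not work. The guarantee $\max_{z}b_N(z)\le\Delta$ is a maximum over the \emph{pool} only; for a state-action pair outside the pool there is no reason it should be $\Delta$-dependent on the queried set, so the bonus argument says nothing about $|\hat f_h-f_h^*|$ there. The paper's actual engine is different: since $\hat r_{h+1}\in\cR$ is \emph{consistent} with all $c_j$ i.i.d.\ samples drawn from the $j$-th latent state's emission distribution crossed with $\unif(\cA)$ (consistency being exactly what Theorem~\ref{thm:arl} supplies on the pool), a standard finite-class PAC bound gives $\err_{\lambda_j}(\hat r)\lesssim\log|\cR|/c_j$, i.e.\ $c_jw_j\lesssim\dim(\cF)\log(K/\delta\Delta)$; this is then combined with the bound $\|w'\|_{\Lambda_h}\lesssim\sqrt{d\sum_j(c_jw_j)^2+d^2\log^2(K/\delta)}$ and the reward-free guarantee $\sum_h\|\phi_\pi\|_{(\Lambda_h^K)^{-1}}\lesssim\sqrt{\dim(\cF)d^3H^4\iota/K}$ (Lemmas~\ref{lem:bound_error}--\ref{14}). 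You need to replace your Eluder-based transfer with this consistency-plus-finite-class argument (or something equivalent) for the linear case to close.
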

\begin{remark}
Theorem~\ref{thm:main} readily extends to multi-task RL setting by replacing $\delta$ with $\delta/N$ and applying a union bound over all tasks, where $N$ is the number of tasks. The corresponding sample complexity bound only increase by a factor of $\poly\log(N)$.
\end{remark}
\begin{remark}
Standard RL algorithms require to query the reward function for at least ${\Omega}(\frac{\max\{\dim_R, \dim_P\}^2}{\epsilon^2})$ times, where $\dim_R$ and $\dim_P$ stand for the complexity of the reward/transition function. See, e.g., \citet{jin2020provably,zanette2020learning, wang2020reinforcement} for the derivation of this bound. Compared to this bound, our feedback complexity bound has two merits: 1) In practice the transition function is generally more complex than the reward function, thus $\max\{\dim_R, \dim_P\}\gg\dim_R$; 2) Our bound is independent of $\epsilon$ - note that $\epsilon$ can be arbitrarily small, whereas $\Delta$ is a constant.
\end{remark}
\paragraph{Proof Sketch} The suboptimality of the policy $\hat{\pi}$ can be decomposed into two parts:
\[
V_1^{\pi^*}-V_1^{\hat{\pi}}\leq \underbrace{|V_1^{{\pi}^*(\hat{r})}(\hat{r})-V_1^{\hat{\pi}}(\hat{r})|}_{(i)}+\underbrace{|V_1^{\pi^*}-V_1^{\pi^*}(\hat{r})| + |V_1^{\hat{\pi}}-V_1^{\hat{\pi}}(\hat{r})|}_{(ii)}
\]
where (i) correspond to the planning error in the planning phase (Algorithm 3) and (ii) correspond to the estimation error of the reward $\hat{r}$. By standard techniques from the reward-free RL, (i) can be upper bounded by the expected summation of the exploration bonuses in the exploration phase. In order to bound (ii), we need the learned reward function to be \emph{universally} correct, not just on the explored region. We show that the dataset collected in the exploration phase essentially \emph{cover} the state space (tabular case) or the latent state space. Since the reward function class has bounded complexity ($\log|\cR|$ is bounded due to the bounded covering number of $\cF$), the reward function learned from the exploratory dataset can generalized to a distribution induced by any policy.
\section{Offline RL with Active Reward Learning}
In this section we consider the \emph{offline RL} setting, where the dataset $\cD$ is provided beforehand. We show that our active reward learning algorithm can still work well in this setting. In order to give meaningful result, we assume the following \emph{compliance} property of $\cD$ with respect to the underlying MDP. This assumption is firstly introduced in \citet{jin2021pessimism}. Unlike many literature for offline RL, we do not require strong coverage assumptions, e.g., concentratability~\citep{szepesvari2005finite,antos2008learning,chen2019information}. 
\begin{defn}[Compliance]
For a dataset $\cD=\{(s_h^k,a_h^k)\}_{(h,k)\in[H]\times[K]}$, let $\mathbb{P}_{\cD}$ be the joint distribution of the data collecting process. We say $\cD$ is compliant with the underlying MDP if
$$
\mathbb{P}_{\cD}(s_{h+1}^k=s|\{(s_h^j,a_h^j)\}_{j=1}^k,\{s_{h+1}^j\}_{j=1}^{k-1})=P_h(s|s_h^k,a_h^k)
$$
holds for all $h\in[H],k\in[K],s\in\cS$.
\end{defn}

\subsection{Algorithm and Theoretical Guarantee}
At the beginning of the algorithm we call the active reward learning algorithm to estimate the reward function. Inspired by \citet{jin2021pessimism}, we estimated the optimal Q-value $Q^k$ using \emph{pessimistic} value iteration with empirical transition and learned reward function. The policy is defined as the greedy policy with respect to $Q^k$. The full algorithm and theoretical guarantee is stated below.
\begin{algorithm}[htb]
	\caption{LCBVI-Tabular-Offline \label{alg:tabularpess}}
	\begin{algorithmic}
		\STATE \textbf{Input:} Dataset $\cD=\{(s_h^k,a_h^k)\}_{(h,k)\in[H]\times[K]}$
		\FOR{$h=1,2,...,H$}
        \STATE $\cZ_h\leftarrow\{(s_h^k,a_h^k)\}_{k\in[K]}$
        \STATE  $\hat{r}_h\leftarrow \text{Active Reward Learning}(\cZ_h,\Delta,\delta/(2H))$.
		\ENDFOR
		\STATE $\widehat{V}_{H+1}\leftarrow 0$.
		\FOR{$h=H,H-1,...,1$}
		\STATE $\Gamma_h(\cdot,\cdot)\leftarrow \beta'_{\text{tbl}}\cdot(N_h(\cdot,\cdot)+1)^{-1/2}$
		\STATE  ${Q}_h(\cdot,\cdot)\leftarrow\Pi_{[0,H-h+1]}[\hat{r}_h(\cdot,\cdot) + \widehat{\mathbb{P}}_h \widehat{V}_{h+1}(\cdot,\cdot)-2\Gamma_h(\cdot,\cdot)]$
		\STATE $\hat{\pi}_h(\cdot)\leftarrow \argmax_{a\in\cA}{Q}_h(\cdot,a)$
		\STATE ${V}_h(\cdot)\leftarrow \max_{a\in\cA}{Q}_h(\cdot,a)$
		\ENDFOR
		\STATE \textbf{return:} $\hat{\pi}=\{\hat{\pi}_h\}_{h\in[H]}$
	\end{algorithmic}
\end{algorithm}

\begin{thm}
\label{thm:pess}
With probability at least $1-\delta$, the sub-optimal gap of $\hat{\pi}$ is bounded by
\[
V_1^*(s_1)-V_1^{\hat{\pi}}(s_1)\leq  2\left( H\sqrt{S\log(SAHK/\delta)}\cdot\mathbb{E}_{\pi^*}\left[\sum^H_{h=1}(N_h(s_h,a_h)+1)^{-1/2}\right]\right).
\]
And the total number of queries is bounded by $\tilde{O}(H\cdot\dim^2(\cF)/\Delta^2)$.
\end{thm}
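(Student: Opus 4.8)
The plan is to prove the statement in two stages mirroring the structure of Algorithm~\ref{alg:tabularpess}: first control the error of the learned reward through Theorem~\ref{thm:arl}, then run a pessimistic value-iteration (PEVI) argument in which the pessimism bonus is shown to dominate the combined reward-plus-transition estimation error. Throughout I write the true and empirical Bellman operators as $(\mathbb{B}_h V)(s,a)=r_h(s,a)+(P_h V)(s,a)$ and $(\widehat{\mathbb{B}}_h V)(s,a)=\hat{r}_h(s,a)+(\widehat{\mathbb{P}}_h V)(s,a)$, and define the model-prediction error $\zeta_h(s,a)=(\mathbb{B}_h\widehat{V}_{h+1})(s,a)-\widehat{Q}_h(s,a)$, where $\widehat{Q}_h,\widehat{V}_h$ are the quantities produced by the backward pass. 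The heart of the argument is to show that on a high-probability event $0\le \zeta_h(s,a)\le 2\Gamma_h(s,a)$ for all $(s,a,h)$ (the pessimism / not-too-pessimistic pair of inequalities); given this, a value-difference decomposition finishes the proof.

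\textbf{Reward step.} I would first invoke Theorem~\ref{thm:arl} at each stage $h$ with failure probability $\delta/(2H)$ on the pool $\cZ_h=\{(s_h^k,a_h^k)\}_{k\in[K]}$. A union bound over $h\in[H]$ gives, with probability at least $1-\delta/2$, that $\hat{r}_h(s_h^k,a_h^k)=r_h(s_h^k,a_h^k)$ for all $k\in[K]$ and all $h$; in particular the reward error vanishes at every $(s,a)$ with $N_h(s,a)>0$. Summing the per-stage query bound of Theorem~\ref{thm:arl} yields the total $\tilde{O}(H\dim^2(\cF)/\Delta^2)$ queries, which already proves the second half of the claim.

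\textbf{Uncertainty quantifier.} Next I would bound $|(\widehat{\mathbb{B}}_h-\mathbb{B}_h)\widehat{V}_{h+1}(s,a)|\le |\hat{r}_h-r_h|(s,a)+|(\widehat{\mathbb{P}}_h-P_h)\widehat{V}_{h+1}(s,a)|$. On visited pairs the first term is zero by the reward step. The obstacle for the second term is that $\widehat{V}_{h+1}$ is itself data-dependent, so a Hoeffding bound cannot be applied to a fixed target. I would resolve this by a covering argument: place an $\varepsilon$-net over value functions bounded in $[0,H]$ (log-covering number $O(S\log(HK/\varepsilon))$), use the \emph{compliance} property to express, for each fixed net point $V$, the quantity $\sum_k \one\{(s_h^k,a_h^k)=(s,a)\}\,(V(s_{h+1}^k)-(P_hV)(s,a))$ as a martingale-difference sum and apply Azuma--Hoeffding, then union bound over the net, over $(s,a)\in\cS\times\cA$, and over $h\in[H]$. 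This produces $|(\widehat{\mathbb{P}}_h-P_h)\widehat{V}_{h+1}(s,a)|\le \Gamma_h(s,a)$ with $\beta'_{\text{tbl}}\asymp H\sqrt{S\log(SAHK/\delta)}$; the $\sqrt{S}$ factor is precisely the price of the net. On unvisited pairs ($N_h(s,a)=0$) the bonus $\Gamma_h=\beta'_{\text{tbl}}\ge H$ dominates the trivial bounds $|\hat{r}_h-r_h|\le 1$ and $|(\widehat{\mathbb{P}}_h-P_h)\widehat{V}_{h+1}|\le H$, while the projection forces $\widehat{Q}_h=0$ and hence $\zeta_h=\mathbb{B}_h\widehat{V}_{h+1}\in[0,2\Gamma_h]$. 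Combining the cases, the fact that the update subtracts $2\Gamma_h$ secures $\zeta_h\ge 0$ (pessimism) with slack, and the deviation bound gives the matching upper estimate, so $0\le\zeta_h\le 2\Gamma_h$ on an event of probability at least $1-\delta/2$.

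\textbf{Conclusion.} On the intersection of the two events (probability $\ge 1-\delta$) I would apply the value-difference decomposition
\[
V_1^{\pi^*}(s_1)-V_1^{\hat{\pi}}(s_1)=\sum_{h=1}^H\mathbb{E}_{\pi^*}[\zeta_h(s_h,a_h)]-\sum_{h=1}^H\mathbb{E}_{\hat{\pi}}[\zeta_h(s_h,a_h)]+\sum_{h=1}^H\mathbb{E}_{\hat{\pi}}\big[\langle \widehat{Q}_h(s_h,\cdot),\pi^*_h(\cdot|s_h)-\hat{\pi}_h(\cdot|s_h)\rangle\big].
\]
The last term is nonpositive since $\hat{\pi}_h$ is greedy with respect to $\widehat{Q}_h$, and $-\sum_h\mathbb{E}_{\hat{\pi}}[\zeta_h]\le 0$ by $\zeta_h\ge 0$; hence $V_1^{\pi^*}-V_1^{\hat{\pi}}\le\sum_h\mathbb{E}_{\pi^*}[\zeta_h]\le 2\sum_h\mathbb{E}_{\pi^*}[\Gamma_h(s_h,a_h)]$, which is the stated bound after substituting $\Gamma_h=\beta'_{\text{tbl}}(N_h+1)^{-1/2}$ and absorbing the numeric concentration constants into $\beta'_{\text{tbl}}=H\sqrt{S\log(SAHK/\delta)}$. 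I expect the covering/compliance concentration of the \emph{uncertainty quantifier} to be the main obstacle, since it is where the data-dependence of $\widehat{V}_{h+1}$ must be tamed and where the $\sqrt{S}$ factor is generated; the reward-learning and decomposition steps are essentially plug-ins of Theorem~\ref{thm:arl} and the PEVI framework of \citet{jin2021pessimism}.
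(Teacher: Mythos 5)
Your proposal is correct and follows essentially the same route as the paper: learn the rewards exactly on visited pairs via Theorem~\ref{thm:arl} with a union bound over $h$, show the pessimism bonus $\Gamma_h$ dominates the combined reward-plus-transition error (the paper's Lemmas~\ref{23}--\ref{25}, which establish $0\le\iota_h\le 4\Gamma_h$), and then invoke the value-difference decomposition of Theorem~4.2 of \citet{jin2021pessimism}. The only difference is in one concentration step: where you tame the data-dependence of $\widehat{V}_{h+1}$ with an $\varepsilon$-net over $[0,H]$-bounded value functions, the paper uses the $\ell_1$-deviation bound $\|\widehat{P}_h^\tau(\cdot|s,a)-P_h(\cdot|s,a)\|_1\lesssim\sqrt{S\log(SAK/\delta)/\tau}$, which controls $|(\widehat{P}_h-P_h)V|$ uniformly over all bounded $V$ at once — both devices produce the same $\sqrt{S}$ factor, and your stated upper bound of $2\Gamma_h$ on the evaluation error should really be $3\Gamma_h$ (or the paper's $4\Gamma_h$), a constant that is absorbed into $\beta'_{\text{tbl}}$ either way.
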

The proof of Theorem~\ref{thm:pess} is deferred to the appendix.

\section{Numerical Simulations} We run a few experiments to test the efficacy of our algorithmic framework and verify our theory. We consider a tabular MDP with linear reward. The details of the experiments are deferred to Appendix~\ref{experiment}.
Here we highlight three main points derived from the experiment.
\begin{itemize}\item Active learning helps to reduce feedback complexity compared to passive learning. For instance, to learn a $0.02$-optimal policy, the active learning-based algorithm only needs $\sim 70$ queries to the human teacher, while the passive learning-based algorithm requires $\sim 200$ queries. (Figure~\ref{fig},  left panel)
\item The noise parameter $\Delta$ plays an essential role in the feedback complexity, which is consistent with our bound. For instance, with fixed number of queries, the average error of the learned policy is $0.05, 0.02, 0.005$ for $\Delta=0.02,0.05,0.1$. (Figure~\ref{fig}, right panel)
\item When $\Delta$ is relatively large (which indicates that the reward learning problem is not inherently difficult for the human teacher), we can learn an accurate policy with much fewer queries to the human teacher compared to the number of environmental steps. For instance, for $\Delta=0.05$, to learn a $0.01$-optimal policy, our algorithm requires $\sim 2000$ environmental steps but only requires $\sim 150$ queries. (Figure~\ref{fig}, left panel)\end{itemize}
\begin{figure}[hbt]
  \centering
  \includegraphics[scale=0.35]{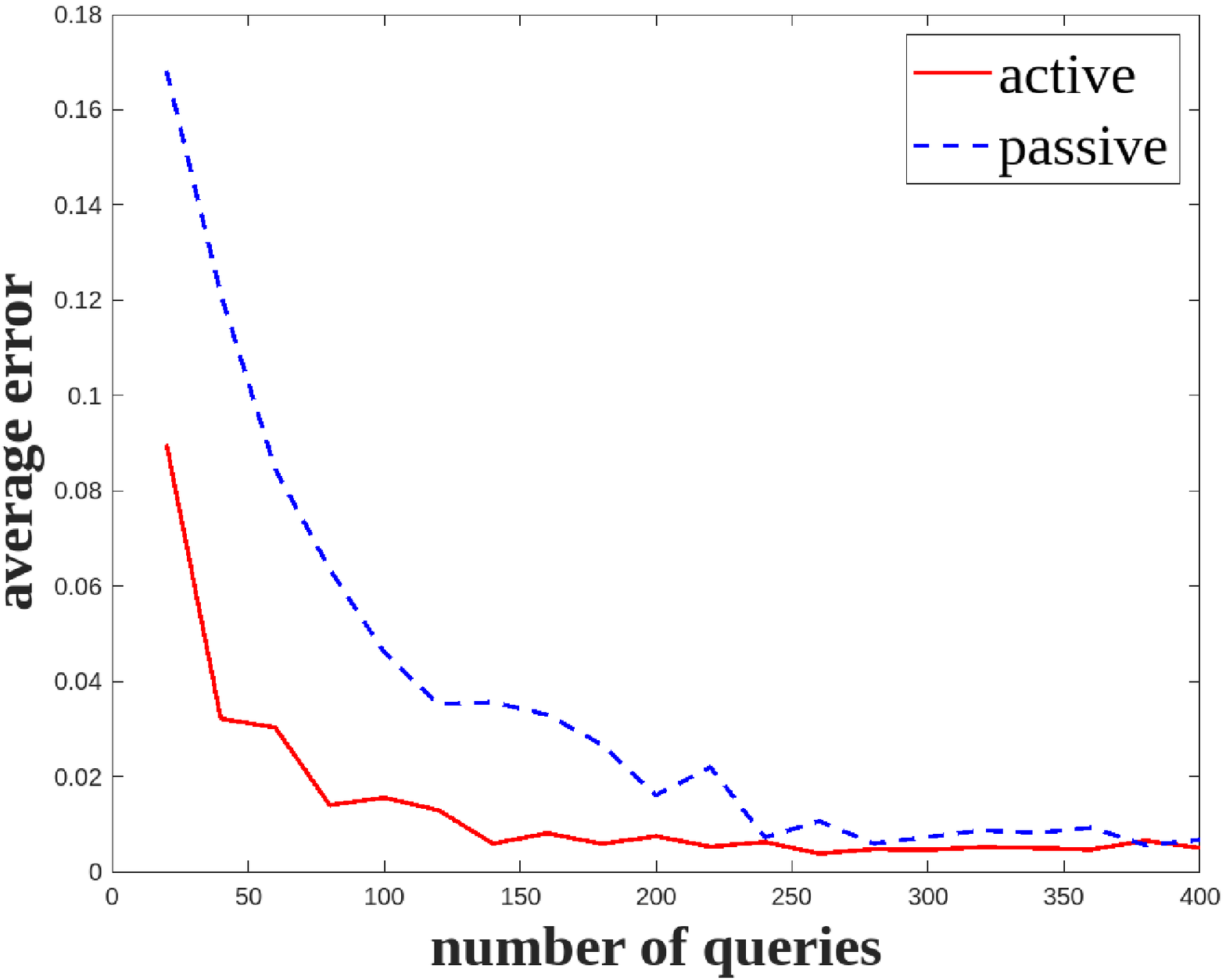}~~
    \includegraphics[scale=0.35]{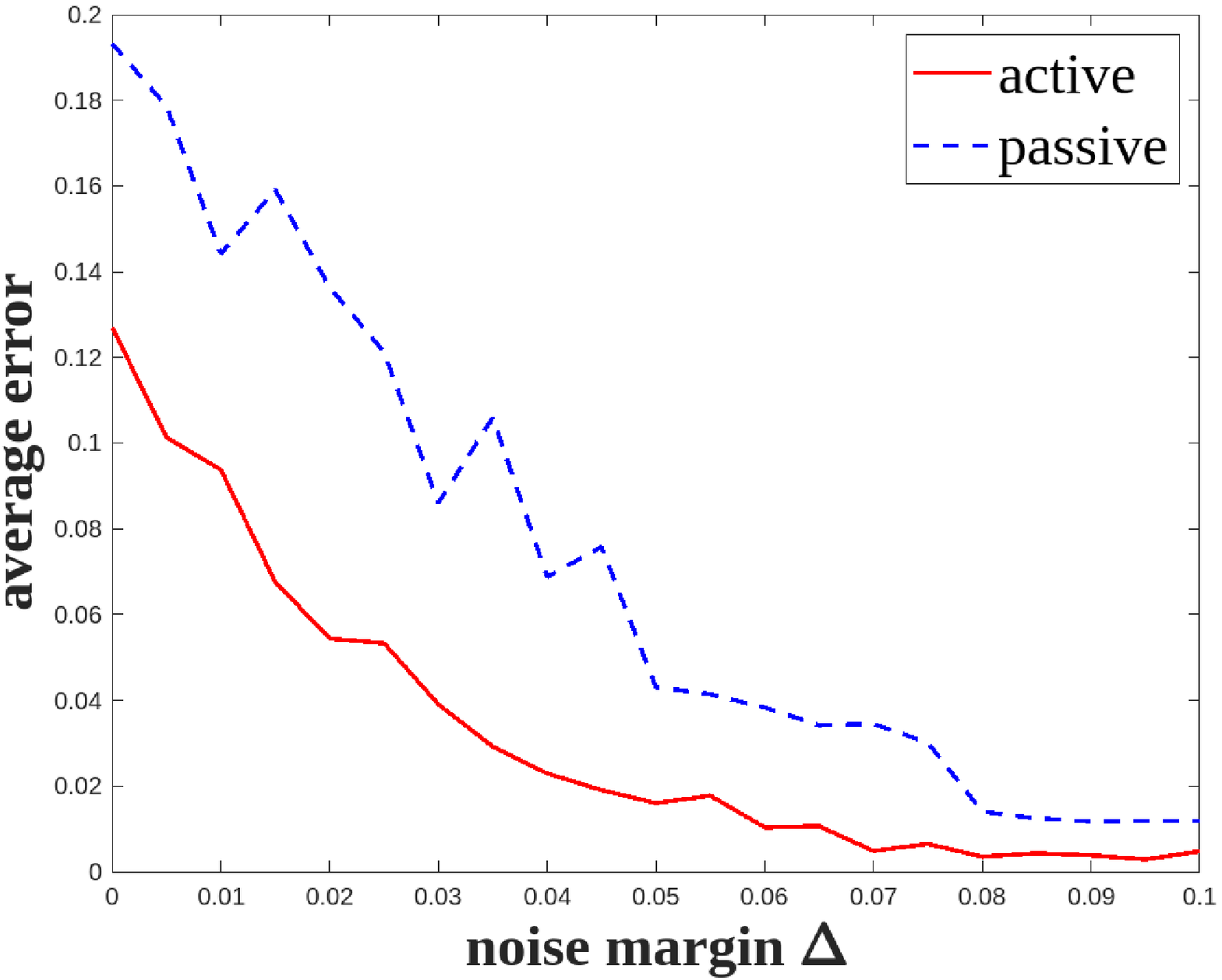}
  \caption{\emph{Left}: average error v.s. number of queries. \emph{Right}: the effect of the noise margin $\Delta$. \label{fig}}
\end{figure}
\section{Conclusions and Discussions}
\label{sec:con}
In this work, we provide a provably feedback-efficient algorithmic framework that takes human-in-the-loop to specify rewards of given tasks. Our proposed framework theoretically addresses several issues of incorporating humans' feedback in RL, such as noisy, non-numerical feedback and high feedback complexity. Technically, our work integrates reward-free RL and active learning in a non-trivial way. {The current framework is limited to information gain-based active learning}, and an interesting future direction is incorporating different active learning methods, such as disagreement-based active learning, into our framework.

{From a broad perspective, our work is a theoretical validation of recent empirical successes in HiL RL. Our results also brings new ideas to practice: it provides a new type of selection criterion that can be used in active queries; it suggests that one can use recently developed reward-free RL algorithms for unsupervised pre-training. These ideas can be combined with existing deep RL frameworks to be scalable. A limitation of the current work is that it mainly focus on theory, and we leave the empirical test of these ideas in real-world deep RL as future work. }
\section*{Acknowledgement}
DK is partially supported by the elite undergraduate training program of School of Mathematical Sciences in Peking University. LY is supported in part by DARPA grant HR00112190130, NSF Award 2221871.  

\bibliography{ref}
\bibliographystyle{plainnat}



\newpage
\appendix
\paragraph{Road map for the appendices } In Section~\ref{experiment} we provide the numerical simulation results. From Section~\ref{proof1} to Section~\ref{proof3}, we give proofs of Theorem~\ref{thm:arl} to Theorem~\ref{thm:pess}.
\section{Numerical Simulations}
\label{experiment}
We run a few experiments to test the efficacy of our algorithmic framework and verify our theory. We simulate a random two-stage ($H=2$) tabular MDP with $S=20$, $A=10$ (stage 1) or $3$ (stage 2). We consider the linear response model 
\[
f(z)=\frac{\left<\phi(z),w\right>+1}{2}
\]
where $\phi(z),w\in\RR^d$ and $d=5$. We fix the environmental steps in the exploration phase to be $K=2000$. In our environment, the noise margin parameter $\Delta$ (defined in Assumption~\ref{assum:margin}) can be adjusted by removing states that do not satisfy the assumption. The error is defined as \[V_1^*(s_1)-V_1^{\hat{\pi}}(s_1)\] where $\hat{\pi}$ is the learned policy and $s_1$ is the fixed initial state.
\paragraph{Active Learning v.s. Passive Learning.}
The left panal of Figure~\ref{fig} shows a comparison between the algorithm with active reward learning (our Algorithm~\ref{alg:active_reward_learning}) and the algorithm with passive learning. The only difference is that instead of actively choosing queries, the passive learning algorithm uniformly samples queries from the dataset collected in the exploration phase. The left panel of Figure~\ref{fig} shows that the active reward learning method significantly reduces the number of queries needed to achieve a target policy accuracy. In this experiment, the noise margin parameter is $\Delta=0.05$.
\paragraph{The Effect of the Noise Margin $\Delta$.} Theorem~\ref{thm:arl} suggests that the noise margin $\Delta$ will significantly influence the difficulty of the reward learning problem. The right panel of Figure~\ref{fig} verifies this effect when the number of queries is fixed to $N=100$. Moreover, we also compare between active learning and passive learning in this setting.
\paragraph{Implementation Details.} The transition probabilities of the MDP are generated uniformly from the $S$-dimensional probability simplex. The features in the response model are generated from a uniform ball distribution with random scaling. Results in Figure~\ref{fig} are averaged over $100$ trials. We use a few MATLAB package that are listed below. The source code is given is the supplementary material. One may run Figure1.m and Figure2.m to reproduce the results in Figure~\ref{fig}

Dahua Lin (2022). Sampling from a discrete distribution (\href{https://www.mathworks.com/matlabcentral/fileexchange/21912-sampling-from-a-discrete-distribution}{link}), MATLAB Central File Exchange. Retrieved May 26, 2022.

David (2022). Uniform Spherical Distribution Generator (\href{https://www.mathworks.com/matlabcentral/fileexchange/67384-uniform-spherical-distribution-generator}{link}), MATLAB Central File Exchange. Retrieved May 26, 2022.

Roger Stafford (2022). Random Vectors with Fixed Sum (\href{https://www.mathworks.com/matlabcentral/fileexchange/9700-random-vectors-with-fixed-sum}{link}), MATLAB Central File Exchange. Retrieved May 26, 2022.
\section{Proof of Theorem 1}
\label{proof1}
The next lemma bound the error of the regression.
\begin{lem}
\label{lem:logistic}
With probability at least $1-\delta$, 
\[
\|f^*-\tilde{f}\|_{\cZ_N}\leq O(\sqrt{\log(1/\delta)+\log N\cdot\dim(\cF)})
\]
\end{lem}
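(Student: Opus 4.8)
The plan is to exploit the optimality of the least-squares solution together with realizability, and then control the resulting noise cross-term by a covering/union-bound argument. First I would write $l(z) = f^*(z) + \eta(z)$, where $\eta(z) := l(z) - f^*(z)$ satisfies $\EE[\eta(z)\mid z] = 0$ (since $\EE[l(z)\mid z] = f^*(z)$) and $|\eta(z)| \le 1$. Because the query set $\cZ_N$ is produced by the deterministic bonus-maximization rule \emph{before} any labels are revealed, the $\{\eta(z)\}_{z\in\cZ_N}$ are independent, bounded, zero-mean random variables. Since $\tilde f$ minimizes the empirical squared loss and $f^* \in \cF$ by Assumption~\ref{assum:real}, we have $\sum_{z\in\cZ_N}(\tilde f(z) - l(z))^2 \le \sum_{z\in\cZ_N}(f^*(z) - l(z))^2$; expanding both sides and cancelling $\sum_z \eta(z)^2$ yields the \emph{basic inequality}
\[
\|\tilde f - f^*\|_{\cZ_N}^2 \le 2\sum_{z\in\cZ_N}(\tilde f(z) - f^*(z))\,\eta(z).
\]
The task is thereby reduced to bounding the right-hand side, whose difficulty is that $\tilde f$ is data-dependent, so the summands are not independent of the noise $\eta$.

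For a single \emph{fixed} $f\in\cF$, the sum $\sum_{z\in\cZ_N}(f(z) - f^*(z))\eta(z)$ is a sum of independent zero-mean terms, each bounded by $|f(z)-f^*(z)|$, hence sub-Gaussian with variance proxy $\|f-f^*\|_{\cZ_N}^2$; a Hoeffding (or self-normalized) tail bound gives, with probability $1-\delta'$,
\[
\sum_{z\in\cZ_N}(f(z)-f^*(z))\eta(z) \le \|f-f^*\|_{\cZ_N}\sqrt{2\log(1/\delta')}.
\]
To upgrade this into a statement that holds at the data-dependent $f=\tilde f$, I would invoke the $\rho$-cover $\cC(\cF,\rho)$ of Definition~\ref{def:cover}, apply the display above with $\delta' = \delta/\cN(\cF,\rho)$, and union-bound over all $g\in\cC(\cF,\rho)$, so that the inequality holds simultaneously for every cover element with $\log(1/\delta')$ replaced by $\log(\cN(\cF,\rho)/\delta)$.

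Finally I would pick $g\in\cC(\cF,\rho)$ with $\|g-\tilde f\|_\infty \le \rho$ and transfer the bound from $g$ back to $\tilde f$: since $|\eta|\le 1$ and there are $N$ terms, replacing $\tilde f$ by $g$ changes the cross-term by at most $\rho N$ and the norm $\|\tilde f - f^*\|_{\cZ_N}$ by at most $\rho\sqrt N$, so choosing $\rho \asymp 1/N$ renders both contributions lower-order while keeping $\log \cN(\cF,\rho) = O(\dim_K(\cF)\log N)$ by the definition of the Kolmogorov dimension. Plugging back into the basic inequality gives, with $x := \|\tilde f - f^*\|_{\cZ_N}$, a quadratic inequality of the form $x^2 \le 2x\sqrt{\log(1/\delta) + \dim(\cF)\log N} + O(1)$, which rearranges to $x = O(\sqrt{\log(1/\delta) + \dim(\cF)\log N})$, exactly the claimed bound. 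The main obstacle is the uniform-convergence step: one must balance the cover resolution $\rho$ (smaller $\rho$ sharpens the approximation but inflates the covering entropy $\log\cN(\cF,\rho)$) so that it contributes precisely the $\dim(\cF)\log N$ term, and one must verify carefully that the non-adaptive, label-independent query design really does make the noise variables $\{\eta(z)\}$ independent, so that the fixed-design concentration applies.
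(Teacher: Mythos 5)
Your proposal is correct and follows essentially the same route as the paper's proof: the paper's quantity $\xi(z,f)=2(f(z)-f^*(z))(f^*(z)-l(z))$ is exactly $-2(f(z)-f^*(z))\eta(z)$ in your notation, and both arguments combine the least-squares basic inequality with a fixed-$f$ Hoeffding bound, a union bound over a $1/N$-cover (costing $\log\cN(\cF,1/N)\lesssim \dim_K(\cF)\log N$), a transfer from the cover element back to $\tilde f$ at additive cost $O(1)$, and a final quadratic rearrangement. Your closing caveat about independence of the noise is handled exactly as you suggest: the query set $\cZ_N$ is fixed before any labels are requested, so the fixed-design concentration applies.
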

\begin{proof}
For any $f\in\cF$ and $z\in\cZ_N$, consider
\[
\xi(z,f)=2(f(z)-f^*(z))(f^*(z)-l(z))
\]
where $l(z)$ is the response from the human expert. Note that
\[
\mathbb{E}[\xi(z,f)]=0,\ \ |\xi(z,f)|\leq 2|f(z)-f^*(z)|.
\]
By Hoeffding's inequality, for a fixed $f\in\cF$, we have that
\[
\Pr\left[\left|\sum_{z\in\cZ_N}\xi(z,f)\right|\geq \epsilon\right]\leq 2\exp\left(-\frac{\epsilon^2}{8\|f-f^*\|_{\cZ_N}^2}\right)
\]
Let
\begin{align*}
    \epsilon&=\left(8\|f-f^*\|^2_{\cZ_N}\log\left(\frac{2\cN(\cF,1/N)}{\delta}\right)\right)^{\frac12}\\
    &\leq 4\|f-f^*\|_{\cZ_N}\cdot\sqrt{\log(2/\delta)+\log(\cN(\cF,1/N))}.
\end{align*}
We have that with probability at least $1-\delta$, for all $f\in\cN(\cF,1/N)$,
\[
\left|\sum_{z\in\cZ_N}\xi(z,f)\right|\leq 4\|f-f^*\|_{\cZ_N}\cdot\sqrt{\log(2/\delta)+\log(\cN(\cF,1/N))}
\]
Condition on the above event for the rest of the proof. Consider any $f\in \cF$, there exists $g\in\cN(\cF,1/N)$ such that
$\|g-f\|_{\infty}\leq 1/N$. Thus we have that
\begin{align*}
\left|\sum_{z\in\cZ_N}\xi(z,f)\right|&\leq\left|\sum_{z\in\cZ_N}\xi(z,g)\right|+2N\cdot\frac{1}{N}\\
&\leq4\|g-f^*\|_{\cZ_N}\cdot\sqrt{\log(2/\delta)+\log(\cN(\cF,1/N))}+2\\
&\leq4(\|f-f^*\|_{\cZ_N}+1)\cdot\sqrt{\log(2/\delta)+\log(\cN(\cF,1/N))}+2.
\end{align*}
In particular,
\[
\left|\sum_{z\in\cZ_N}\xi(z,\hat{f})\right|\leq 4(\|\hat{f}-f^*\|_{\cZ_N}+1)\cdot\sqrt{\log(2/\delta)+\log(\cN(\cF,1/N))}+2.
\]
On the other hand,
\begin{align*}
\sum_{z\in\cZ_N}\xi(z,\hat{f})&=\|\hat{f}-l\|^2_{\cZ_N}-\|\hat{f}-f^*\|^2_{\cZ_N}-\|f^*-l\|^2_{\cZ_N}\\
&\leq -\|\hat{f}-f^*\|^2_{\cZ_N}
\end{align*}
Thus we have
\[
\|\hat{f}-f^*\|^2_{\cZ_N} \leq 4(\|\hat{f}-f^*\|_{\cZ_N}+1)\cdot\sqrt{\log(2/\delta)+\log(\cN(\cF,1/N))}+1,
\]
which implies
\begin{align*}
\|\hat{f}-f^*\|_{\cZ_N}&\lesssim \sqrt{\log(2/\delta)+\log(\cN(\cF,1/N))}\\
&\lesssim \sqrt{\log(1/\delta)+\log N\cdot\dim_K(\cF)}\\
&\leq\sqrt{\log(1/\delta)+\log N\cdot\dim(\cF)}
\end{align*}
as desired.
\end{proof}

Following the analysis in \citet{russo2014learning}, we can bound the sum of bonuses in terms of the eluder dimension of $\cF$.
\begin{lem}
\[
\sum_{k=1}^Nb_k(z_k)\leq O(\dim_E(\cF)\log N+\sqrt{\dim_E(\cF)\cdot N\log N}\cdot \beta)
\]
\end{lem}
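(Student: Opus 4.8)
The plan is to adapt the eluder-dimension argument of \citet{russo2014learning}, which controls a sum of ``widths'' $b_k(z_k)$ by combining a per-scale counting bound with a sorting step. Throughout I write $w_k := b_k(z_k)$ and use that every $f\in\cF$ takes values in $[0,1]$, so $w_k\le 1$.

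First I would prove a single-point dependence-counting claim: if $w_k>\epsilon$, then $z_k$ is $\epsilon$-dependent on fewer than $\beta^2/\epsilon^2$ disjoint subsequences of $\cZ_{k-1}$. Indeed, $w_k>\epsilon$ exhibits $f,f'\in\cF$ with $\|f-f'\|_{\cZ_{k-1}}\le\beta$ and $|f(z_k)-f'(z_k)|>\epsilon$; for any subsequence $B$ on which $z_k$ is $\epsilon$-dependent, the definition of dependence forces $\|f-f'\|_B>\epsilon$. Since the subsequences are disjoint subsets of $\cZ_{k-1}$, summing $\|f-f'\|_B^2>\epsilon^2$ over them and comparing with $\|f-f'\|^2_{\cZ_{k-1}}\le\beta^2$ caps their number by $\beta^2/\epsilon^2$.

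Next I would turn this into the counting lemma: for any $\epsilon>0$, the number of indices $k$ with $w_k>\epsilon$ is at most $(\beta^2/\epsilon^2+1)\dim_E(\cF,\epsilon)$. I would restrict to the subsequence of such indices (noting that deleting predecessors only enlarges a width, so the surviving widths still exceed $\epsilon$) and greedily assign its elements to buckets $B_1,B_2,\dots$, always placing $z_k$ into a bucket on which it is $\epsilon$-independent. Each bucket is then an $\epsilon$-independence chain, hence has length at most $\dim_E(\cF,\epsilon)$; and a new bucket is opened only when $z_k$ is $\epsilon$-dependent on all existing buckets, which by the previous claim can happen only while the number of buckets stays below $\beta^2/\epsilon^2$. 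Multiplying the bucket count by the maximal bucket length gives the bound. Sorting the widths as $w_{(1)}\ge\cdots\ge w_{(N)}$, this inverts to $w_{(i)}\le\beta\sqrt{\dim_E(\cF,\epsilon_0)/(i-\dim_E(\cF,\epsilon_0))}$ for those indices with $w_{(i)}\ge\epsilon_0$, while the remaining widths contribute at most $\epsilon_0 N$. Taking $\epsilon_0=1/N$ makes the tail term $O(1)$, turns $\sum_i\beta\sqrt{d/(i-d)}$ into $O(\beta\sqrt{dN})$ via $\sum_j j^{-1/2}\le 2\sqrt{N}$, and replaces $d=\dim_E(\cF,1/N)$ by $O(\dim_E(\cF)\log N)$ through the $\limsup$ definition of $\dim_E(\cF)$; collecting terms yields exactly $O(\dim_E(\cF)\log N+\sqrt{\dim_E(\cF)\cdot N\log N}\cdot\beta)$.

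I expect the main obstacle to be keeping the two directions straight in the bucketing step — independence (to cap each bucket's length) versus dependence (to cap the number of buckets) — together with the clean passage from the scale-dependent $\dim_E(\cF,1/N)$ to the normalized $\dim_E(\cF)$, since it is precisely this normalization via the $\limsup$ that produces the $\log N$ factors distinguishing this statement from the log-free $O(d+\beta\sqrt{dN})$ form.
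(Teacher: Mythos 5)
Your proposal is correct and follows essentially the same route as the paper's proof: the per-scale counting bound $|\{k : b_k(z_k)>\epsilon\}|\le(\beta^2/\epsilon^2+1)\dim_E(\cF,\epsilon)$ obtained by the greedy bucketing/disjoint-subsequence argument from \citet{russo2014learning}, followed by sorting the widths, inverting the count, and summing with the cutoff $\epsilon_0=1/N$ and the $\limsup$ normalization $\dim_E(\cF,1/N)=O(\dim_E(\cF)\log N)$. The only cosmetic difference is that you state the disjoint-dependence cap as a per-point upper bound while the paper exhibits a single point dependent on many subsequences and derives the same inequality by contradiction; these are the same argument.
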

\begin{proof}
For $k\in[K]$, denote $\cZ_k=\{z_{\tau}\}_{\tau=1}^{k-1}$. For any given $\epsilon>0$ and $h\in[H]$, let $\mathcal{L}=\{z_k|k\in[N],b_k(z_k)>\epsilon\}$ with $|\mathcal{L}|=L$. We will show that there exists $z_k\in\mathcal{L}$ such that $z_k$ is $\epsilon$-dependent on at least $L/\dim_{E}(\mathcal{F},\epsilon)-1$ disjoint subsequences in $\cZ_k\cap \mathcal{L}$.
Denote $N=L/\dim_{E}(\mathcal{F},\epsilon)-1$. 

We decompose $\mathcal{L}$ into $N+1$ disjoint subsets, $\mathcal{L}=\cup_{j=1}^{N+1}\mathcal{L}_j$ by the following procedure. We initialize $\mathcal{L}_j=\{\}$ for all $j\in[N+1]$ and consider each $z_k\in \mathcal{L}$ sequentially. For each $z_k\in \mathcal{L}$, we find the smallest $1\leq j\leq N$ such that $z_k$ is $\epsilon$-independent on $\mathcal{L}_j$ with respect to $\mathcal{F}$. We set $j=N+1$ if such $j$ does not exist. We add $z_k$ into $\mathcal{L}_j$ afterwards. When the decomposition of $\mathcal{L}$ is finished, $\mathcal{L}_{N+1}$ must be nonempty since $\mathcal{L}_{j}$ contains at most $\dim_{E}(\mathcal{F},\epsilon)$ elements for $j\in[N]$. For any $z_k\in\mathcal{L}_{N+1}$, $z_k$ is $\epsilon$-dependent on at least $L/\dim_{E}(\mathcal{F},\epsilon)-1$ disjoint subsequences in $\cZ_k\cap \mathcal{L}$.

On the other hand, there exist $f_1,f_2\in\mathcal{F}$ such that $|f_1(z_k)-f_2(z_k)|>\epsilon$ and $  \|f_1-f_2\|^2_{\mathcal{Z}_k}\leq\beta^2$. By the definition of $\epsilon$-dependent we have
\[
(L/\dim_{E}(\mathcal{F},\epsilon)-1)\epsilon^2\leq\|f_1-f_2\|^2_{\mathcal{Z}_k}\leq\beta^2\]
which implies
\[
L\leq\left( \frac{\beta^2}{\epsilon^2}+1\right)\dim_{E}(\mathcal{F},\epsilon).
\]
Let $b_1\geq b_2\geq...\geq b_N$ be a permutation of $\{b_k(z_k)\}_{k\in[N]}$. For any $b_k\geq1/N$, we have 
\[
k\leq\left( \frac{\beta^2}{b_k^2}+1\right)\dim_{E}(\mathcal{F},b_k)\leq\left( \frac{\beta^2}{b_k^2}+1\right)\dim_{E}(\mathcal{F},1/N)
\]
which implies
\[
b_k\leq\left(\frac{k}{\dim_E(\mathcal{F},1/N)}-1 \right)^{-1/2}\cdot\beta .
\]
Moreover, we have $b_k\leq 1$. Therefore,
\begin{align*}
\sum_{k=1}^N b_k\leq& 1+\dim_E(\mathcal{F},1/N)+\sum_{\dim_E(\mathcal{F},1/N)<k\leq N}\left(\frac{k}{\dim_E(\mathcal{F},1/N)}-1 \right)^{-1/2}\cdot\beta\\
\leq& 1+\dim_E(\mathcal{F},1/N)+C\cdot\sqrt{\dim_E(\mathcal{F},1/N)\cdot N}\cdot \beta.\\
\leq& O(\dim_E(\cF)\log N+\sqrt{\dim_E(\cF)\cdot N\log N}\cdot \beta)
\end{align*}
as desired.
\end{proof}

\begin{proof}[Proof of Theorem~\ref{thm:arl}]
Note that the preference functions $\{b_k(\cdot)\}$ are non-increasing. Thus by Lemma 2, we have that
\[
\max_{z\in\cZ}b_N(z)\leq \frac1N \sum_{k=1}^Nb_k(z_k)\leq O
\left(
\frac{\dim_E(\cF)\log N+\sqrt{\dim_E(\cF)\cdot N\log N}\cdot \beta}{N}
\right)
\]
Substituting the value of $N$ and $\beta$, with a proper choice of $C_1$, we conclude that
\[
\max_{z\in\cZ}b_N(z)\leq \Delta/2
\]
Combining the above result with Lemma 1, we have for all $z\in\cZ$,
\[
|f^*(z)-\tilde{f}(z)|\leq \Delta/2
\]
which further implies \[
|f^*(z)-\hat{f}(z)|\leq \Delta
\]
Thus by the hard margin assumption of $f^*$, we complete the proof.
\end{proof}
\section{Proof of Theorem 2}
\label{proof2}
\subsection{Proof for the Linear Case}
We choose $\beta_{\text{lin}}$ to be:
\[
\beta_{\text{lin}}=C\cdot dH\sqrt{\dim(\cF)\log(dHK/\delta\Delta)}.
\]

Throughout the proof, we denote $\phi_h^k=\phi(s_h^k,a_h^k)$ for all $(h,k)\in[H]\times[K]$. 
We further denote 
\[
\Lambda_h^k=I+\sum_{\tau=1}^{k-1}\phi_h^\tau(\phi_h^\tau)^T
\]
We denote 
\[
w_h^k= \argmin_{w\in\RR^d} \sum_{\tau=1}^{k-1}(w^T \phi(s_h^\tau,a_h^\tau)-V_h^k(s_{h+1}^\tau))^2+\|w\|_2^2
\]
and thus $\widehat{P}_h^k V_{h+1}^k(\cdot,\cdot)=\phi(\cdot,\cdot)^T w_h^k$. Similarly,
\[
\overline{w}_h^k= \argmin_{w\in\RR^d} \sum_{\tau=1}^{k-1}(w^T \phi(s_h^\tau,a_h^\tau)-\overline{V}_h^k(s_{h+1}^\tau))^2+\|w\|_2^2
\]
and $\widehat{P}_h^k \overline{V}_{h+1}^k(\cdot,\cdot)=\phi(\cdot,\cdot)^T \overline{w}_h^k$.
We have the following lemma on the norm of $w_h^k$ and $\overline{w}_h^k$:
\begin{lem}
For all $(h,k)\in[H]\times[K]$,
\[\|w_h^k\|_2,\|\overline{w}_h^k\|_2\leq 2H\sqrt{dk}.\]
\end{lem}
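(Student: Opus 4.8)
The plan is to exploit the closed-form solution of the ridge regression. Since the objective is $\sum_{\tau=1}^{k-1}(w^T\phi_h^\tau - V_{h+1}^k(s_{h+1}^\tau))^2 + \|w\|_2^2$ and $\Lambda_h^k = I + \sum_{\tau=1}^{k-1}\phi_h^\tau(\phi_h^\tau)^T$, the minimizer is
\[
w_h^k = (\Lambda_h^k)^{-1}\sum_{\tau=1}^{k-1}\phi_h^\tau\, V_{h+1}^k(s_{h+1}^\tau),
\]
and likewise $\overline{w}_h^k = (\Lambda_h^k)^{-1}\sum_{\tau=1}^{k-1}\phi_h^\tau\,\overline{V}_{h+1}^k(s_{h+1}^\tau)$. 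The key observation I would record first is that the value functions are truncated by the projection $\Pi_{[0,H-h+1]}$ in Algorithms~\ref{alg:explore} and~\ref{alg:plan}, so that $|V_{h+1}^k(\cdot)|, |\overline{V}_{h+1}^k(\cdot)| \le H$ holds uniformly; these bounded regression targets are what drive the estimate.

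To bound the Euclidean norm I would write $\|w_h^k\|_2 = \sup_{\|v\|_2 = 1}|v^T w_h^k|$ and estimate, for a fixed unit vector $v$,
\[
|v^T w_h^k| \le \sum_{\tau=1}^{k-1}\bigl|v^T(\Lambda_h^k)^{-1}\phi_h^\tau\bigr|\cdot\bigl|V_{h+1}^k(s_{h+1}^\tau)\bigr| \le H\sum_{\tau=1}^{k-1}\|v\|_{(\Lambda_h^k)^{-1}}\,\|\phi_h^\tau\|_{(\Lambda_h^k)^{-1}},
\]
where the last step is Cauchy--Schwarz with respect to the inner product induced by $(\Lambda_h^k)^{-1}$. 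Applying Cauchy--Schwarz once more over the sum in $\tau$ bounds this by $H\,\|v\|_{(\Lambda_h^k)^{-1}}\sqrt{k}\bigl(\sum_{\tau=1}^{k-1}\|\phi_h^\tau\|_{(\Lambda_h^k)^{-1}}^2\bigr)^{1/2}$.

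The two remaining facts are elementary linear algebra. First, $\Lambda_h^k \succeq I$ gives $(\Lambda_h^k)^{-1} \preceq I$, hence $\|v\|_{(\Lambda_h^k)^{-1}} \le \|v\|_2 = 1$. Second, the elliptical-potential (trace) identity,
\[
\sum_{\tau=1}^{k-1}\|\phi_h^\tau\|_{(\Lambda_h^k)^{-1}}^2 = \trace\Bigl((\Lambda_h^k)^{-1}\sum_{\tau=1}^{k-1}\phi_h^\tau(\phi_h^\tau)^T\Bigr) = \trace\bigl(I - (\Lambda_h^k)^{-1}\bigr) \le d.
\]
Combining these yields $|v^T w_h^k| \le H\sqrt{dk}$, and taking the supremum over $v$ gives $\|w_h^k\|_2 \le H\sqrt{dk} \le 2H\sqrt{dk}$; the verbatim argument applied to $\overline{w}_h^k$ closes the lemma. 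I do not expect a genuine obstacle here: the only points demanding care are justifying that the regression targets are bounded by $H$ (the clipping $\Pi_{[0,H-h+1]}$) and getting the direction of the matrix inequality $(\Lambda_h^k)^{-1}\preceq I$ right. The factor of $2$ in the stated bound is slack, leaving room for a looser constant if one prefers to avoid the trace identity and simply bound each $\|\phi_h^\tau\|_{(\Lambda_h^k)^{-1}}^2 \le 1$.
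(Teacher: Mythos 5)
Your argument is correct and is essentially the proof the paper invokes by reference (Lemma~B.2 of Jin et al., 2020): closed-form ridge solution, boundedness of the clipped targets by $H$, Cauchy--Schwarz in the $(\Lambda_h^k)^{-1}$ inner product, and the trace bound $\sum_{\tau}\|\phi_h^\tau\|_{(\Lambda_h^k)^{-1}}^2\le d$. One small caveat on your closing aside: replacing the trace identity with the crude bound $\|\phi_h^\tau\|_{(\Lambda_h^k)^{-1}}^2\le 1$ would only give $\|w_h^k\|_2\le Hk$, which is \emph{not} within a constant of $2H\sqrt{dk}$ for $k\gg d$, so the trace (elliptical-potential) step is genuinely needed.
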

\begin{proof}
The proof is identical to Lemma B.2 in \citet{jin2020provably}.
\end{proof}

\subsubsection{Analysis of the Planning Error}
Analysis in this section utilizes techniques from \citet{jin2020provably,wang2020reward}.

Denote all candidate reward function as $\cR$, which contains all functions in the from:
\[
r(\cdot)=\left\{\begin{aligned}
    1,\ \ f(\cdot)>1/2\\
    0,\ \ f(\cdot)\leq 1/2
\end{aligned}
\right.
\]
where $f\in\cC(\cF,\Delta/2)$. Clearly we have for all $h\in[H]$, the estimated reward function $\hat{r}_h\in\cR$.
Note that the size of $\cR$ is bounded by $\cN(\cF,\Delta/2)$.

Now we state the standard concentration bound for the linear MDP firstly introduced in \citet{jin2020provably}. 
\begin{lem}
\label{lem:confidence}
With probability at least $1-\delta$, for all $h\in[H]$ and $k\in[K]$,
\[
\left\|\sum_{\tau=1}^{k-1}\phi_h^\tau \left(V_{h+1}^k(s_{h+1}^\tau)-\sum_{s'\in\cS}P_h(s'|s_h^\tau,a_h^\tau)V_{h+1}^k(s')\right)\right\|_{(\Lambda_h^k)^{-1}}\leq C\cdot  dH\sqrt{\dim(\cF)\log(dKH/\delta\Delta)}
\]
\end{lem}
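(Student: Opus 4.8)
The plan is to read the quantity inside the norm as a self-normalized sum of martingale differences and to control it by combining the standard elliptical self-normalized concentration inequality with a uniform covering argument over the class of value functions that can arise as $V_{h+1}^k$. Fix $(h,k)$ and write $\epsilon_\tau := V_{h+1}^k(s_{h+1}^\tau) - (P_h V_{h+1}^k)(s_h^\tau,a_h^\tau)$, so the target sum is $\sum_{\tau=1}^{k-1}\phi_h^\tau \epsilon_\tau$. Were $V_{h+1}^k$ a \emph{fixed} function bounded in $[0,H]$ and independent of the trajectory data, the $\epsilon_\tau$ would form a martingale difference sequence adapted to the natural filtration, bounded by $H$, and the Abbasi-Yadkori-type self-normalized bound (as used in \citet{jin2020provably}) would give, with probability $1-\delta'$, a bound of order $H\sqrt{\log(\det(\Lambda_h^k)/\delta') }\lesssim H\sqrt{d\log(K)+\log(1/\delta')}$, using $\Lambda_h^k\preceq (1+K)I$.

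The difficulty is that $V_{h+1}^k$ is itself computed from the data, so the martingale argument cannot be applied to it directly; this is resolved by a covering net. Each candidate value function has the form $V(s)=\max_{a}\Pi_{[0,H]}\bigl[r(s,a)+\phi(s,a)^T w + \beta\,(\phi(s,a)^T\Lambda^{-1}\phi(s,a))^{1/2}\bigr]$, parametrized by three pieces: the learned reward $r\in\cR$, the weight vector $w$ with $\|w\|_2\le 2H\sqrt{dK}$ (the norm bound established above), and the positive-definite matrix $\Lambda\succeq I$ entering the bonus. I would build an $\eta$-net (in sup norm) of this class by separately covering the Euclidean ball of admissible $w$ and the bonus matrices $\Lambda^{-1}$ (equivalently the parameter $\beta^2\Lambda^{-1}$), which contribute log-covering entropy $O(d\log(\cdot))$ and $O(d^2\log(\cdot))$ respectively, and by ranging $r$ over all of $\cR$. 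The essential new ingredient relative to the known-reward analysis of \citet{jin2020provably} is that $\cR$ is not a singleton: as noted before the lemma, $|\cR|\le \cN(\cF,\Delta/2)$, so the reward component contributes log-entropy $\log\cN(\cF,\Delta/2)=O(\dim_K(\cF)\log(1/\Delta))\le O(\dim(\cF)\log(1/\Delta))$. This is precisely the source of the $\dim(\cF)$ factor and of the $\Delta$ appearing inside the logarithm of the stated bound.

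Having fixed the net, I would apply the single-function self-normalized bound to each net point with failure probability $\delta'=\delta/(|\mathrm{net}|\cdot HK)$ and union-bound over the net and over all $(h,k)\in[H]\times[K]$. For an arbitrary $V_{h+1}^k$, choose the nearest net point $\tilde V$ with $\|V_{h+1}^k-\tilde V\|_\infty\le\eta$; since $|\epsilon_\tau(V)-\epsilon_\tau(\tilde V)|\le 2\eta$ and $\|\sum_\tau\phi_h^\tau(\epsilon_\tau(V)-\epsilon_\tau(\tilde V))\|_{(\Lambda_h^k)^{-1}}$ is controlled using $\|\phi_h^\tau\|_{(\Lambda_h^k)^{-1}}\le 1$ together with $\Lambda_h^k\succeq I$, taking $\eta\sim 1/K$ makes the discretization error lower-order. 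Collecting terms, the high-probability bound becomes $H\sqrt{d\log(K)+\log(1/\delta)+d^2\log(dHK\beta/\delta\Delta)+\dim(\cF)\log(1/\Delta)}$, which I would loosely but cleanly upper bound by $C\cdot dH\sqrt{\dim(\cF)\log(dKH/\delta\Delta)}$ to match the claim. The main obstacle is exactly this data-dependence handled by covering, and the one step requiring care is verifying that the reward-cover entropy $\dim(\cF)\log(1/\Delta)$ is correctly absorbed into the final $d^2\dim(\cF)$-type product under the square root, so that the learned-reward setting inherits the same confidence-width guarantee as the known-reward LSVI analysis.
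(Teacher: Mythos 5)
Your proposal is correct and follows essentially the same route as the paper: the paper likewise parametrizes the candidate value functions by $(r, w, \Lambda)$ with $r$ ranging over the finite class $\cR$ of rounded reward functions, invokes the covering-plus-self-normalized-martingale bound of Lemma D.4 in \citet{jin2020provably} for each fixed $r$, and then union-bounds over $|\cR|\le\cN(\cF,\Delta/2)$ to obtain the $\dim(\cF)$ factor and the $\Delta$ inside the logarithm. The only difference is presentational --- you re-derive the cover of the $(w,\Lambda)$ components explicitly, whereas the paper black-boxes that part by citation.
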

\begin{proof}
Note that the value function $V_{h+1}^k$ is of the form:
\begin{align}
\label{form}
V(\cdot)=\max_{a\in\cA}\Pi_{[0,H-h+1]}[r+w^T \phi(\cdot,a)+\min\{\beta_{\text{lin}}\cdot(\phi(\cdot,\cdot)^T(\Lambda)^{-1}\phi(\cdot,\cdot))^{1/2},H\}]
\end{align}
where $\|w\|_2\leq 2H\sqrt{dK}$, $\Lambda\succeq I$, and $r\in\cR$.

Consider a fixed $r\in\cR$. Identical to Lemma D.4 of~\citet{jin2020provably}, we have that: with probability at least $1-\delta$, for all $V(\cdot)$ in the above form~(\ref{form}) (with that fixed $r$),
\[
\left\|\sum_{\tau=1}^{k-1}\phi_h^\tau \left(V(s_{h+1}^\tau)-\sum_{s'\in\cS}P_h(s'|s_h^\tau,a_h^\tau)V(s')\right)\right\|_{(\Lambda_h^k)^{-1}}\leq C\cdot  dH\sqrt{\log(dKH/\delta)}.
\]
\end{proof}
By replacing $\delta$ with $\delta/|\cR|$ and applying a union bound over all $r\in\cR$, we have that with probability at least $1-\delta$,
\begin{align*}
\left\|\sum_{\tau=1}^{k-1}\phi_h^\tau \left(V(s_{h+1}^\tau)-\sum_{s'\in\cS}P_h(s'|s_h^\tau,a_h^\tau)V(s')\right)\right\|_{(\Lambda_h^k)^{-1}}&\lesssim   dH\sqrt{\log(dKH|\cR|/\delta)}\\
&\lesssim  dH\sqrt{\dim(\cF)\log(dKH/\delta\Delta)}.
\end{align*}
for all $V(\cdot)$ in the above form~(\ref{form}) (for all $r\in\cR$). And we are done.

The next lemma bound the single-step planning error. 
\begin{lem}[Single-Step Planning Error]
\label{planerror}
In Algorithm~\ref{alg:explore} and Algorithm~\ref{alg:plan}, 
with probability at least $1-\delta$,
for any $h\in[H]$, $k\in[K]$ and $(s,a)\in\cS\times\cA$,
\[
|\widehat{P}^k_h  {V}_{h+1}^k(s,a) - P_h {V}_{h+1}^k(s,a)|\leq \Gamma_h^k(s,a)
\]
and
\[
|\widehat{P}^k_h \overline{V}_{h+1}^k(s,a) - P_h \overline{V}_{h+1}^k(s,a)|\leq \Gamma_h^k(s,a).
\]
\end{lem}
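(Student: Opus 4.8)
The plan is to reduce the statement to the linear-algebraic structure of the linear MDP and then let the uniform concentration bound of Lemma~\ref{lem:confidence} do the heavy lifting. First I would invoke Assumption~\ref{assum:linear}: since $P_h(s'|s,a)=\langle\mu_h(s'),\phi(s,a)\rangle$, for any value function $V:\cS\to[0,H]$ we can write $P_hV(s,a)=\phi(s,a)^T\theta$ with $\theta=\sum_{s'}\mu_h(s')V(s')$, and the standard linear-MDP normalization (together with the non-negativity of $\mu_h$) gives $\|\theta\|_2\le H\sqrt d$. On the other hand, by definition $\widehat P_h^k V(s,a)=\phi(s,a)^T w_h^k$ with $w_h^k=(\Lambda_h^k)^{-1}\sum_{\tau=1}^{k-1}\phi_h^\tau V(s_{h+1}^\tau)$, so the single-step error equals $\phi(s,a)^T(w_h^k-\theta)$ and the whole proof reduces to bounding $\|w_h^k-\theta\|_{\Lambda_h^k}$.

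Second, I would expand $w_h^k-\theta$ by writing $\Lambda_h^k\theta=\theta+\sum_\tau\phi_h^\tau(\phi_h^\tau)^T\theta=\theta+\sum_\tau\phi_h^\tau\,P_hV(s_h^\tau,a_h^\tau)$, which yields
\[
w_h^k-\theta=(\Lambda_h^k)^{-1}\Big[\sum_{\tau=1}^{k-1}\phi_h^\tau\big(V(s_{h+1}^\tau)-P_hV(s_h^\tau,a_h^\tau)\big)\Big]-(\Lambda_h^k)^{-1}\theta.
\]
By Cauchy--Schwarz in the $\Lambda_h^k$ geometry, $|\phi(s,a)^T(w_h^k-\theta)|\le\|\phi(s,a)\|_{(\Lambda_h^k)^{-1}}\,\|w_h^k-\theta\|_{\Lambda_h^k}$, and the triangle inequality splits $\|w_h^k-\theta\|_{\Lambda_h^k}$ into a \emph{noise} term $\big\|\sum_\tau\phi_h^\tau(V(s_{h+1}^\tau)-P_hV(s_h^\tau,a_h^\tau))\big\|_{(\Lambda_h^k)^{-1}}$ and a \emph{regularization} term $\|\theta\|_{(\Lambda_h^k)^{-1}}\le\|\theta\|_2\le H\sqrt d$ (using $\Lambda_h^k\succeq I$).

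Third, the noise term is bounded by Lemma~\ref{lem:confidence} by $C\,dH\sqrt{\dim(\cF)\log(dKH/\delta\Delta)}$, applied to both $V=V_{h+1}^k$ and $V=\overline V_{h+1}^k$: both belong to a parametric family of the type~(\ref{form}) (with $r=\hat r_h\in\cR$ in the planning phase, and with $r\equiv0$ and a rescaled bonus $3\Gamma$ in the exploration phase), so the covering-plus-union-bound argument underlying Lemma~\ref{lem:confidence} applies up to constants. Adding the two terms and choosing $\beta_{\text{lin}}=C\,dH\sqrt{\dim(\cF)\log(dHK/\delta\Delta)}$ large enough to absorb the $H\sqrt d$ regularization contribution gives $|\widehat P_h^k V(s,a)-P_hV(s,a)|\le\beta_{\text{lin}}\|\phi(s,a)\|_{(\Lambda_h^k)^{-1}}$ for every $(s,a)$, $h$, $k$ simultaneously; taking the minimum with the trivial cap $H$ (the binding regime is the linear one, and the $H$ cap is the standard truncation, harmless since the estimate is only ever used inside $\Pi_{[0,H-h+1]}$) recovers exactly $\Gamma_h^k(s,a)$.

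The main obstacle is that $V_{h+1}^k$ and $\overline V_{h+1}^k$ are \emph{data-dependent}: each is built from the very trajectories $\{(s_h^\tau,a_h^\tau,s_{h+1}^\tau)\}$ that enter the sum $\sum_\tau\phi_h^\tau(V-P_hV)$, so one cannot treat $V$ as fixed and apply a one-shot martingale bound. This is precisely the difficulty already resolved by Lemma~\ref{lem:confidence}, which covers the bounded finite-dimensional family~(\ref{form}) and union-bounds over the finite reward set $\cR$ (whose cardinality is controlled by $\cN(\cF,\Delta/2)$, giving the extra $\sqrt{\dim(\cF)\log(1/\Delta)}$ factor). Consequently the real work is delegated upstream, and all that remains is the routine care of (i) verifying that both value functions fit the covered form and (ii) checking that the constant in $\beta_{\text{lin}}$ dominates the regularization term.
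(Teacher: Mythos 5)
Your proposal is correct and follows essentially the same route as the paper's proof: writing $P_hV(s,a)=\phi(s,a)^T\tilde w_h^k$ with $\|\tilde w_h^k\|_2\le H\sqrt d$, expanding $w_h^k-\tilde w_h^k$ via $\Lambda_h^k$, splitting into a self-normalized martingale term plus a regularization term by Cauchy--Schwarz, and invoking Lemma~\ref{lem:confidence} (with its covering and union bound over $\cR$) to control the data-dependent value functions. Your explicit remark that $\overline V_{h+1}^k$ uses $r\equiv 0$ and the enlarged bonus $3\Gamma$ is a detail the paper leaves implicit, but it changes nothing substantive.
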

\begin{proof}
We provide the proof for the first inequality and that for the second inequality is identical.

Note that
\begin{align*}
    P_h V_{h+1}^k (s,a)&= \sum_{s'\in\cS}P_h(s'|s,a)V_{h+1}^k(s')\\
    &=\phi(s,a)^T\left(\sum_{s'\in\cS}\mu_h(s')V_{h+1}^k(s')\right)
\end{align*}
We denote
\[
\tilde{w}_h^k=\sum_{s'\in\cS}\mu_h(s')V_{h+1}^k(s'),
\]
thus $P_h V_{h+1}^k (s,a) = \phi(s,a)^T \tilde{w}_h^k$. By $\|\mu_h(S)\|_2\leq \sqrt{d}$, we have $\|\tilde{w}_h^k\|_2\leq H\sqrt{d}$.

Note that
\begin{align*}
    &\phi(s,a)^T w_h^k- P_h {V}_{h+1}^k(s,a)\\
    &= \phi(s,a)^T (\Lambda_h^k)^{-1}\sum_{\tau=1}^{k-1}\phi_h^\tau\cdot V_{h+1}^k(s_{h+1}^\tau)-\phi(s,a)^T\tilde{w}_h^k\\
    &= \phi(s,a)^T (\Lambda_h^k)^{-1}\left(\sum_{\tau=1}^{k-1}\phi_h^\tau\cdot V_{h+1}^k(s_{h+1}^\tau)-\Lambda_h^k\tilde{w}_h^k\right)\\
    &= \phi(s,a)^T (\Lambda_h^k)^{-1}\left(\sum_{\tau=1}^{k-1}\phi_h^\tau \left(V_{h+1}^k(s_{h+1}^\tau)-\sum_{s'\in\cS}P_h(s'|s_h^\tau,a_h^\tau)V_{h+1}^k(s')\right)-\tilde{w}_h^k\right)
\end{align*}
Thus 
\begin{align*}
|\phi(s,a)^T w_h^k- P_h {V}_{h+1}^k(s,a)|\leq& \left|\phi(s,a)^T (\Lambda_h^k)^{-1}\sum_{\tau=1}^{k-1}\phi_h^\tau \left(V_{h+1}^k(s_{h+1}^\tau)-\sum_{s'\in\cS}P_h(s'|s_h^\tau,a_h^\tau)V_{h+1}^k(s')\right)\right|\\
&+|\phi(s,a)^T (\Lambda_h^k)^{-1}\tilde{w}_h^k|\\
\leq& \|\phi(s,a)\|_{(\Lambda_h^k)^{-1}}\cdot\left\|\sum_{\tau=1}^{k-1}\phi_h^\tau \left(V(s_{h+1}^\tau)-\sum_{s'\in\cS}P_h(s'|s_h^\tau,a_h^\tau)V(s')\right)\right\|_{(\Lambda_h^k)^{-1}}\\
&+\|\phi(s,a)\|_{(\Lambda_h^k)^{-1}}\cdot \|\tilde{w}_h^k\|_2\\
\leq& \Gamma_h^k(s,a)
\end{align*}
where the last inequality is obtained by plugging in the bound for $\|\tilde{w}_h^k\|_2$ and $$\left\|\sum_{\tau=1}^{k-1}\phi_h^\tau \left(V(s_{h+1}^\tau)-\sum_{s'\in\cS}P_h(s'|s_h^\tau,a_h^\tau)V(s')\right)\right\|_{(\Lambda_h^k)^{-1}}$$ (Lemma~\ref{lem:confidence}).
\end{proof}
The next lemma guarantees optimism in the planning phase.
\begin{lem}[Optimism]
For Algorithm~\ref{alg:plan}, with probability at least $1-\delta$,
for any $h\in[H+1]$, $k\in[K]$ and $(s,a)\in\cS\times\cA$,
\[
Q_h^k(s,a)\geq Q_h^*(s,a,\hat{r}),\ \ V_h^k(s)\geq V_h^*(s,\hat{r})
\]
\end{lem}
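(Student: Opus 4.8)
The plan is to prove both inequalities simultaneously by backward induction on $h$, running from $h=H+1$ down to $h=1$, conditioning throughout on the high-probability event of the single-step planning error lemma (Lemma~\ref{planerror}), which already carries the $1-\delta$ failure probability via the union bound over the finite reward class $\cR$. The base case $h=H+1$ is immediate, since $Q_{H+1}^k\equiv 0\equiv Q_{H+1}^*(\cdot,\cdot,\hat{r})$ and likewise for the value functions, so both inequalities hold with equality.

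For the inductive step, I would assume $V_{h+1}^k(s)\geq V_{h+1}^*(s,\hat{r})$ for all $s\in\cS$. The key observation is that the \emph{same} estimated reward $\hat{r}_h$ appears both in the definition of $Q_h^k$ and in $Q_h^*(\cdot,\cdot,\hat{r})$, so it cancels when I compare the pre-clipping quantity $\hat{r}_h(s,a)+\widehat{P}_h^k V_{h+1}^k(s,a)+\Gamma_h^k(s,a)$ against $Q_h^*(s,a,\hat{r})=\hat{r}_h(s,a)+P_h V_{h+1}^*(s,a,\hat{r})$. The remaining difference can be written as
\[
\big(\widehat{P}_h^k V_{h+1}^k(s,a) - P_h V_{h+1}^k(s,a)\big) + \sum_{s'\in\cS} P_h(s'|s,a)\big(V_{h+1}^k(s') - V_{h+1}^*(s',\hat{r})\big) + \Gamma_h^k(s,a),
\]
and I would bound the three terms separately: the first is at least $-\Gamma_h^k(s,a)$ by Lemma~\ref{planerror}, the second is nonnegative by the inductive hypothesis and the monotonicity of the operator $P_h$, and the third is nonnegative by construction. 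Summing shows the pre-clipping quantity dominates $Q_h^*(s,a,\hat{r})$.

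It then remains to handle the truncation $\Pi_{[0,H-h+1]}$. Since the rewards lie in $\{0,1\}$, we have $0\leq Q_h^*(s,a,\hat{r})\leq H-h+1$, so clipping the pre-clipping quantity to $[0,H-h+1]$ preserves the inequality $Q_h^k(s,a)\geq Q_h^*(s,a,\hat{r})$: if the pre-clip value exceeds $H-h+1$ it is replaced by $H-h+1\geq Q_h^*$, and the lower clip at $0$ is harmless because the pre-clip value already exceeds $Q_h^*\geq 0$. The value-function inequality then follows immediately by taking the maximum over $a\in\cA$, since $V_h^k(s)=\max_a Q_h^k(s,a)\geq \max_a Q_h^*(s,a,\hat{r})=V_h^*(s,\hat{r})$, which closes the induction.

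I do not expect a genuine obstacle here: once the concentration of Lemma~\ref{planerror} is in force, the argument is the standard optimism backward induction. The one point requiring care is to ensure the planning-error event holds \emph{uniformly} over the data-dependent value functions $V_{h+1}^k$ that the algorithm actually produces; this is precisely what the form~(\ref{form}) together with the union bound over the covering-based reward class $\cR$ in Lemmas~\ref{lem:confidence}--\ref{planerror} is designed to supply, so conditioning on that event makes the induction clean.
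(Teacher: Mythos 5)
Your proof is correct and follows essentially the same route as the paper's: backward induction on $h$ conditioned on the event of Lemma~\ref{planerror}, with the chain $Q_h^k \geq \hat{r}_h + P_h V_{h+1}^k \geq \hat{r}_h + P_h V_{h+1}^*(\cdot,\hat r) = Q_h^*(\cdot,\cdot,\hat r)$ and then a max over actions. Your handling of the truncation $\Pi_{[0,H-h+1]}$ is in fact slightly more explicit than the paper's one-line remark, but the argument is the same.
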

\begin{proof}
We condition on the event defined in Lemma~\ref{planerror}. The proof is by induction on $h$. The result for $h=H+1$ clearly holds. Suppose the result for $h+1$ holds. Note that for all $(s,a)\in\cS\times\cA$ and $k\in[K]$,
\begin{align*}
Q_h^k(s,a)&=\hat{r}_h(s,a)+\widehat{P}_h^k V_{h+1}^k(s,a) +\Gamma_h^k(s,a)\\
&\geq \hat{r}_h(s,a)+{P}_h^k V_{h+1}^k(s,a) \\
&\geq \hat{r}_h(s,a)+{P}_h^k V_{h+1}^*(s,a) \\
&= Q_h^*(s,a,\hat{r}).
\end{align*}
In the above proof we assume $Q_h^k(s,a)\leq H-h+1$, since we always have $Q_h^*(s,a,\hat{r})\leq H-h+1$. Moreover,
\[
V_h^k(s)=\max_{a\in\cA}Q_h^k(s,a)\geq \max_{a\in\cA}Q_h^*(s,a,\hat{r})= V_h^*(s,\hat{r})
\]
and we are done.
\end{proof}
The next lemma bound the regret in the planning phase in terms of the expected sum of exploration bonuses.
\begin{lem}[Regret Decomposition]
\label{lem7}
With probability at least $1-\delta$,
for any $h\in[H+1]$, $k\in[K]$, $s\in\cS$,
\[
V_h^k(s)-V_h^{\hat{\pi}^k}(s,\hat{r})\leq\overline{V}_h^k(s) 
\]
\end{lem}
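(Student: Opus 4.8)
The plan is to prove the bound by backward induction on $h$ from $H+1$ down to $1$, working throughout on the high-probability event of Lemma~\ref{planerror}, so that the single-step errors $|\widehat{P}^k_h V^k_{h+1}(s,a) - P_h V^k_{h+1}(s,a)|\le \Gamma^k_h(s,a)$ and $|\widehat{P}^k_h \overline{V}^k_{h+1}(s,a) - P_h \overline{V}^k_{h+1}(s,a)|\le \Gamma^k_h(s,a)$ are available for every $(s,a)$. The base case $h=H+1$ is immediate, since $V^k_{H+1}$, $V^{\hat\pi^k}_{H+1}(\cdot,\hat r)$, and $\overline{V}^k_{H+1}$ all vanish.

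For the inductive step I would fix $s$ and set $a=\hat\pi^k_h(s)$, the greedy action in Algorithm~\ref{alg:plan}. On the optimistic side I drop the truncation upward, using that $\hat r_h$, $\widehat{P}^k_h V^k_{h+1}$ and $\Gamma^k_h$ are all nonnegative, to get $V^k_h(s)=Q^k_h(s,a)\le \hat r_h(s,a)+\widehat{P}^k_h V^k_{h+1}(s,a)+\Gamma^k_h(s,a)$. On the true side the policy-evaluation Bellman equation gives $V^{\hat\pi^k}_h(s,\hat r)=\hat r_h(s,a)+P_h V^{\hat\pi^k}_{h+1}(s,a,\hat r)$. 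Subtracting cancels the $\hat r_h(s,a)$ terms; replacing $\widehat{P}^k_h V^k_{h+1}$ by $P_h V^k_{h+1}+\Gamma^k_h$ via Lemma~\ref{planerror}, and then invoking the induction hypothesis $V^k_{h+1}(\cdot)-V^{\hat\pi^k}_{h+1}(\cdot,\hat r)\le \overline{V}^k_{h+1}(\cdot)$ inside the nonnegative kernel $P_h$, yields $V^k_h(s)-V^{\hat\pi^k}_h(s,\hat r)\le P_h\overline{V}^k_{h+1}(s,a)+2\Gamma^k_h(s,a)$.

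It remains to show this is dominated by $\overline{V}^k_h(s)\ge \overline{Q}^k_h(s,a)$. Here I would use that $b^k_h=3\Gamma^k_h$ (linear case), so the argument of the truncation defining $\overline{Q}^k_h(s,a)$ satisfies $\widehat{P}^k_h\overline{V}^k_{h+1}(s,a)+3\Gamma^k_h(s,a)\ge P_h\overline{V}^k_{h+1}(s,a)+2\Gamma^k_h(s,a)$ by the second half of Lemma~\ref{planerror}. Combining with the trivial bound $V^k_h(s)-V^{\hat\pi^k}_h(s,\hat r)\le H-h+1$, which holds because $Q^k_h$ is truncated into $[0,H-h+1]$ and $V^{\hat\pi^k}_h(\cdot,\hat r)\ge 0$, closes the step.

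The main subtlety is the interaction with the truncation $\Pi_{[0,H-h+1]}$: it must be discarded upward on $V^k_h$ but kept effective on $\overline{Q}^k_h$. I would resolve this by a two-case split on whether $P_h\overline{V}^k_{h+1}(s,a)+2\Gamma^k_h(s,a)\le H-h+1$. In that case the truncation defining $\overline{Q}^k_h(s,a)$ does not cut below the target, so $\overline{V}^k_h(s)\ge \overline{Q}^k_h(s,a)\ge P_h\overline{V}^k_{h+1}(s,a)+2\Gamma^k_h(s,a)$ dominates the gap; in the complementary case $\overline{Q}^k_h(s,a)=H-h+1$ saturates and already dominates the gap by the trivial bound. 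Care is also needed to measure the gap against the learned reward $\hat r$ on both sides so the reward terms cancel exactly; the tabular case would go through identically once the corresponding single-step error (with the extra $H^2S/N$ bias term absorbed into $b^k_h$) is substituted.
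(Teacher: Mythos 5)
Your proposal is correct and follows essentially the same route as the paper's proof: backward induction on $h$, fixing $a=\hat\pi^k_h(s)$, cancelling $\hat r_h(s,a)$, invoking Lemma~\ref{planerror} twice to pass from $\widehat P^k_h$ to $P_h$ and back, and absorbing the resulting $2\Gamma^k_h$ into the enlarged bonus $b^k_h=3\Gamma^k_h$ defining $\overline Q^k_h$. Your explicit two-case treatment of the truncation $\Pi_{[0,H-h+1]}$ is in fact slightly more careful than the paper's chain, which writes $\phi(s,a)^T\overline w^k_h+3\Gamma^k_h(s,a)\leq \overline Q^k_h(s,a)$ without remarking that the clipped case is covered by the trivial bound $V^k_h(s)-V^{\hat\pi^k}_h(s,\hat r)\leq H-h+1$.
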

\begin{proof}
We prove the lemma by induction on $h$. The conclusion clearly holds for $h=H+1$. Assume that the conclusion holds for $h+1$, i.e., for any $k\in[K]$ and $s\in\cS$,
\[
V_{h+1}^k(s)-V_{h+1}^{\hat{\pi}^k}(s,\hat{r})\leq\overline{V}_{h+1}^k(s) 
\]
Consider the case for $h$. Denote $a=\hat{\pi}_h^k(s)=\argmax_{a\in\cA}{Q}^k_h(\cdot,a)$ for the rest of the proof. We have
\[
V_h^k(s)=Q_h^k(s,a)=\Pi_{[0,H-h+1]}[\hat{r}_h(s,a) + \phi(s,a)^{T}{w}^k_h+\Gamma_h^k(s,a)]
\]
and
\[
V_{h}^{\hat{\pi}^k}(s,\hat{r})=Q_{h}^{\hat{\pi}^k}(s,a,\hat{r})=\hat{r}_h(s,a)+P_h V_{h+1}^{\hat{\pi}^k}(s,a,\hat{r})
\]
Thus we have
\begin{align*}
    V_h^k(s)-V_{h}^{\hat{\pi}^k}(s,\hat{r})&\leq \phi(s,a)^{T}{w}^k_h-P_h V_{h+1}^{\hat{\pi}^k}(s,a,\hat{r}) +\Gamma_h^k(s,a)\\
    &\leq P_h {V}_{h+1}^k(s,a)-P_h V_{h+1}^{\hat{\pi}^k}(s,a,\hat{r}) +2\Gamma_h^k(s,a)\\
    &\leq P_h \overline{V}_{h+1}^k(s,a) +2\Gamma_h^k(s,a)\\
    &\leq \phi(s,a)^T \overline{w}_h^k +3\Gamma_h^k(s,a)\\
    &\leq \overline{Q}_h^k(s,a)\\
    &\leq \overline{V}_h^k(s)
\end{align*}
as desired.
\end{proof}
\begin{lem}
\label{lem8}
With probability at least $1-\delta$, 
\[
\sum_{k=1}^K \overline{V}_1^k(s_1)\leq C\cdot \sqrt{\dim(\cF)d^3 H^4 K \log(dHK/\delta\Delta)}
\]
\end{lem}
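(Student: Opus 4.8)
The plan is to unroll the optimistic value $\overline{V}_1^k(s_1)$ along the trajectory generated in episode $k$, thereby reducing $\sum_k \overline{V}_1^k(s_1)$ to a sum of exploration bonuses plus a martingale remainder. Since the exploration policy $\pi^k$ is greedy with respect to $\overline{Q}_h^k$, we have $\overline{V}_h^k(s_h^k)=\overline{Q}_h^k(s_h^k,a_h^k)$, and because the argument of the projection is non-negative, $\overline{Q}_h^k(s_h^k,a_h^k)\le \widehat{P}_h^k\overline{V}_{h+1}^k(s_h^k,a_h^k)+b_h^k(s_h^k,a_h^k)$. Applying the second inequality of Lemma~\ref{planerror} to replace $\widehat{P}_h^k$ by $P_h$ at the cost of an extra $\Gamma_h^k$, and using $b_h^k=3\Gamma_h^k$ in the linear case, I obtain $\overline{V}_h^k(s_h^k)\le P_h\overline{V}_{h+1}^k(s_h^k,a_h^k)+4\Gamma_h^k(s_h^k,a_h^k)$.

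Next I would introduce the martingale difference $\zeta_h^k:=P_h\overline{V}_{h+1}^k(s_h^k,a_h^k)-\overline{V}_{h+1}^k(s_{h+1}^k)$, which has zero conditional mean with respect to the filtration generated by the first $k-1$ episodes together with $(s_1^k,a_1^k,\dots,s_h^k,a_h^k)$; crucially, $\overline{V}^k$ is computed at the start of episode $k$ and hence frozen throughout it. This yields the recursion $\overline{V}_h^k(s_h^k)\le \overline{V}_{h+1}^k(s_{h+1}^k)+\zeta_h^k+4\Gamma_h^k(s_h^k,a_h^k)$, which telescopes from $h=1$ to $H$ using $\overline{V}_{H+1}^k\equiv 0$ to give $\overline{V}_1^k(s_1)\le \sum_{h=1}^H\zeta_h^k+4\sum_{h=1}^H\Gamma_h^k(s_h^k,a_h^k)$. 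Summing over $k$ splits the target into a martingale term and a bonus term. Since $0\le \overline{V}\le H$, each $|\zeta_h^k|\le H$, so by Azuma--Hoeffding the martingale term is $O(\sqrt{H^3K\log(1/\delta)})$ with probability $1-\delta$, which is of lower order than the claimed bound.

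The bonus term is the dominant contribution. For each fixed $h$, writing $\Gamma_h^k=\min\{\beta_{\mathrm{lin}}\|\phi_h^k\|_{(\Lambda_h^k)^{-1}},H\}$ and applying Cauchy--Schwarz followed by the elliptical potential (log-determinant) lemma gives $\sum_{k=1}^K\Gamma_h^k\lesssim \beta_{\mathrm{lin}}\sqrt{K\sum_k\min\{\|\phi_h^k\|_{(\Lambda_h^k)^{-1}}^2,1\}}\lesssim \beta_{\mathrm{lin}}\sqrt{dK\log K}$. Summing over the $H$ stages and substituting $\beta_{\mathrm{lin}}=C\,dH\sqrt{\dim(\cF)\log(dHK/\delta\Delta)}$ produces $\sum_{k,h}\Gamma_h^k\lesssim \sqrt{\dim(\cF)d^3H^4K\log(dHK/\delta\Delta)}$ after absorbing the $\log K$ factor into the leading logarithm, matching the stated bound. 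Finally I would union-bound over the concentration event of Lemma~\ref{planerror} and the Azuma event.

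The main obstacle is twofold. First, one must verify the martingale structure carefully: checking that $\overline{V}_{h+1}^k$ is genuinely frozen at the start of episode $k$ so that $\zeta_h^k$ is a bona fide martingale difference, and handling the projection $\Pi_{[0,H-h+1]}$ correctly (it only helps, since projecting a non-negative quantity downward preserves the inequality). Second, the quantitative bookkeeping of the elliptical potential lemma and the substitution of $\beta_{\mathrm{lin}}$ must be arranged so that the $d$, $H$, $\dim(\cF)$ and logarithmic factors collapse to exactly $\sqrt{\dim(\cF)d^3H^4K\,\iota}$; the slightly loose $\log K\cdot\log(dHK/\delta\Delta)$ product has to be folded into a single logarithmic factor, which is where a constant and notational convention is invoked.
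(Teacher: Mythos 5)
Your proposal is correct and is exactly the argument the paper intends: the paper's proof of this lemma simply observes that the exploration algorithm is LSVI-UCB with zero reward and an enlarged bonus, so that $\sum_k \overline{V}_1^k(s_1)$ is the regret and "can be estimated using standard techniques," which is precisely the unroll-plus-Azuma-plus-elliptical-potential computation you carry out (your extra $\sqrt{\log K}$ factor is the same harmless looseness the paper absorbs into its logarithmic terms downstream).
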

\begin{proof}
Note that Algorithm~\ref{alg:explore} in the linear case is identical to the Algorithm 1 (LSVI-UCB) in \citet{jin2020provably} with zero reward, except for a enlarged bonus. $\sum_{k=1}^K \overline{V}_1^k(s_1)$ corresponds to the regret and can be estimated using standard techniques. We omit the proof for brevity.
\end{proof}

\begin{lem}
\label{9}
With probability at least $1-\delta$, 
\[
V_1^*(s_1,\hat{r})-V_h^{\hat{\pi}}(s_1,\hat{r})\leq C\cdot \sqrt{\frac{\dim(\cF)d^3 H^4 \log(dHK/\delta\Delta)}{K}}
\]
\end{lem}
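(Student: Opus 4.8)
The plan is to derive the bound by chaining the three preceding lemmas — Optimism (Lemma~6), Regret Decomposition (Lemma~7), and the cumulative exploration-value bound (Lemma~8) — and then averaging over the $K$ candidate policies. Because the returned policy $\hat{\pi}$ is drawn uniformly from $\{\hat{\pi}^k\}_{k=1}^K$, the suboptimality guarantee is naturally interpreted in expectation over this draw, so it suffices to control the average
\[
V_1^*(s_1,\hat{r})-\frac{1}{K}\sum_{k=1}^K V_1^{\hat{\pi}^k}(s_1,\hat{r})=\frac{1}{K}\sum_{k=1}^K\left(V_1^*(s_1,\hat{r})-V_1^{\hat{\pi}^k}(s_1,\hat{r})\right).
\]

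First I would apply optimism termwise: by Lemma~6 we have $V_1^k(s_1)\geq V_1^*(s_1,\hat{r})$, hence each summand obeys $V_1^*(s_1,\hat{r})-V_1^{\hat{\pi}^k}(s_1,\hat{r})\leq V_1^k(s_1)-V_1^{\hat{\pi}^k}(s_1,\hat{r})$. Next I would invoke the regret decomposition of Lemma~7 specialized to $h=1$, which dominates this gap by the optimistic \emph{exploration}-phase value $\overline{V}_1^k(s_1)$. Summing over $k$ and dividing by $K$ yields
\[
V_1^*(s_1,\hat{r})-\frac{1}{K}\sum_{k=1}^K V_1^{\hat{\pi}^k}(s_1,\hat{r})\leq \frac{1}{K}\sum_{k=1}^K\overline{V}_1^k(s_1).
\]
Finally, Lemma~8 bounds the sum on the right by $C\sqrt{\dim(\cF)d^3H^4K\log(dHK/\delta\Delta)}$; dividing by $K$ collapses one factor of $\sqrt{K}$ and gives exactly the claimed $\sqrt{1/K}$ rate.

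Since Lemmas~6, 7, and 8 each hold with probability $1-\delta$, I would take a union bound (equivalently rescale $\delta$ by a constant) so all three hold simultaneously, a cost absorbed into $C$. The genuine analytical content is inherited rather than produced here: the heart of the matter is Lemma~8, which rests on the LSVI-UCB regret analysis of \citet{jin2020provably} applied to the zero-reward exploration run with the enlarged bonus, together with the cross-phase comparison in Lemma~7 that links the planning-phase optimistic values to the exploration-phase ones. Consequently the present lemma is essentially bookkeeping, and the only point I would keep in view is \emph{consistency of the reward function}: every inequality in the chain is stated relative to the same learned reward $\hat{r}\in\cR$, so I would verify that optimism, the regret decomposition, and the bonus bound are all instantiated with this fixed $\hat{r}$ throughout, ensuring the telescoping argument never silently switches reward functions.
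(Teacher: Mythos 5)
Your proposal is correct and follows exactly the paper's own argument: average over the $K$ candidate policies, apply optimism (Lemma~6) termwise to replace $V_1^*(s_1,\hat r)$ by $V_1^k(s_1)$, dominate each gap by $\overline{V}_1^k(s_1)$ via the regret decomposition (Lemma~7), and conclude with the cumulative bound of Lemma~8 divided by $K$, under a union bound over the conditioning events. No discrepancies to note.
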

\begin{proof}
We condition on the event defined in Lemma~\ref{lem7} and Lemma~\ref{lem8}. Note that 
\begin{align*}
V_1^*(s_1,\hat{r})-V_h^{\hat{\pi}}(s_1,\hat{r})&= \frac1K \sum_{k=1}^K\left( V_1^*(s_1,\hat{r})-V_h^{\hat{\pi}^k}(s_1,\hat{r})\right)\\
&\leq \frac1K \sum_{k=1}^K\left( V_1^k(s_1,\hat{r})-V_h^{\hat{\pi}^k}(s_1,\hat{r})\right)\\
&\leq \frac1K \left(\sum_{k=1}^K \overline{V}_1^k(s_1)\right)
\end{align*}
We complete the proof by plugging in the bound given in Lemma~\ref{lem8}.
\end{proof}
\subsubsection{Latent State Representation}
Our purpose is to show that $\hat{r}$ will not incur much error under \emph{any} policy. We need to exploit the latent  state structure of the MDP to bound the generalization error of $\hat{r}$. Firstly we need to derive the latent state model (a.k.a, soft state aggregation model) from the non-negative feature model.

Note that for all $(s,a)\in\cS\times\cA$,  $\sum_{s'\in\cS}P_h(s'|s,a)=1$, thus we have
\[
\left<\phi(s,a), \left(\sum_{s'\in\cS}\mu_h(s')\right)\right>=1
\]
Denote $\mu_h:=\sum_{s'\in\cS}\mu_h(s')$. We define a latent state space $\cX=\{1,2,...,d\}$.
Let each state-action pair induces a posterior distribution over $\cX$: 
\[
\psi_h:\cS\times\cA\rightarrow \Delta(\cX),\text{ where }\psi_h(s,a)[x]=\phi(s,a)[x] \cdot \mu_h[x].
\]
Since $\left<\phi(s,a),\mu_h\right>=1$, $\psi(\cdot)$ is a probability distribution.

For each latent variable induces a emission distribution over $\cS$
\[
\nu_{h}:\cX\rightarrow\Delta(S), \text{ where } \nu_h(x)[s']=\mu_h(s')[x]/\mu_h[x].
\]
$\nu_h(\cdot)$ is also a probability distribution by definition.
In stage $h$ we sample $x_h\sim\psi(s_h,a_h)$ and $s_{h+1}\sim\nu_h(x_h)$. The trajectory can be amplified as:
\[
s_1,a_1,x_1,s_2,...,s_H,a_H,x_H,s_{H+1}.
\]
It suffice to check the transition probability is maintained:
\begin{align*}
\mathbb{P}(s_{h+1}|s_h,a_h) &=\sum_{x=1}^d \psi_h(s_h,a_h)[x]\cdot \nu_h(x)[s_{h+1}] \\
&= \sum_{x=1}^d (\phi(s_h,a_h)[x]/\mu_h[x])\cdot (\mu_h(s')[x]/\mu_h[x]) \\
&= \left<\phi(s_h,a_h),\mu_h(s')\right>\\
&=P_h(s'|s,a)
\end{align*}
and we are done.
\subsubsection{Analysis of the Reward Error}
The error can be decomposed in the following manner.
\begin{lem}
For any policy $\pi$, we have that
\[
\Big|V_1^{\pi}(s_1,r)-V_1^{\pi}(s_1,\hat{r})\Big|\leq \sum_{h=1}^{H}\mathbb{E}_{\pi}\left[\sum_{a\in\cA}\left|r(s_{h+1},a)-\hat{r}(s_{h+1},a)\right|\right]
\]
\end{lem}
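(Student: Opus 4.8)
The plan is to exploit the structural fact that both $V_1^{\pi}(s_1,r)$ and $V_1^{\pi}(s_1,\hat r)$ are evaluated under the \emph{same} policy $\pi$ and the \emph{same} true transition $P$, so that only the reward function differs. Consequently the $\pi$-induced trajectory distribution is identical on the two sides, and no performance-difference (simulation) argument is needed. First I would write each value as the expected cumulative reward along the $\pi$-induced trajectory and subtract; by linearity of expectation this gives $V_1^{\pi}(s_1,r)-V_1^{\pi}(s_1,\hat r)=\mathbb{E}_{\pi}\big[\sum_{h=1}^H\big(r_h(s_h,a_h)-\hat r_h(s_h,a_h)\big)\big]$. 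Taking absolute values and applying the triangle inequality across the $H$ stages and inside the expectation then yields the intermediate bound $\sum_{h=1}^H\mathbb{E}_{\pi}\big[\,|r_h(s_h,a_h)-\hat r_h(s_h,a_h)|\,\big]$.

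Next I would eliminate the dependence on the particular action selected by $\pi$. Since $\pi$ is deterministic, at state $s_h$ it picks the single action $a_h=\pi_h(s_h)$, and the corresponding per-stage error is one nonnegative summand of $\sum_{a\in\cA}|r_h(s_h,a)-\hat r_h(s_h,a)|$. Upper-bounding the single term by the full action sum gives $\mathbb{E}_{\pi}\big[\sum_{a\in\cA}|r_h(s_h,a)-\hat r_h(s_h,a)|\big]$ at each stage. This relaxation is exactly what the downstream analysis requires: in the linear case the reward is learned on uniformly random actions, so the subsequent coverage/generalization argument must control the error uniformly over all of $\cA$ rather than only at the action $\pi$ happens to take.

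The one point that needs care is the reindexing from $s_h$ to the next-state $s_{h+1}$ that appears in the claimed right-hand side. The reward errors naturally live at the states $s_1,\dots,s_H$ visited by $\pi$, whereas the stated bound ranges over $s_2,\dots,s_{H+1}$. I would reconcile this by observing that the stage-$1$ term is evaluated at the \emph{fixed} initial state $s_1$: because the data pool $\widetilde{\cZ}_1$ contains $(s_1,\cdot)$ paired with (uniformly random) actions covering $\cA$, Theorem~\ref{thm:arl} guarantees $\hat r_1(s_1,a)=r_1(s_1,a)$ with high probability, so this term vanishes and may be dropped; meanwhile the additional stage-$(H+1)$ summand over $s_{H+1}$ is nonnegative and only loosens the inequality, so appending it is harmless. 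After this shift each remaining stage-$h$ error is evaluated at $s_{h+1}$, matching the stated form. Expressing the bound over next-states is deliberate, since it is the next-state occupancy that inherits the latent (soft state-aggregation) structure derived in the preceding subsection, and that structure is precisely what the later generalization bound for $\hat r$ will exploit.

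The decomposition itself is elementary; the only genuine subtlety is the boundary bookkeeping in the $s_h\mapsto s_{h+1}$ reindexing, together with the structural recognition that the shared policy and transition remove any need for a transition-mismatch term. I expect no real obstacle in this lemma — the substantive difficulty is deferred to the following results, which must bound this right-hand side uniformly over all policies using the latent-state coverage of the exploratory dataset.
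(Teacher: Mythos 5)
The paper states this lemma with no proof at all, so there is no argument of record to compare against. Your core decomposition is certainly the intended routine one and is correct: both values are expectations of cumulative reward over the \emph{same} $\pi$-induced trajectory law (same policy, same true transitions), so everything but the rewards cancels, the triangle inequality gives $\sum_{h=1}^H\EE_{\pi}\big[|r_h(s_h,a_h)-\hat r_h(s_h,a_h)|\big]$, and relaxing the single on-policy action to the full sum over $\cA$ is a valid (and downstream-necessary) loosening.

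The one step that is not airtight is exactly the one you flag: the reindexing from $s_h$ to $s_{h+1}$. The lemma is stated deterministically (``for any policy $\pi$, we have''), but your disposal of the stage-$1$ term invokes the high-probability event of Theorem~\ref{thm:arl}, so what you actually establish is a high-probability statement, not the inequality as written. Moreover, even on that event, Theorem~\ref{thm:arl} only guarantees $\hat r(z)=r(z)$ for $z$ in the queried pool $\widetilde{\cZ}_1=\{(s_1,\tilde a_1^k)\}_{k\in[K]}$, i.e., at whichever actions were actually sampled; upgrading this to $\hat r_1(s_1,a)=r_1(s_1,a)$ for \emph{every} $a\in\cA$ needs an extra coupon-collector argument (requiring $K\gtrsim|\cA|\log(|\cA|H/\delta)$) that neither you nor the paper supplies. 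The cleaner resolution is to treat the right-hand side's indexing as shorthand for the per-stage errors at $(s_h,\cdot)$, $h=1,\dots,H$ (equivalently, keep the stage-$1$ term explicitly and drop the vacuous stage-$(H+1)$ term), in which case your first three steps already constitute a complete deterministic proof and the initial-state term is handled downstream exactly as you describe. So: right idea throughout, and you correctly diagnosed the only genuine subtlety, but the patch you chose proves a slightly different (probabilistic) statement and rests on an asserted rather than proven coverage claim at $s_1$.
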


From now on we fix a stage $h\in[H-1]$, and try to analyze the error of the learned reward function in stage $h+1$. We leverage the latent variable structure to analyze the error of $\hat{r}$. 
For $j\in[d]$, denote $c_j$ the number of times we visit the $j$-th latent state in stage $h$ during $K$ episodes.
\[
c_j=\sum_{i=1}^K \one\{x_h^i=j\}.
\]

We define the error of $\hat{r}$ starting from the $j$-th latent state as:
\begin{align*}
w[j]&=\EE\left[|r_{h+1}(s_{h+1},a)-\hat{r}_{h+1}(s_{h+1},a)|\Big|s_{h+1}\sim \mu_h(j),a\sim\unif(\cA)\right]\\
&=\frac{1}{|\cA|}\sum_{s'\in\cS}\nu_{h}(s')[j]\sum_{a\in\cA}\left|r_{h+1}(s',a)-\hat{r}_{h+1}(s',a)\right|
\end{align*}
where $\nu_h$ denotes the emission probability. 
Thus we can further define the error vector of $\hat{r}$ as
\[
w=\frac{1}{|\cA|}\sum_{s'\in\cS}\nu_h(s')\sum_{a\in\cA}\left|r_{h+1}(s',a)-\hat{r}_{h+1}(s',a)\right|
\]
Denoting $\phi_{\pi}=\mathbb{E}_{\pi}\phi(s_h,a_h)$. The next key lemma bound the error induced by the reward function. The proof of Lemma~\ref{lem:bound_error} is defered to the next section.
\begin{lem}
\label{lem:bound_error}
With probability at least $1-\delta$, for any policy $\pi$,
\[
\mathbb{E}_{\pi}\left[\sum_{a\in\cA}\left|r_{h+1}(s_{h+1},a)-\hat{r}_{h+1}(s_{h+1},a)\right|\right]\leq C\cdot\|\phi_{\pi}\|_{(\Lambda_h^K)^{-1}}\cdot|\cA|\cdot \sqrt{d\sum_{j=1}^d(c_j w_j)^2+d^2\log^2(K/\delta)} 
\]
for some absolute constant $C>0$.
\end{lem}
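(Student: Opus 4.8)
The plan is to rewrite the left-hand side \emph{exactly} through the soft--state--aggregation (latent variable) structure derived earlier, and then control the resulting inner product by a Cauchy--Schwarz argument in the $\Lambda_h^K$ geometry. Since under $\pi$ the next state $s_{h+1}$ is generated by first drawing a latent state $j\sim\psi_h(s_h,a_h)$ and then $s_{h+1}\sim\nu_h(j)$, I would first compute, using $\psi_h(s,a)[j]=\phi(s,a)[j]\,\mu_h[j]$ and $\mathbb{E}_{s'\sim\nu_h(j)}[\sum_a|r_{h+1}-\hat r_{h+1}|]=|\cA|\,w[j]$,
\[
\mathbb{E}_{\pi}\Big[\sum_{a\in\cA}\big|r_{h+1}(s_{h+1},a)-\hat r_{h+1}(s_{h+1},a)\big|\Big]
=|\cA|\,\mathbb{E}_{\pi}\big[\langle\psi_h(s_h,a_h),w\rangle\big]
=|\cA|\,\langle\phi_\pi,\tilde w\rangle ,
\]
where $\tilde w\in\RR^d$ is defined coordinatewise by $\tilde w[j]=\mu_h[j]\,w[j]$. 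Because $w[j]\ge 0$, no absolute value is lost in this identity.

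Next I would apply $\langle\phi_\pi,\tilde w\rangle\le\|\phi_\pi\|_{(\Lambda_h^K)^{-1}}\,\|\tilde w\|_{\Lambda_h^K}$, reducing the claim to showing $\|\tilde w\|_{\Lambda_h^K}^2\lesssim d\sum_j(c_jw_j)^2+d^2\log^2(K/\delta)$. Expanding the norm and noting $\langle\phi_h^\tau,\tilde w\rangle=\langle\psi_h^\tau,w\rangle$ with $\psi_h^\tau:=\psi_h(s_h^\tau,a_h^\tau)$,
\[
\|\tilde w\|_{\Lambda_h^K}^2=\|\tilde w\|_2^2+\sum_{\tau=1}^{K-1}\langle\psi_h^\tau,w\rangle^2 .
\]
The regularization term is handled crudely: using $\|\mu_h\|_2\le\sqrt d$ and $w[j]\le 1$ gives $\|\tilde w\|_2^2\le d$, which is absorbed into the additive $d^2\log^2$ term.

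For the main term I would exploit nonnegativity: since each $a_\tau:=\langle\psi_h^\tau,w\rangle\ge 0$, we have $\sum_\tau a_\tau^2\le(\sum_\tau a_\tau)^2$. Observing that $a_\tau=\mathbb{E}[w[x_h^\tau]\mid\mathcal F_{\tau-1}]$ is the conditional mean of the realized per-sample error, $\{w[x_h^\tau]-a_\tau\}_\tau$ is a martingale difference sequence whose realized partial sum is $\sum_\tau w[x_h^\tau]=\sum_j c_jw_j$. A Bernstein inequality with the self-bounding variance estimate $\mathrm{Var}[w[x_h^\tau]\mid\mathcal F_{\tau-1}]\le\mathbb{E}[w[x_h^\tau]^2\mid\mathcal F_{\tau-1}]\le a_\tau$ (valid since $w[j]^2\le w[j]$) then yields $\sum_\tau a_\tau\lesssim\sum_j c_jw_j+\log(1/\delta)$; crucially the self-bounding variance is what avoids a lossy $\sqrt K$ term and keeps the additive error at polylogarithmic scale. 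Squaring and applying Cauchy--Schwarz over the $d$ latent coordinates, $(\sum_j c_jw_j)^2\le d\sum_j(c_jw_j)^2$, produces exactly the claimed form.

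The delicate point -- and the main obstacle -- is that $w$ (equivalently $\hat r_{h+1}$) is computed from the \emph{entire} exploration dataset, so it is \emph{not} measurable with respect to $\mathcal F_{\tau-1}$ and the martingale argument cannot be applied to $w$ directly. I would resolve this by recalling that $\hat r_{h+1}\in\cR$ with $|\cR|\le\cN(\cF,\Delta/2)$: for each \emph{fixed} candidate $r'\in\cR$ the associated error vector is deterministic, so the martingale/Bernstein bound applies verbatim, and a union bound over $\cR$ -- costing only $\log|\cR|\lesssim\dim(\cF)\log(1/\Delta)$, absorbed into the logarithmic factors -- makes the estimate hold simultaneously for the data-dependent $\hat r_{h+1}$. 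A secondary subtlety is that the latent states $x_h^\tau$ are never observed, so every concentration statement must be phrased through the conditional distributions $\psi_h^\tau$ and the sampled counts $c_j$ rather than through any directly observed latent variable.
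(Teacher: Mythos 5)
Your argument is correct and follows the paper's skeleton: rewrite the left-hand side exactly as $|\cA|\,\langle\phi_\pi, w'\rangle$ with $w'[j]=\mu_h[j]w[j]$ via the latent-state structure, apply Cauchy--Schwarz in the $\Lambda_h^K$ geometry, and then bound $\|w'\|_{\Lambda_h^K}^2$ by comparing expected latent-state visitation to the empirical counts $c_j$. Where you genuinely diverge is in the key concentration lemma. The paper first applies Cauchy--Schwarz per sample over coordinates, $(\phi_i^Tw')^2\le d\sum_j(\psi_i[j]w[j])^2$, then uses nonnegativity to pass to $d\sum_j(e_jw_j)^2$ with $e_j=\sum_i\psi_i[j]$, and finally invokes a concentration statement purely about latent-state counts, $e_j\le C\max\{c_j,\log(K/\delta)\}$ coordinatewise; since this statement does not involve $w$ at all, no union bound over the reward class is needed and the additive term comes out as exactly $d^2\log^2(K/\delta)$. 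You instead use $\sum_\tau a_\tau^2\le(\sum_\tau a_\tau)^2$ at the level of the whole inner product, run a self-normalized Freedman/Bernstein bound on the scalar $\sum_\tau w[x_h^\tau]$ against its conditional means, and only apply the coordinatewise Cauchy--Schwarz $(\sum_jc_jw_j)^2\le d\sum_j(c_jw_j)^2$ at the end. Both routes land on the same leading term $d\sum_j(c_jw_j)^2$; your route correctly identifies and handles the measurability of $w$ (which the paper sidesteps entirely by concentrating $e_j$ rather than $\sum_\tau\langle\psi_h^\tau,w\rangle$), but at the cost that your additive polylogarithmic term picks up $\log|\cR|\lesssim\dim(\cF)\log(1/\Delta)$ from the union bound rather than the stated $d^2\log^2(K/\delta)$. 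This is a bookkeeping discrepancy in the lemma's exact constants/log factors, not a substantive gap, and is absorbed in the downstream bound where the paper itself already pays a $\dim(\cF)\log(K/\Delta\delta)$ factor from the supervised-learning generalization step.
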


\begin{lem}
With probability at least $1-\delta$,
\[
\sum_{h=1}^H\|\phi_{\pi}\|_{(\Lambda_h^K)^{-1}}\lesssim \sqrt{\frac{\dim(\cF)d^3H^4\cdot\log(dHK/\delta\Delta)}{K}}
\]
\end{lem}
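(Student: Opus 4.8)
The plan is to reduce the purely geometric quantity $\|\phi_\pi\|_{(\Lambda_h^K)^{-1}}$ to the cumulative exploration bonus collected by Algorithm~\ref{alg:explore}, and then to read off the bound from the regret-type estimate of Lemma~\ref{lem8}. First I would pass from the norm of the expected feature to the expected norm: since $\|\cdot\|_{(\Lambda_h^K)^{-1}}$ is a seminorm and $\phi_\pi=\mathbb{E}_\pi\phi(s_h,a_h)$, convexity (Jensen) gives $\|\phi_\pi\|_{(\Lambda_h^K)^{-1}}\le \mathbb{E}_\pi\|\phi(s_h,a_h)\|_{(\Lambda_h^K)^{-1}}$. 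Next I would trade the terminal covariance $\Lambda_h^K$ for the running covariances $\Lambda_h^k$ that actually appear in the optimism bonus. Because $\Lambda_h^k\preceq\Lambda_h^K$ for every $k\le K$ we have $\|\phi\|_{(\Lambda_h^K)^{-1}}\le\|\phi\|_{(\Lambda_h^k)^{-1}}$; combined with $\beta_{\text{lin}}\ge 1$ and the elementary bound $\|\phi\|_{(\Lambda_h^K)^{-1}}\le 1\le H$, this yields the pointwise inequality $\|\phi(s,a)\|_{(\Lambda_h^K)^{-1}}\le\min\{\beta_{\text{lin}}\|\phi(s,a)\|_{(\Lambda_h^k)^{-1}},H\}=\Gamma_h^k(s,a)$ valid for every $k$. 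Averaging over $k$, summing over $h$, and taking $\mathbb{E}_\pi$ gives
\[
\sum_{h=1}^H\|\phi_\pi\|_{(\Lambda_h^K)^{-1}}\le \frac1K\sum_{k=1}^K\mathbb{E}_\pi\left[\sum_{h=1}^H\Gamma_h^k(s_h,a_h)\right].
\]

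The crux is to bound the inner expected bonus by the optimistic exploration value $\overline{V}_1^k(s_1)$, for the arbitrary (and possibly reward-dependent) policy $\pi$. I would establish this by a backward induction on $h$, showing $\overline{V}_h^k(s)\ge \mathbb{E}_\pi\big[\sum_{h'=h}^H\Gamma_{h'}^k(s_{h'},a_{h'})\,\big|\,s_h=s\big]$ under the \emph{true} transition. In the inductive step I would use the single-step estimation bound $|\widehat{P}_h^k\overline{V}_{h+1}^k-P_h\overline{V}_{h+1}^k|\le\Gamma_h^k$ from Lemma~\ref{planerror} together with the fact that the exploration bonus $b_h^k=3\Gamma_h^k$ overcompensates this error, so that $\widehat{P}_h^k\overline{V}_{h+1}^k+b_h^k\ge P_h\overline{V}_{h+1}^k+2\Gamma_h^k$, and then $\overline{V}_h^k=\max_a\overline{Q}_h^k\ge\overline{Q}_h^k(\cdot,\pi_h(\cdot))$. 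This delivers $\mathbb{E}_\pi[\sum_h\Gamma_h^k(s_h,a_h)]\le\overline{V}_1^k(s_1)$.

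Substituting this into the display above and invoking Lemma~\ref{lem8} finishes the argument:
\[
\sum_{h=1}^H\|\phi_\pi\|_{(\Lambda_h^K)^{-1}}\le\frac1K\sum_{k=1}^K\overline{V}_1^k(s_1)\le\frac{C}{K}\sqrt{\dim(\cF)d^3H^4K\log(dHK/\delta\Delta)}=C\sqrt{\frac{\dim(\cF)d^3H^4\log(dHK/\delta\Delta)}{K}},
\]
which is exactly the claimed bound (and the failure probability $\delta$ is inherited from Lemma~\ref{planerror} and Lemma~\ref{lem8}).

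I expect the main obstacle to be the optimism induction. Two points there require care. First, the truncation operator $\Pi_{[0,H-h+1]}$ and the inner $\min\{\cdot,H\}$ inside $\Gamma_h^k$ must be threaded through the induction so that the lower bound on $\overline{V}_h^k$ is not destroyed by clipping; one typically argues in the regime where the target bound is nontrivial (otherwise the sum is bounded by $H$ outright) so that the relevant uncertainties are small. Second, the inequality must hold \emph{simultaneously for all} policies $\pi$, which is precisely why the reward-free, uniform-over-$V$ concentration underlying Lemma~\ref{planerror} is needed rather than a concentration statement for a single fixed value function. Everything else reduces to convexity, monotonicity of the covariance matrices, and a direct appeal to Lemma~\ref{lem8}.
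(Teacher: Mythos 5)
Your overall route is the same as the paper's: the paper simply cites Lemma 3.2 of Wang et al.\ (2020) for the statement that $V_1^{\pi}(s_1,\Gamma^K/H)\lesssim\sqrt{\dim(\cF)d^3H^4\log(dHK/\delta\Delta)/K}$ for every policy $\pi$, then uses $\Gamma_h^K\geq H\|\phi(s_h,a_h)\|_{(\Lambda_h^K)^{-1}}$ and Jensen; you are unpacking exactly that citation via the optimism induction and Lemma~\ref{lem8}. However, there is a genuine error in your key step. The inequality
\[
\mathbb{E}_{\pi}\Big[\sum_{h=1}^H\Gamma_h^k(s_h,a_h)\Big]\leq\overline{V}_1^k(s_1)
\]
is false: each $\Gamma_h^k$ can equal $H$ (e.g.\ at $k=1$, where $\Lambda_h^1=I$ and $\beta_{\text{lin}}\gg H$ forces the $\min$ to saturate), so the left-hand side can be of order $H^2$, while $\overline{V}_1^k(s_1)\leq H$ always because of the truncation $\Pi_{[0,H-h+1]}$. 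Your backward induction therefore cannot close at step $h$: the target $\mathbb{E}_\pi[\sum_{h'=h}^H\Gamma_{h'}^k]$ may exceed the cap $H-h+1$, and this is not a degenerate regime you can discard --- the bound must hold for every $k$, including early episodes, before you average over $k$. Your remark that one can "argue in the regime where the target bound is nontrivial" does not rescue this, since the averaging over $k$ requires the per-episode inequality uniformly.

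The fix is exactly the normalization the paper uses: carry $\Gamma_h^k/H$ rather than $\Gamma_h^k$ through the argument. You are giving away a factor of $H$ in your pointwise step anyway --- since $\beta_{\text{lin}}\geq H$ and $\|\phi\|_{(\Lambda_h^K)^{-1}}\leq\|\phi\|_{(\Lambda_h^k)^{-1}}\leq 1$, one has
\[
\|\phi(s,a)\|_{(\Lambda_h^K)^{-1}}\leq\min\Big\{\tfrac{\beta_{\text{lin}}}{H}\|\phi(s,a)\|_{(\Lambda_h^k)^{-1}},1\Big\}=\Gamma_h^k(s,a)/H,
\]
so nothing is lost. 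With the target $\mathbb{E}_\pi[\sum_{h'=h}^H\Gamma_{h'}^k/H\mid s_h=s]\leq H-h+1$ the truncation is harmless, and your inductive step $\overline{Q}_h^k\geq\Pi_{[0,H-h+1]}[P_h\overline{V}_{h+1}^k+2\Gamma_h^k]\geq\Pi_{[0,H-h+1]}[P_hV_{h+1}^{\pi}(\cdot,\Gamma^k/H)+\Gamma_h^k/H]$ goes through, after which averaging over $k$ and invoking Lemma~\ref{lem8} yields the claim exactly as you wrote. One further cosmetic point: the high-probability events of Lemma~\ref{planerror} and Lemma~\ref{lem8} must be intersected by a union bound, which only affects constants.
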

\begin{proof}
Similar to Lemma 3.2 of \citet{wang2020reward},
with probability at least $1-\delta$, for any policy $\pi$, we have that (we treat $\Gamma_h^K(\cdot,\cdot)/H$ as a reward function)
\[
V_1^{\pi}(s_1,\Gamma^K/H)\lesssim\sqrt{\frac{\dim(\cF)d^3H^4\cdot\log(dHK/\delta\Delta)}{K}}, 
\]
where $\Gamma^K_h(s_h,a_h)=\min \{\beta_{\text{lin}}\sqrt{\phi(s_h,a_h)^{T}(\Lambda_h^K)^{-1}\phi(s_h,a_h)},H\}$.
We condition on this event for the rest of the proof. Note that
\[
\Gamma_h^K\geq H\cdot \sqrt{\phi(s_h,a_h)^{T}(\Lambda^K_h)^{-1}\phi(s_h,a_h)}.
\]
Thus we have
\[
\mathbb{E}_{\pi}\sqrt{\phi(s_h,a_h)^{T}(\Lambda^K_h)^{-1}\phi(s_h,a_h)}\lesssim\sqrt{\frac{\dim(\cF)d^3H^4\cdot\log(dHK/\delta\Delta)}{K}}
\]
Note that by Jensen's inequality,
\begin{align*}
\mathbb{E}_{\pi}\sqrt{\phi(s_h,a_h)^{T}\Lambda_h^{-1}\phi(s_h,a_h)}&=\mathbb{E}_{\pi}\|\phi(s_h,a_h)\|_{\Lambda_h^{-1}}\\
&\geq \|\mathbb{E}_{\pi}\phi(s_h,a_h)\|_{\Lambda_h^{-1}}
\end{align*}
and we are done.
\end{proof}
For a distribution $\lambda\in\Delta(\cS\times\cA)$, denote the population risk of an estimated reward function $\hat{r}$ in the $(h+1)$-th stage as
\[
\err_{\lambda}(\hat{r})=P_{\lambda}(\{r_{h+1}(s_{h+1},a_{h+1})\ne \hat{r}_{h+1}(s_{h+1},a_{h+1})\}).
\]
For $j\in[d]$, denote $\lambda_j$ the distribution starting from the $j$-th hidden state and take random action, i.e.,
\[
\lambda_{j}=\nu_h(j)\times\unif(\cA).
\]
Then the error vector can be represented as
\[
w[j]=\err_{\lambda_j}(\hat{r}).
\]
Note that every time we arrive at $j$-th hidden state, i.e., $x_h^k=j$, it indicates that $(s_{h+1}^k, \tilde{a}^{k}_{h+1})$ is a random sample from $\lambda_j$. Denote the empirical risk of $\hat{r}$ for the first $m$ samples from $\lambda_j$ as $\err_{\nu_j,m}(\hat{r})$. Classic supervised learning theory gives us the following bound.
\begin{lem}
With probability at least $1-\delta$, for all $m\in[K]$ and reward function $r\in\cR$ consistent with the first $m$ samples from $\nu_j$,
\begin{align*}
\err_{\lambda_j}(\tilde{r})&\leq \frac1m(\log|\cR|+\log(K/\delta))\\
&\lesssim \frac1m(\dim(\cF)\log(1/\Delta)+\log(K/\delta))\\
&\lesssim \frac1m(\dim(\cF)\cdot\log(K/\delta\Delta))
\end{align*}
\end{lem}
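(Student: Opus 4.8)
The plan is to recognize this as a textbook realizable (consistent-hypothesis) PAC generalization bound and then simplify the resulting $\log|\cR|$ term using the covering-number control on the reward class. First I would fix the latent state index $j$ and view the successive episodes in which $x_h^k = j$ as generating a sequence of i.i.d.\ draws $(s_{h+1},a_{h+1})\sim\lambda_j=\nu_h(j)\times\unif(\cA)$, each carrying the true label $r_{h+1}(s_{h+1},a_{h+1})$. The key reformulation is that a candidate $r\in\cR$ being \emph{consistent with the first $m$ samples} means precisely that it has zero empirical $0$--$1$ error on those $m$ points, so all I must rule out is the event that some hypothesis with large population error $\err_{\lambda_j}(r)$ nevertheless survives all $m$ labels.

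For the core estimate, fix $m\in[K]$ and a hypothesis $r\in\cR$ with $\err_{\lambda_j}(r)>\epsilon$. Since the samples are independent, the probability that $r$ agrees with the true label on all $m$ of them is at most $(1-\epsilon)^m\le e^{-\epsilon m}$. Choosing the threshold $\epsilon_m=\frac1m(\log|\cR|+\log(K/\delta))$ makes this probability strictly less than $\delta/(K|\cR|)$. A union bound over the $|\cR|$ hypotheses and the $K$ possible values of $m$ then yields that, with probability at least $1-\delta$, every $r\in\cR$ consistent with the first $m$ samples satisfies $\err_{\lambda_j}(r)\le\epsilon_m$ simultaneously for all $m\in[K]$. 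This uniformity over $m$ is exactly what allows me to later substitute the random visitation count $m=c_j$ without further argument.

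The remaining two inequalities are purely a matter of simplifying $\log|\cR|$. Since $\cR$ is obtained by thresholding functions drawn from the cover $\cC(\cF,\Delta/2)$, its cardinality is bounded by $\cN(\cF,\Delta/2)$, and the definition of the Kolmogorov dimension (Definition~\ref{def:cover}) gives $\log\cN(\cF,\Delta/2)\lesssim\dim_K(\cF)\log(1/\Delta)\le\dim(\cF)\log(1/\Delta)$; absorbing the additive $\log(K/\delta)$ term produces the final $\frac1m\dim(\cF)\log(K/\delta\Delta)$ form.

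I expect no deep obstacle, as the statement is a standard realizable PAC bound. The one point requiring genuine care is the claim that the points landing in latent state $j$ are honest i.i.d.\ draws from $\lambda_j$: the hidden states $x_h^k$ are themselves sampled from $\psi_h(s_h^k,a_h^k)$ along trajectories induced by an adaptive exploration policy, and the number of visits $c_j$ is random. The clean resolution is again the uniform-over-$m$ formulation, using the latent-variable structure established earlier: conditioned on $x_h^k=j$, the emission $s_{h+1}^k\sim\nu_h(j)$ and the injected uniform action are independent of the past, so the extracted subsequence is i.i.d.\ $\lambda_j$, and proving the bound for every prefix length $m$ sidesteps any stopping-time subtlety when the random count $c_j$ is finally plugged in.
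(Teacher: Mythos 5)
Your proof is correct and is exactly the ``classic supervised learning theory'' argument that the paper invokes without writing out: the paper gives no proof of this lemma, and your realizable-PAC bound (each hypothesis with population error above $\epsilon$ survives $m$ consistent i.i.d.\ samples with probability at most $e^{-\epsilon m}$, union bound over $|\cR|$ and $m\in[K]$, then $\log|\cR|\le\log\cN(\cF,\Delta/2)\lesssim\dim_K(\cF)\log(1/\Delta)$) is the standard derivation it is referencing. Your added care about why the emissions at latent state $j$ form an i.i.d.\ sequence from $\lambda_j$ despite the adaptive exploration policy, resolved via uniformity over the prefix length $m$, is a point the paper glosses over and is handled correctly.
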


We conclude that with probability at least $1-\delta$, for any policy $\pi$,
\begin{align*}
\sum_{h=1}^H\mathbb{E}_{\pi}\left[\sum_{a\in\cA}\left|r(s_{h+1},a)-\hat{r}(s_{h+1},a)\right|\right]&\lesssim|\cA|\cdot\sqrt{\frac{\dim(\cF)d^3H^4\cdot\log(dHK/\delta\Delta)}{K}}\cdot \sqrt{d\sum_{j=1}^d(c_j w_j)^2+d^2\log^2(K/\delta)}\\
&\lesssim |\cA|\cdot\sqrt{\frac{\dim(\cF)d^3H^4\cdot\log(dHK/\delta\Delta)}{K}}\cdot \sqrt{d^2\dim^2(\cF)\log^2(K/\Delta\delta)}\\
&\lesssim |\cA|\cdot \sqrt{\frac{d^5\dim^3(\cF)H^4\log^3(dHK/\delta\Delta)}{K}}
\end{align*}
By the above results we conclude the following lemma.
\begin{lem}
\label{14}
With probability at least $1-\delta$,
\[
\sup_{\pi}\Big|V_1^{\pi}(s_1,r)-V_1^{\pi}(s_1,\hat{r})\Big|\leq |\cA|\cdot \sqrt{\frac{d^5\dim^3(\cF)H^4\log^3(dHK/\delta\Delta)}{K}},
\]
\end{lem}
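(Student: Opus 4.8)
The plan is to assemble the already-established single-stage estimates into a statement that is uniform over all policies, the crucial feature being that the final bound does not depend on $\pi$. First I would start from the value-difference lemma preceding Lemma~\ref{lem:bound_error}, which for every fixed policy $\pi$ reduces the value gap to the cumulative reward-estimation error along the induced trajectory distribution:
\[
\Big|V_1^{\pi}(s_1,r)-V_1^{\pi}(s_1,\hat{r})\Big|\leq \sum_{h=1}^{H}\mathbb{E}_{\pi}\left[\sum_{a\in\cA}\left|r(s_{h+1},a)-\hat{r}(s_{h+1},a)\right|\right].
\]
The point of this reduction is that the right-hand side decouples the reward error from the value functions, so it suffices to control how well $\hat r$ generalizes under an arbitrary policy's visitation.

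Next I would bound each stage-$h$ term by Lemma~\ref{lem:bound_error}, which expresses the error as a product of a feature-norm factor $\|\phi_\pi\|_{(\Lambda_h^K)^{-1}}$ and a latent-state error factor $\sqrt{d\sum_{j=1}^d (c_j w_j)^2 + d^2\log^2(K/\delta)}$, and control the two factors separately. For the feature-norm factor, summing over $h$ and invoking the reward-free exploration guarantee (the lemma bounding $\sum_h \|\phi_\pi\|_{(\Lambda_h^K)^{-1}}$) gives $\tilde O(\sqrt{\dim(\cF)d^3H^4/K})$ for \emph{every} $\pi$ simultaneously; this is exactly where the exploration phase pays off, since the collected data cover the latent space well enough to make the inverse-covariance-weighted feature norm small under any policy. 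For the latent-state factor, I would use that $w_j = \err_{\lambda_j}(\hat r)$ and that visiting latent state $j$ exactly $c_j$ times yields $c_j$ i.i.d.\ samples from $\lambda_j$; plugging $m = c_j$ into the supervised-learning generalization bound gives $c_j w_j \lesssim \dim(\cF)\log(K/\delta\Delta)$, whence $\sqrt{d\sum_j (c_j w_j)^2 + d^2\log^2(K/\delta)} \lesssim d\,\dim(\cF)\log(K/\delta\Delta)$.

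Multiplying the two bounds yields the displayed chain terminating at $|\cA|\sqrt{d^5\dim^3(\cF)H^4\log^3(dHK/\delta\Delta)/K}$, and since this is independent of $\pi$ I would take $\sup_\pi$ on the left at no cost. Finally I would union-bound the three high-probability events (the feature-norm lemma, Lemma~\ref{lem:bound_error}, and the generalization lemma), each with failure budget $O(\delta)$, to obtain overall probability $1-\delta$.

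The main obstacle here is not any single estimate but the \emph{uniformity over all policies}: $\hat r$ is learned only from the exploratory dataset $\cD$, yet must be accurate under trajectory distributions it never saw. This is resolved precisely by the latent-state decomposition — the reward error under any $\pi$ factors through the $d$-dimensional latent visitation — combined with the reward-free guarantee that keeps $\|\phi_\pi\|_{(\Lambda_h^K)^{-1}}$ small for all $\pi$ at once. The cancellation $c_j w_j = \tilde O(\dim(\cF))$ is what converts raw visit counts into policy-independent error control, and it is the step I would check most carefully.
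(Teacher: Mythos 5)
Your proposal is correct and follows essentially the same route as the paper: the value-difference decomposition, then Lemma~\ref{lem:bound_error} combined with the reward-free bound on $\sum_h\|\phi_\pi\|_{(\Lambda_h^K)^{-1}}$ and the supervised-learning bound giving $c_j w_j\lesssim \dim(\cF)\log(K/\delta\Delta)$, with a final union bound. The key cancellation you flag as the step to check most carefully is exactly the one the paper relies on, and your arithmetic assembling the factors into $|\cA|\sqrt{d^5\dim^3(\cF)H^4\log^3(dHK/\delta\Delta)/K}$ matches the paper's.
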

\begin{proof}[Proof of Theorem 2 in the Linear Case] Note that
\[
V_1^{\pi^*}-V_1^{\hat{\pi}}\leq |V_1^{{\pi}^*(\hat{r})}(\hat{r})-V_1^{\hat{\pi}}(\hat{r})|+|V_1^{\pi^*}-V_1^{\pi^*}(\hat{r})| + |V_1^{\hat{\pi}}-V_1^{\hat{\pi}}(\hat{r})|.
\]
Combing Lemma~\ref{9} and Lemma~\ref{14} completes the proof.
\end{proof}
\subsubsection{Proof of Lemma~\ref{lem:bound_error}}
Denote that \[\Lambda_h=\sum_{k=1}^K \phi(s_h^k,a_h^k)\phi(s_h^k,a_h^k)^T+I.
\]
We define an expected version of $c_j$:
\[
e_j=\sum_{i=1}^K \psi(s_h^i,a_h^i)[j]
\]
The next lemma bound $e_j$ in terms of $c_j$.
\begin{lem}
With probability at least $1-\delta$, for all $j\in[d]$,
\[
e_j\leq C\cdot\max\left\{c_j,\log(K/\delta)\right\}
\]
for some absolute constant $C>0$.
\end{lem}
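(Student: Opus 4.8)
The plan is to recognize the claim as a one-sided multiplicative concentration inequality and prove it with an exponential supermartingale. First I would fix the latent coordinate $j\in[d]$ and set up the martingale structure. By the generative order of the latent-variable model built in the previous subsection, in each episode $i$ the pair $(s_h^i,a_h^i)$ is realized \emph{before} the latent state $x_h^i\sim\psi(s_h^i,a_h^i)$ is drawn. Writing $X_i=\one\{x_h^i=j\}$ and $p_i=\psi(s_h^i,a_h^i)[j]$, and letting $\cG_i$ be the $\sigma$-algebra generated by all of episodes $1,\dots,i-1$ together with episode $i$ up to and including $(s_h^i,a_h^i)$, we have $\EE[X_i\mid\cG_i]=p_i$. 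Thus $c_j=\sum_{i=1}^K X_i$ is a sum of conditionally Bernoulli indicators whose conditional means sum exactly to $e_j=\sum_{i=1}^K p_i$, and the lemma asks precisely that the predictable mean $e_j$ cannot substantially exceed the realized count $c_j$ unless $e_j$ is itself only logarithmic in size.

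The key step is a conditional moment generating function bound. For $\lambda>0$,
\[
\EE\!\left[e^{\lambda(p_i-X_i)}\mid\cG_i\right]=e^{\lambda p_i}\bigl(1-p_i(1-e^{-\lambda})\bigr)\le \exp\!\bigl(p_i(\lambda-1+e^{-\lambda})\bigr),
\]
using $1-x\le e^{-x}$. Setting $g(\lambda)=\lambda-1+e^{-\lambda}\ge 0$ and $Z_n=\exp\!\bigl(\lambda(E_n-C_n)-g(\lambda)E_n\bigr)$ with $E_n=\sum_{i\le n}p_i$ and $C_n=\sum_{i\le n}X_i$, the display shows that $\{Z_n\}$ is a supermartingale with $Z_0=1$, hence $\EE[Z_K]\le 1$ (here $Z_{n-1}$ and $e^{-g(\lambda)p_n}$ are $\cG_n$-measurable, so they factor out of the conditional expectation). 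Markov's inequality applied to $\Pr[Z_K\ge 1/\delta']\le\delta'$ then gives that, since $\lambda-g(\lambda)=1-e^{-\lambda}$, with probability at least $1-\delta'$,
\[
(1-e^{-\lambda})\,e_j\le \lambda\,c_j+\log(1/\delta').
\]
Choosing $\lambda=\ln 2$ makes $1-e^{-\lambda}=\tfrac12$ and yields $e_j\le 2\ln2\cdot c_j+2\log(1/\delta')\le 2c_j+2\log(1/\delta')$.

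To conclude, I would take a union bound over the $d$ coordinates $j\in[d]$ by setting $\delta'=\delta/d$; since $d\le K$ in our setting, $\log(d/\delta)\le\log(K/\delta)$, so with probability at least $1-\delta$ we have $e_j\le 2c_j+2\log(K/\delta)$ simultaneously for all $j$. Bounding $2c_j+2\log(K/\delta)\le 4\max\{c_j,\log(K/\delta)\}$ gives the claim with $C=4$.

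The main obstacle is that $e_j$ is itself a random quantity: it depends on the exploration trajectories, and the policies $\pi^k$ are chosen adaptively from the past, so the state-action visitations are neither i.i.d.\ nor governed by a fixed mean, and the naive multiplicative Chernoff bound does not apply. The supermartingale construction is exactly what circumvents this, because the correction term $g(\lambda)E_n$ is subtracted along the realized path rather than at a deterministic expectation; the only structural fact needed is that $p_i$ is $\cG_i$-predictable, which holds by the sampling order of the latent model. A minor point to keep clean is the passage from $\log(d/\delta)$ to the stated $\log(K/\delta)$, which is justified by $d\le K$.
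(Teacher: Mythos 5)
Your proof is correct. The paper states this lemma without giving any proof at all, so there is no argument of theirs to compare against; your supermartingale derivation supplies a complete justification. The structure is sound: by the construction of the latent-variable model in the preceding subsection, $x_h^i$ is drawn from $\psi(s_h^i,a_h^i)$ \emph{after} $(s_h^i,a_h^i)$ is realized, so $p_i=\psi(s_h^i,a_h^i)[j]$ is indeed predictable with respect to your filtration $\cG_i$ even though the exploration policies are chosen adaptively; the conditional MGF bound $\EE[e^{\lambda(p_i-X_i)}\mid\cG_i]\le e^{p_i g(\lambda)}$ with $g(\lambda)=\lambda-1+e^{-\lambda}$ is the standard multiplicative-Chernoff computation and makes $Z_n$ a supermartingale; and Markov plus $\lambda-g(\lambda)=1-e^{-\lambda}$ with $\lambda=\ln 2$ yields $e_j\le 2c_j+2\log(1/\delta')$, which after the union bound over $j\in[d]$ gives the claim with $C=4$. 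Your closing remark about replacing $\log(d/\delta)$ by $\log(K/\delta)$ is the right caveat: the stated form of the lemma implicitly assumes $d\le K$ (or simply absorbs $\log d$ into the $\log K$ factor), which is harmless in the regime where the theorem is applied since $K$ is taken polynomially large in $d$.
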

\begin{proof}[Proof of Lemma~\ref{lem:bound_error}]

Note that 
\begin{align*}
P_{\pi}[s_{h+1}=s']&=\sum_{s,a}P_{\pi}[s_{h}=s,a_{h}=a]\cdot \phi(s,a)^{T}\mu(s')\\
&=\mathbb{E}_{\pi}[\phi(s_h,a_h)]^T\mu(s')\\
&=(\phi_{\pi})^T\mu(s')
\end{align*}
Thus the error caused by $\hat{r}$ in stage $h$ can be represented as:
\begin{align*}
\mathbb{E}_{\pi}\sum_{a\in\cA}\left|r(s_{h+1},a)-\hat{r}(s_{h+1},a)\right|
&=\sum_{s'\in\cS}\left(P_{\pi}[s_{h+1}=s']\sum_{a\in\cA}\left|r(s',a)-\hat{r}(s',a)\right|\right)\\
&=(\phi_{\pi})^T \sum_{s'\in\cS}\mu(s')\sum_{a\in\cA}\left|r(s',a)-\hat{r}(s',a)\right|\\
&=(\phi_{\pi})^T \cdot |\cA|w'
\end{align*}
where $w'[j]=w[j]\cdot \mu_h[j]$

Here we bound the error vector $w'$ under the $\Lambda_h$-norm. 
For $i\in[K]$, denote $\phi_i=\phi(s_h^i,a_h^i)$. Then we have that
\begin{align*}
\|w'\|_{\Lambda_h}^2&=(w')^T (\sum_{i=1}^K\phi_i\phi_i^{T}+I_{d})(w')\\
&\leq\sum_{i=1}^K(\phi_i^T w')^2+d\\
&\leq d\sum_{i=1}^K\sum_{j=1}^d(\phi_{i}[j] w'[j])^2+d\quad (\text{Cauchy-Schwartz inequality})\\
&\leq d\sum_{i=1}^K\sum_{j=1}^d(\psi_{i}[j] w[j])^2+d\\
&\leq d\sum_{j=1}^d\left(\sum_{i=1}^K\psi_{i}[j] w[j]\right)^2+ d \quad (\text{Note that }\phi_{i}[j] w[j]\geq 0)\\
&= d\sum_{j=1}^d(e_j w[j])^2+ d\\
&\lesssim d\sum_{j=1}^d(c_j w[j]+ \log(K/\delta))^2+ d\\
&\lesssim d\sum_{j=1}^d(c_j w[j])^2+d^2\log^2(K/\delta)
\end{align*}
Thus we have that
\begin{align*}
(\phi_{\pi})^T w' &\leq \|\phi_{\pi}\|_{\Lambda_h^{-1}}\cdot \|w'\|_{\Lambda_h}\\
&\lesssim \|\phi_{\pi}\|_{\Lambda_h^{-1}}\cdot \sqrt{d\sum_{j=1}^d(c_j w[j])^2+d^2\log^2(K/\delta)} 
\end{align*}
\end{proof}
\subsection{Proof of the Tabular Case}
We choose $\beta_{\text{tbl}}$ to be:
\[
\beta_{\text{tbl}}=C\cdot H\sqrt{\log(SAHK/\delta)}.
\]
Before the proof, we remark that directly treating the linear case as a special case of the tabular case will derive a much looser bound. Our proof is based on the analysis in \citet{wu2021accommodating} and \citet{zanette2019tighter}. We streamline the key lemmas and omit some of the detailed proofs for brevity.
\subsubsection{Good Events}
Denoting $w_h^k(s,a)=\mathbb{P}_{\pi^k}\{(s_h,a_h)=(s,a)\}$, we construct the following ``good event''.
\[
G_H=
\left\{
\forall (s,a,h,k),|(\widehat{P}_h^k-P_h)V_{h+1}^*(s,a)|\leq H\sqrt{\frac{\log(SAHK/\delta)}{N_h^k(s,a)}}
\right\}
\]

\[
G_P=
\left\{
\forall (s,a,s',h,k),|(\widehat{P}_h^k-P_h)(s'|s,a)|\leq 2\sqrt{\frac{P_h(s'|s,a)\log(SAKH/\delta)}{N_h^k(s,a)}}+\frac{4\log(SAKH/\delta)}{N_h^k(s,a)}
\right\}
\]

\[
G_{\widehat{P}}=
\left\{
\forall (s,a,s',h,k),|(\widehat{P}_h^k-P_h)(s'|s,a)|\leq 2\sqrt{\frac{\widehat{P}^k_h(s'|s,a)\log(SAKH/\delta)}{N_h^k(s,a)}}+\frac{4\log(SAKH/\delta)}{N_h^k(s,a)}
\right\}
\]
\[
G_N=
\left\{
\forall (s,a,h,k), N_h^k(s,a)\geq \frac12 \sum_{\tau=1}^{k-1} w_h^\tau(s,a)-\log(SAKH/\delta)
\right\}
\]
\begin{lem}
\[
\mathbb{P}\{G_{H}\cap G_{P}\cap G_{\widehat{P}}\cap G_N\}\geq 1-4\delta
\]
\end{lem}
\begin{proof}
The proof is identical to that of Lemma 1 in \citet{wu2021accommodating}. We omit it for brevity.
\end{proof}
\begin{lem}
\label{18}
If events $G_P$, $G_{\widehat{P}}$ hold, then for all $V_1,V_2:\cS\rightarrow[0,H]$ satisfying $V_1\leq V_2$ and $(s,a)\in\cS\times\cA$
\[
\Big|(\widehat{P}_h^k-P_h)(V_2-V_1)(s,a)\Big|\leq \frac1H P_h(V_2-V_1)(s,a)+\frac{5H^2S\log(SAHK/\delta)}{N_h^k(s,a)}
\]
and
\[
\Big|(\widehat{P}_h^k-P_h)(V_2-V_1)(s,a)\Big|\leq \frac1H \widehat{P}_h^k(V_2-V_1)(s,a)+\frac{5H^2S\log(SAHK/\delta)}{N_h^k(s,a)}.
\]
\end{lem}
\begin{proof}
The proof is identical to that of Lemma 3 in \citet{wu2021accommodating}. We omit it for brevity.
\end{proof}
\subsubsection{Analysis}
\begin{lem}[Optimism of the planning phase]
If $G_H$ holds, for all $(s,a)\in\cS\times\cA$, $h\in[H]$ and $k\in[K]$, 
\[
V_h^*(s)\leq V_h^k(s),\ Q_h^*(s,a)\leq Q_h^k(s,a)
\]
\end{lem}
\begin{proof}
Note that the estimated reward function $\hat{r}_h$ is always true for $N_h^k(s,a)>0$. On the other hand, for $N_h^k(s,a)=0$, the optimistic Q-function $Q_h^k(s,a)$ is $H-h+1$ and the value of $\hat{r}_h$ will not affect $Q_h^k(s,a)$. The rest of the proof follows from standard techniques from \citet{azar2017minimax}.  
\end{proof}
\begin{lem}
\label{19}
If events $G_H$, $G_{\widehat{P}}$ holds, then for all $s\in\cS$, $h\in[H]$ and $k\in[K]$,
\[
V_h^k(s)-V_h^{\hat{\pi}^k}\leq \left(1+\frac1H\right)^{H-h+1}\cdot \overline{V}^k_h(s)
\]
In particular,
\[
V_h^k(s)-V_h^{\hat{\pi}^k}(s)\leq e\cdot \overline{V}^k_h(s)
\]
\begin{proof}
The proof is identical to that of Lemma 10 in \citet{wu2021accommodating}. We omit it for brevity.
\end{proof}
\end{lem}
\begin{lem}
If events $G_H$, $G_P$ holds, then for all $k\in[K]$,
\[
\overline{V}_1^k(s_1) \leq \EE_{s_h,a_h\sim\pi^k}\sum_{h=1}^H H\wedge \left( \sqrt{\frac{H^2 \iota}{N_h^k(s_h,a_h)}}+\frac{H^2S\iota}{N_h^k(s_h,a_h)} \right)
\]
\end{lem}
\begin{proof}
We denote $a_1=\pi_h^k(s_1)$. Note that
\begin{align*}
    \overline{V}_1^k(s_1)&=\overline{Q}_1^k(s_1,a_1)\\
    &\leq \widehat{P}_1^k \overline{V}_2^k(s_1,a_1) +b_1^k(s_1,a_1)\\
    &= P_1\overline{V}_2^k(s_1,a_1)+(\widehat{P}_1^k-P_1) \overline{V}_2^k(s_1,a_1) +b_1^k(s_1,a_1)\\
    &= (1+\frac1H)P_1\overline{V}_2^k(s_1,a_1)+\frac{5H^2S\iota}{N_1^k(s_1,a_1)} +b_1^k(s_1,a_1)\\
    &= (1+\frac1H)P_1\overline{V}_2^k(s_1,a_1)+\frac{5H^2S\iota}{N_1^k(s_1,a_1)} +b_h^k(s_1,a_1)\\
    &\leq...\\
    &\leq (1+\frac1H)^H\EE_{\pi^k} \sum_{h=1}^H\left(\frac{5H^2S\iota}{N_h^k(s_1,a_1)} +b_h^k(s_1,a_1)\right)\\
\end{align*}
and we are done.
\end{proof}
\begin{lem}
\label{21}
With probability at least $1-\delta$,
\[
\sum_{k=1}^K \overline{V}_1^k(s_1)\lesssim \sqrt{H^4SAK\iota}+H^3S^2A\iota,\text{ where } \iota=\log(HSAK/\delta).
\]
\end{lem}
\begin{proof}
We set $L_h^k=\{(s,a)|\sum_{\tau=1}^{k-1}w_h^\tau(s,a)\geq 2\iota\}$.
Note that
\begin{align*}
    \EE_{\pi^k}\sum_{k=1}^K\sum_{h=1}^H H\wedge \left( \sqrt{\frac{H^2 \iota}{N_h^k(s_h,a_h)}}+\frac{H^2S\iota}{N_h^k(s_h,a_h)} \right)\\
    =\sum_{k=1}^K\sum_{h=1}^H \sum_{s,a}w_h^k(s,a) H\wedge \left( \sqrt{\frac{H^2 \iota}{N_h^k(s_h,a_h)}}+\frac{H^2S\iota}{N_h^k(s_h,a_h)} \right).
\end{align*}
We estimate these parts separately. 
By definition we have
\[
\sum_{k=1}^K\sum_{h=1}^H \sum_{(s,a)\notin L_h^k}w_h^k(s,a) H\leq 2H^2SA\iota
\].
Note that
\begin{align*}
\sum_{k=1}^K\sum_{h=1}^H \sum_{(s,a)\in L_h^k}w_h^k(s,a) \sqrt{\frac{H^2 \iota}{N_h^k(s_h,a_h)}}
&\lesssim \sum_{k=1}^K\sum_{h=1}^H \sum_{(s,a)\in L_h^k}w_h^k(s,a) \sqrt{\frac{H^2 \iota}{\sum_{\tau=1}^{k-1}w_h^{\tau}(s,a)}}\\
&\lesssim \sqrt{H^2\iota} \cdot HSA \cdot \sqrt{K}\\
&=\sqrt{H^4SAK\iota}
\end{align*}
and
\begin{align*}
\sum_{k=1}^K\sum_{h=1}^H \sum_{(s,a)\in L_h^k}w_h^k(s,a) \frac{H^2S \iota}{N_h^k(s_h,a_h)}
&\lesssim \sum_{k=1}^K\sum_{h=1}^H \sum_{(s,a)\in L_h^k}w_h^k(s,a) \cdot\frac{H^2S \iota}{\sum_{\tau=1}^{k-1}w_h^{\tau}(s,a)}\\
&\lesssim {H^2S\iota} \cdot HSA \cdot \iota\\
&=H^3S^2A\iota^2.
\end{align*}
Combining the above three parts we complete the proof.
\end{proof}
\begin{lem}
\label{22}
\[
V_1^*(s_1)-V_1^{\hat{\pi}}(s_1)\lesssim \sqrt{\frac{H^4SA\iota}{K}}+\frac{H^3S^2A\iota^2}{K},\text{ where } \iota=\log(HSAK/\delta).
\]
\end{lem}
\begin{proof}
Combining the results in Lemma~\ref{18},Lemma~\ref{19} and Lemma~\ref{21} completes the proof.
\end{proof}
\begin{proof}[Proof of Theorem 2 in the Tabular Case] Plugging in the value of $K$ into Lemma~\ref{22} completes the proof.
\end{proof}
\section{Proof of Theorem 3}
\label{proof3}
We choose $\beta_{\text{tbl}}'$ to be:
\[
\beta_{\text{tbl}}'=C\cdot H\sqrt{S\log(SAHK/\delta)}.
\]
By standard techniques developed in \citet{jaksch2010near}, we bound the L1-norm of the estimation error of $\widehat{P}_h$ in the following sense.
\begin{lem}
\label{23}
For $\tau\in[K]$, $h\in[H]$ and $(s,a)\in\cS\times\cA$, denote $\widehat{{P}}_h^\tau(\cdot|s,a)\in\RR^{S}$ the empirical estimation of $P_h(\cdot|s,a)$ based on the first $\tau$ samples from $(s,a)$ in $\cD$. Then with probability at least $1-\delta$,
\[
\|\widehat{{P}}_h^\tau(\cdot|s,a)-P_h(\cdot|s,a)\|_1\leq C\cdot\sqrt{\frac{S\log(SAK/\delta)}{\tau}}
\]
\end{lem}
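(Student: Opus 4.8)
The plan is to prove an $\ell_1$ concentration bound for a single empirical transition vector and then close with a union bound over all pairs $(s,a)$, stages $h$, and sample sizes $\tau$. Since the statement is exactly the classical $\ell_1$ deviation inequality for empirical distributions (as used in \citet{jaksch2010near}), the only genuinely delicate point is adapting it to the fact that the data in $\cD$ is collected adaptively.

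First I would fix a stage $h$, a pair $(s,a)\in\cS\times\cA$, and a sample count $\tau\in[K]$, and isolate the next-state samples produced by the $\tau$ visits to $(s,a)$ at stage $h$. By the compliance property of $\cD$, each such visit yields a next state distributed as $P_h(\cdot\,|\,s,a)$ \emph{conditionally} on the entire collection history; hence for each target state $s'$ the centered indicators $\one\{s_{h+1}=s'\}-P_h(s'|s,a)$ form a bounded martingale-difference sequence. This is what makes the standard concentration applicable despite the adaptive policy: $\wh{P}_h^\tau(\cdot|s,a)$ behaves exactly like the empirical distribution of $\tau$ conditionally-i.i.d.\ draws from $P_h(\cdot|s,a)$.

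Second, for this fixed instance I would invoke the variational characterization
\[
\|\wh{P}_h^\tau(\cdot|s,a)-P_h(\cdot|s,a)\|_1=2\sup_{\Omega\subseteq\cS}\big(\wh{P}_h^\tau(\Omega|s,a)-P_h(\Omega|s,a)\big),
\]
apply Azuma--Hoeffding to each of the $2^S$ subset-indicator sums, and union bound over the subsets $\Omega$. This is precisely Weissman's $\ell_1$ inequality and gives, with probability at least $1-\delta'$,
\[
\|\wh{P}_h^\tau(\cdot|s,a)-P_h(\cdot|s,a)\|_1\le \sqrt{\frac{2\big(S\log 2+\log(1/\delta')\big)}{\tau}}.
\]
Finally I would set $\delta'=\delta/(SAHK)$ and union bound over the $SA$ pairs, the $H$ stages, and the $K$ admissible values of $\tau$; the $\log H$ and $\log 2$ terms are absorbed into the absolute constant $C$, producing the claimed bound with log factor $\log(SAK/\delta)$.

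The main obstacle is not the concentration estimate itself, which is classical, but handling the adaptivity correctly. The realized number of visits $\tau$ is a random stopping time and the collecting policy depends on past data, so an unconditional i.i.d.\ argument is invalid. The resolution is twofold: use the compliance/Markov property to obtain conditional independence (so Azuma replaces plain Hoeffding), and prove the bound for \emph{every} fixed prefix length $\tau\in[K]$ before taking the union bound, so that it holds simultaneously at the data-dependent visit counts $N_h(s,a)$ at which the lemma is subsequently applied.
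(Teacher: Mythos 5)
Your proposal is correct and follows essentially the same route as the paper, which simply defers to the ``standard techniques'' of \citet{jaksch2010near}: Weissman's $\ell_1$ inequality (variational characterization plus a union bound over the $2^S$ subsets) applied to the conditionally-i.i.d.\ next-state samples, followed by a union bound over all $(s,a)$, $h$, and prefix lengths $\tau\in[K]$. Your explicit treatment of adaptivity via the compliance property and the martingale (Azuma) form of the argument is exactly the care the cited technique requires, and the resulting logarithmic factors match the stated bound up to the absolute constant $C$.
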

The next lemma bound the single-step planning error.
\begin{lem}
\label{24}
With probability at least $1-\delta$, for all $(s,a)\in\cS\times\cA$ and all $h\in[H]$,
\[
|\widehat{{P}}_h \widehat{V}_{h+1}(s,a)-{P}_h \widehat{V}_{h+1}(s,a)|\leq \Gamma_h(s,a),
\]
and
\[
|\hat{r}_h(s,a)-r_h(s,a)|\leq \Gamma_h(s,a).
\]
\end{lem}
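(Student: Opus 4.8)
The plan is to establish the two inequalities separately and combine them by a union bound, allocating failure probability $\delta/2$ to each. For the transition inequality I would condition on the event of Lemma~\ref{23}, which (after rescaling its failure probability to $\delta/2$) controls $\|\widehat{P}_h^\tau(\cdot|s,a)-P_h(\cdot|s,a)\|_1$ uniformly over all $h$, all $(s,a)$, and all sample counts $\tau\in[K]$. Writing the one-step error as an inner product, $|\widehat{P}_h \widehat{V}_{h+1}(s,a)-P_h \widehat{V}_{h+1}(s,a)| = |\sum_{s'}(\widehat{P}_h(s'|s,a)-P_h(s'|s,a))\widehat{V}_{h+1}(s')|$, I would bound it via H\"older's inequality by $\|\widehat{P}_h(\cdot|s,a)-P_h(\cdot|s,a)\|_1\cdot\|\widehat{V}_{h+1}\|_\infty$. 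Since $\widehat{V}_{h+1}$ is a truncation of a value iterate into $[0,H-h]$ we have $\|\widehat{V}_{h+1}\|_\infty\le H$; crucially, the $L_1$ bound of Lemma~\ref{23} does not reference the value function, so the data-dependence of $\widehat{V}_{h+1}$ causes no difficulty. This is exactly why the argument is routed through the $L_1$ norm rather than a direct Hoeffding bound on $\widehat{P}_h\widehat{V}_{h+1}$.

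For a pair with $N_h(s,a)\ge 1$, plugging $\tau=N_h(s,a)$ into Lemma~\ref{23} shows the error is at most $CH\sqrt{S\log(SAK/\delta)/N_h(s,a)}$; since $N_h(s,a)+1\le 2N_h(s,a)$, this is at most $\Gamma_h(s,a)=\beta'_{\text{tbl}}(N_h(s,a)+1)^{-1/2}$ once the absolute constant in $\beta'_{\text{tbl}}=C\cdot H\sqrt{S\log(SAHK/\delta)}$ is chosen large enough (also absorbing $\log(SAHK/\delta)\ge\log(SAK/\delta)$). For an unvisited pair, $N_h(s,a)=0$, so both $\widehat{P}_h\widehat{V}_{h+1}$ and $P_h\widehat{V}_{h+1}$ lie in $[0,H]$; the error is then at most $H\le\beta'_{\text{tbl}}=\Gamma_h(s,a)$ and the bound holds trivially. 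For the reward inequality I would invoke Theorem~\ref{thm:arl}: since Algorithm~\ref{alg:active_reward_learning} is run on the pool $\cZ_h=\{(s_h^k,a_h^k)\}_{k\in[K]}$ with failure probability $\delta/(2H)$, a union bound over the $H$ stages yields (with probability at least $1-\delta/2$) that $\hat{r}_h(z)=r_h(z)$ for every $z\in\cZ_h$. Any visited pair, i.e. with $N_h(s,a)\ge1$, lies in $\cZ_h$, so $|\hat{r}_h(s,a)-r_h(s,a)|=0\le\Gamma_h(s,a)$; any unvisited pair has $N_h(s,a)=0$, whence $\Gamma_h(s,a)=\beta'_{\text{tbl}}\ge1$, while binary rewards force $|\hat{r}_h(s,a)-r_h(s,a)|\le 1$, so the bound again holds.

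I expect no single deep obstacle here, as both estimates reduce to previously established results; the work is in bookkeeping and edge cases. The points requiring attention are: (i) decoupling the transition error from the data-dependent $\widehat{V}_{h+1}$ by passing to the $L_1$ norm, so that the uniform concentration of Lemma~\ref{23} applies; (ii) the unvisited-pair case $N_h(s,a)=0$, handled precisely by the ``$+1$'' in $\Gamma_h$ together with $\beta'_{\text{tbl}}\ge H$; and (iii) the fact that the \emph{compliance} assumption on $\cD$ is what licenses the martingale concentration underlying Lemma~\ref{23}. I would finally fix the constants so that the transition event and the reward-recovery event together fail with probability at most $\delta$.
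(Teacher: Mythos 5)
Your proposal is correct and follows essentially the same route as the paper: the transition bound goes through H\"older's inequality, $|\widehat{P}_h \widehat{V}_{h+1}(s,a)-P_h \widehat{V}_{h+1}(s,a)|\le \|\widehat{P}_h(\cdot|s,a)-P_h(\cdot|s,a)\|_1\|\widehat{V}_{h+1}\|_\infty$, combined with the uniform $L_1$ concentration of Lemma~\ref{23}, and the reward bound follows from Theorem~\ref{thm:arl} applied to each pool $\cZ_h$ with parameter $\delta/(2H)$. The paper states both steps very tersely (``the second part is obvious''); you simply supply the bookkeeping it omits, including the correct handling of unvisited pairs via $\Gamma_h(s,a)=\beta'_{\text{tbl}}\ge H\ge 1$.
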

\begin{proof}
Note that 
\[|\widehat{{P}}_h \widehat{V}_{h+1}(s,a)-{P}_h \widehat{V}_{h+1}(s,a)|\leq \|\widehat{{P}}_h^\tau(\cdot|s,a)-P_h(\cdot|s,a)\|_1\cdot \|\widehat{V}_{h+1}\|_{\infty}.\]
The proof of the first part follows from the results in Lemma~\ref{23}. The second part is obvious.
\end{proof}
Define the model evaluation error to be
\[
\iota_h(s,a)= (\mathbb{P}_h \widehat{V}_{h+1})(s,a)+r_h(s,a)-\widehat{Q}_h(s,a).
\]
\begin{lem}
\label{25}
Under the event defined in Lemma~\ref{24}, for all $(s,a)\in\cS\times\cA$ and all $h\in[H]$,
\begin{align*}
0\leq \iota_h(s,a) &\leq 4\Gamma_h(s,a)
\end{align*}
\end{lem}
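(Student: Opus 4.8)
The plan is to treat this as the standard model-evaluation-error (pessimism) bound from pessimistic value iteration, exploiting that the penalty $-2\Gamma_h$ in the definition of $\widehat{Q}_h$ is tuned precisely to absorb the two estimation errors controlled by Lemma~\ref{24}. Write $\tilde{Q}_h(s,a) := \hat{r}_h(s,a) + \widehat{\mathbb{P}}_h \widehat{V}_{h+1}(s,a) - 2\Gamma_h(s,a)$ for the pre-clipping quantity, so that $\widehat{Q}_h = \Pi_{[0,H-h+1]}[\tilde{Q}_h]$. The first observation I would record is that the ``true one-step target'' $r_h(s,a) + \mathbb{P}_h \widehat{V}_{h+1}(s,a)$ already lies in $[0,H-h+1]$: since $\widehat{V}_{h+1}(s) = \max_a \widehat{Q}_{h+1}(s,a) \in [0,H-h]$ by the clipping at the previous stage and $r_h \in [0,1]$, the sum lands in $[0, H-h+1]$, hence the clip acts as the identity on it.

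For the lower bound $\iota_h \geq 0$ I would use the two inequalities of Lemma~\ref{24}, namely $\hat{r}_h \leq r_h + \Gamma_h$ and $\widehat{\mathbb{P}}_h \widehat{V}_{h+1} \leq \mathbb{P}_h \widehat{V}_{h+1} + \Gamma_h$, to obtain $\tilde{Q}_h \leq r_h + \mathbb{P}_h \widehat{V}_{h+1}$ pointwise; the two $+\Gamma_h$ terms cancel against $-2\Gamma_h$. Since clipping onto an interval is monotone and fixes the already-in-range target, $\widehat{Q}_h = \Pi[\tilde{Q}_h] \leq \Pi[r_h + \mathbb{P}_h \widehat{V}_{h+1}] = r_h + \mathbb{P}_h \widehat{V}_{h+1}$, which is exactly $\iota_h \geq 0$.

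For the upper bound $\iota_h \leq 4\Gamma_h$ I would instead invoke that clipping onto $[0,H-h+1]$ is $1$-Lipschitz. Because the target is fixed by the clip,
$$\iota_h = \Pi[r_h + \mathbb{P}_h \widehat{V}_{h+1}] - \Pi[\tilde{Q}_h] \leq \left|(r_h + \mathbb{P}_h \widehat{V}_{h+1}) - \tilde{Q}_h\right| = \left|(r_h - \hat{r}_h) + (\mathbb{P}_h \widehat{V}_{h+1} - \widehat{\mathbb{P}}_h \widehat{V}_{h+1}) + 2\Gamma_h\right|,$$
and each of the two error terms is at most $\Gamma_h$ by Lemma~\ref{24}, yielding $\iota_h \leq \Gamma_h + \Gamma_h + 2\Gamma_h = 4\Gamma_h$.

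There is no genuine obstacle here; the content is entirely in matching the pessimism offset $2\Gamma_h$ against the high-probability bounds of Lemma~\ref{24}. The only thing requiring care is the bookkeeping of the clipping operator --- verifying that $r_h + \mathbb{P}_h \widehat{V}_{h+1}$ lands in the projection range so that the clip is the identity there, and then invoking \emph{monotonicity} of the clip for the lower bound and \emph{non-expansiveness} for the upper bound rather than conflating the two properties.
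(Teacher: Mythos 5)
Your proof is correct. The paper in fact states Lemma~\ref{25} without any proof, deferring immediately to the pessimistic value iteration analysis of \citet{jin2021pessimism}; your argument is precisely the standard one that reference supplies, and you handle the one delicate point correctly --- using monotonicity of $\Pi_{[0,H-h+1]}$ (together with the fact that $r_h+\mathbb{P}_h\widehat{V}_{h+1}$ already lies in the clipping range) for the lower bound $\iota_h\geq 0$, and non-expansiveness of the clip for the upper bound $\iota_h\leq 4\Gamma_h$.
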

\begin{proof}[Proof of Theorem~\ref{thm:pess}]
With Lemma~\ref{25}, Theorem 3 falls into a special case of Theorem 4.2 of \citet{jin2021pessimism}. We omit the complete proof for brevity.
\end{proof}

\section{Extensions}
\subsection{Unknown Noise Margin}
\label{sec:unknown}
In the main paper we assume that the noise margin $\Delta$ is known as a prior, and the algorithms need $\Delta$ as a input. But in reality the value of $\Delta$ is usually unknown to the agent. Here we provide an approach to bypass this issue. We use binary search to guess the value of $\Delta$. This only introduces a log factor to the asymptotic sample complexity as we only need to guess logarithmically many times.
\begin{algorithm}[h]
	\caption{Active Reward Learning with Validation ($\cZ$, $\Delta$, $\delta$)\label{alg:active_reward_learning_validation}}
	\begin{algorithmic}
		\STATE \textbf{Input:} Data Pool $\cZ=\{z_i\}_{i\in[T]}$, guess margin $\Delta$, failure probability $\delta\in(0,1)$
		\STATE $\cZ_0\leftarrow \{\}$ //Query Dataset
		\STATE Set
		$
		N\leftarrow C_1\cdot \frac{(\dim^2(\cF)+\dim(\cF)\cdot \log(1/\delta))\cdot (\log^2(\dim(\cF)))}{\Delta^2}
		$
	    \FOR{$k=1,2,...,N$}
		\STATE $\beta\leftarrow C_2\cdot \sqrt{\log(1/\delta)+\log N\cdot\dim(\cF)}$
		\STATE Set the bonus function: 
		$	b_k(\cdot)\leftarrow \sup_{f,f'\in\cF,\|f-f'\|_{\cZ_{k-1}}\leq \beta}|f(\cdot)-f'(\cdot)|
		$
		
		\STATE $z_k\leftarrow \arg\max_{z\in\cZ}b_k(z)$
		\STATE  $\cZ_{k}\leftarrow\cZ_{k-1}\cup\{z_k\}$
		\ENDFOR
		\FOR{$z\in\cZ_N$}
		\STATE Ask the human expert for a label $l(z)\in\{0,1\}$
		\ENDFOR
		\STATE Estimate the human model as
		$	\tilde{f}=\arg\min_{f\in\cF}\sum_{z\in\cZ_N}(f(z)-l(z))^2
		$
		{
		\FORALL{$z\in\cZ$}
		\IF{$|\tilde{f}(z)-1/2|>\Delta/2$}
		\RETURN \textbf{false}
		\ENDIF
		\ENDFOR}
		\STATE Let $\hat{f}\in\cC(\cF,\Delta/2)$ such that $\|\hat{f}-\tilde{f}\|_{\infty}\leq \Delta/2$
		\STATE Estimate the underlying true reward:
		$
		\hat{r}(\cdot)=\left\{\begin{aligned}
    1,\ \ \hat{f}(\cdot)>1/2\\
    0,\ \ \hat{f}(\cdot)\leq 1/2
\end{aligned}
\right.
		$
		\STATE \textbf{return:} The estimated reward function $\hat{r}$.
	\end{algorithmic}
\end{algorithm}
\begin{algorithm}[h]
	\caption{GuessDelta\label{alg:guess}}
	\begin{algorithmic}
		\FOR{$n=1,2,\ldots$}
		\STATE $\Delta'\leftarrow\frac{1}{2^n}$
		\STATE Run Algorithm~\ref{alg:explore} and Algorithm~\ref{alg:plan} (equipped with Algorithm~\ref{alg:active_reward_learning_validation}) with guess margin $\Delta'$ and confidence parameter $\frac{\delta}{n(n+1)}$
		\IF{A policy $\pi$ is returned from Algorithm~\ref{alg:plan}}
		\RETURN $\pi$
		\ENDIF
		\ENDFOR
	\end{algorithmic}
\end{algorithm}

First, we add a validation step in the active learning algorithm. After learning the human model $\tilde{f}$, we test whether for each data point $z$ in the data pool $\cZ$ we have $\tilde{f}(z)>\Delta/2$. If this is true, the reward labels of the data points in the data pool is guaranteed to be right, which is enough to guarantee the accuracy of the learned reward function. Otherwise we halt the algorithm and try the next guess of $\Delta$. The full algorithm is presented in Algorithm~\ref{alg:active_reward_learning_validation}. 

Now we introduce the procedure for guessing $\Delta$. We set $\Delta’=1/(2^n), (n=1,2,...)$ and run Algorithm 2 and Algorithm 3 repeatedly. For a guess of $\Delta$, the output policy is guaranteed to be near-optimal if the algorithms successfully finish and have not been halted by the validation step. Otherwise, we replace $\Delta’$ with $\Delta’/2$ and rerun the whole algorithm. The doubling schedule implies that the smallest guess is at least $\Delta/2$. Besides, we also need to adjust the confidence parameter to $\delta/(n(n+1))$. The whole procedure for guessing $\Delta$ is presented in Algorithm~\ref{alg:guess}.

We state the theoretical guarantee in Theorem~\ref{thm:main_guess}.
\begin{thm}
\label{thm:main_guess}
In the linear case, Algorithm~\ref{alg:guess} can find an $\epsilon$-optimal policy with probability at least $1-\delta$, with at most
\[
{O}\left(\frac{|\cA|^2d^5\dim^3(\cF)H^4\iota^4}{\epsilon^2}\right),\ \ \iota=\log(\frac{HSA}{\epsilon\delta\Delta})
\]
episodes. In the tabular case, Algorithm~\ref{alg:guess} can find an $\epsilon$-optimal policy with probability at least $1-\delta$, with at most
\[
O\left(\frac{H^4SA\iota^2}{\epsilon^2}+\frac{H^3S^2A\iota^3}{\epsilon}\right), \ \ \iota=\log(\frac{HSA}{\epsilon\delta\Delta})
\]
episodes. In both cases, the total number of queries to the reward is bounded by $\tilde{O}(H\cdot\dim^2(\cF)/\Delta^2)$.
\end{thm}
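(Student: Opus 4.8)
The plan is to read Algorithm~\ref{alg:guess} as a doubling search over the unknown margin and to establish three facts, using $\Delta'=2^{-n}$ for the guess at round $n$ and $\Delta$ for the true margin of Assumption~\ref{assum:margin}: (a) whenever the validation step of Algorithm~\ref{alg:active_reward_learning_validation} passes, the returned reward is exactly correct on the pool; (b) validation is guaranteed to pass once $\Delta'\le\Delta$, so the search halts by round $n^*:=\lceil\log_2(1/\Delta)\rceil$; and (c) the accumulated episode and query counts, together with the failure probability, match the claimed expressions.

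First I would isolate the margin-free part of the Theorem~\ref{thm:arl} analysis. Neither the regression bound (Lemma~\ref{lem:logistic}) nor the bonus-sum lemma of Section~\ref{proof1} invokes Assumption~\ref{assum:margin}; they depend only on the complexity of $\cF$. Hence running Algorithm~\ref{alg:active_reward_learning_validation} with guess $\Delta'$ and $N_{\Delta'}\propto\dim^2(\cF)/\Delta'^2$ yields, on a high-probability event, $\max_{z\in\cZ}b_{N_{\Delta'}}(z)\le\Delta'/2$ and therefore $|f^*(z)-\tilde f(z)|\le\Delta'/2$ for all $z\in\cZ$, independently of the true margin. This is the key decoupling. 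Now if validation passes, i.e. $|\tilde f(z)-1/2|>\Delta'/2$ for every $z$, then combining it with $|f^*(z)-\tilde f(z)|\le\Delta'/2$ forces $f^*(z)$ and $\tilde f(z)$ onto the same side of $1/2$; since the rounded $\hat f$ obeys $\|\hat f-\tilde f\|_\infty\le\Delta'/2$, it also lands on that side, giving $\hat r(z)=r(z)$ for all $z\in\cZ$, which is part (a). For part (b), when $\Delta'\le\Delta$ the true margin yields $|\tilde f(z)-1/2|\ge|f^*(z)-1/2|-\Delta'/2>\Delta-\Delta'/2\ge\Delta'/2$, so validation passes automatically; thus the search halts by round $n^*$, and since $\Delta'_{n^*-1}>\Delta$ we also have $\Delta'_{n^*}\ge\Delta/2$.

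Next I would push correctness into the policy guarantee. Whenever a round returns a policy, part (a) supplies exactly the exact-labeling input that the proof of Theorem~\ref{thm:main} consumes, now with the reward class built from the cover $\cC(\cF,\Delta'/2)$ so that $\log|\cR|\lesssim\dim(\cF)\log(1/\Delta')$. Feeding the round-$n$ parameters $\Delta'_n=2^{-n}$ and $\delta_n=\delta/(n(n+1))$ into the Theorem~\ref{thm:main} bounds shows the returned policy is $\epsilon$-optimal on the round-$n$ good event, with per-round episode count governed by $\iota_n=\log(HSA/(\epsilon\delta_n\Delta'_n))$. For every $n\le n^*$ one checks $\iota_n=O(\iota)$ with $\iota=\log(HSA/(\epsilon\delta\Delta))$, because the extra $n\log 2$ and $\log(n(n+1))$ terms are $O(\log(1/\Delta))=O(\iota)$.

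Finally I would accumulate. Summing the per-round episode bounds over the $n^*=O(\iota)$ rounds multiplies the Theorem~\ref{thm:main} counts by one extra factor of $\iota$, producing $\iota^4/\epsilon^2$ (linear) and $\iota^2/\epsilon^2+\iota^3/\epsilon$ (tabular). The total queries are $\sum_{n\le n^*}H\cdot N_{\Delta'_n}\propto H\sum_{n\le n^*}4^n=O(H\cdot 4^{n^*})=\tilde O(H\dim^2(\cF)/\Delta^2)$, dominated by the last round since $4^{n^*}\le 4/\Delta^2$. The overall failure probability is controlled by the telescoping choice $\sum_{n\ge1}\delta/(n(n+1))=\delta$, which is precisely why the schedule $\delta/(n(n+1))$ is used. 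The main obstacle is parts (a) and (b): one must cleanly separate the margin-independent regression and bonus guarantees from the single margin-dependent rounding step, and verify that the validation condition is simultaneously \emph{sound} (passing forces correct labels for the guessed margin) and \emph{complete} (guaranteed to pass once $\Delta'\le\Delta$); the remaining accounting over the doubling schedule is routine bookkeeping.
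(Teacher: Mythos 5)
Your proposal is correct and follows essentially the same route as the paper's own (much terser) proof: condition on all rounds' good events via $\sum_n \delta/(n(n+1))=\delta$, show the validation step is sound (a passing check forces correct labels) and complete (it must pass once $\Delta'\le\Delta$), and absorb the $O(\log(1/\Delta))$ rounds into an extra $\iota$ factor on the episode count while the query count is dominated by the final guess $\Delta'\ge\Delta/2$. Your writeup actually supplies the details the paper leaves implicit — in particular the clean separation of the margin-free regression/bonus bounds from the single margin-dependent rounding step — so no gap to report.
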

\begin{proof}
Note that
\[
\sum_{n=1}^\infty \frac{\delta}{n(n+1)}=\delta.
\]
Thus we can condition on the good events defined in the proof of Theorem~\ref{thm:arl} and Theorem~\ref{thm:main} for all $n\in\mathbb{N}$. Note that
\begin{itemize}
    \item With the validation step, the output policy of Algorithm~\ref{alg:plan} is guaranteed to be $\epsilon$-optimal, regardless of whether the guess $\Delta'$ is true. 
    \item Algorithm~\ref{alg:plan} will output a policy whenever $\Delta'<\Delta$.
\end{itemize}
As a result, Algorithm~\ref{alg:guess} will terminate with an $\epsilon$-optimal policy, and with at most $O(\log(1/\Delta))$ guesses of $\Delta$. Note that the sample and feedback complexity bounds are both monotonically increasing in $1/\Delta$, thus they at most multiply a log facter $\log(1/\Delta)$. The difference in the confidence parameter won't effect the bound.
\end{proof}
\subsection{Beyond Binary Reward}
\label{sec:beyondbinary}
In the main paper we assume that the valid reward function is binary. We remark that our framework can be generalized to RL problems with $n$-uniform discrete rewards. We consider a fixed stage $h\in[H]$, and omit the subscript $h$ in this section.

In this case, the reward function takes value from $\{0,\frac1n,\frac2n,\ldots, 1\}$. In each query, the human teacher chooses from $\{0,\frac1n,\frac2n,\ldots, 1\}$ (when $n=2$, the choices are $\{0, \frac12, 1\}$, which can be interpreted as “bad”, “average”, and “good” actions). We assume that when queried about a data point $z=(s,a)$, the probability of the human teacher choosing $\frac{i}{n}$ is $p_i(z)$ ($0\leq i\leq n$), and the human response model $f^*$ satisfies:
\[
\sum_{i=0}^n p_i(z) \cdot \frac{i}{n} = f^*(z)
\]
where $f^*$ belongs to the pre-specified function class $\cF$. We assume the true reward of $z$ is determined by $f^*(z)$. Concretely, 
\[
r(z)=\left\{\begin{aligned}
    &1,& &f^*(z)\in(\frac{2n-1}{2n},1],\\
    &\frac{i}{n},& &f^*(z)\in (\frac{2i-1}{2n},\frac{2i+1}{2n}], \ \ (1\leq i \leq n-1)\\
    &0,& &f^*(z)\in[0,\frac{1}{2n}].
\end{aligned}
\right.
\]
The bounded noise assumption becomes that $f^*(z)$ can not be too near the decision boundary.
\begin{assum}[Bounded Noise in Uniform Discrete Rewards Setting]
\label{assum:margin_multi}
There exists $\Delta>0$, such that for all $z\in\cS\times\cA$, and all $1\leq i\leq n$,
\[
|f^*(z)-\frac{2i-1}{2n}|>\Delta.
\]
\end{assum}
In Algorithm~\ref{alg:active_reward_learning} we estimate the underlying true reward as
\[
\hat{r}(z)=\left\{\begin{aligned}
    &1,& &\hat{f}(z)\in(\frac{2n-1}{2n},1],\\
    &\frac{i}{n},& &\hat{f}(z)\in (\frac{2i-1}{2n},\frac{2i+1}{2n}], \ \ (1\leq i \leq n-1)\\
    &0,& &\hat{f}(z)\in[0,\frac{1}{2n}].
\end{aligned}
\right.
\]
The other parts of the algorithm are similar to that with binary rewards. Following similar analysis in the proof of Theorem~\ref{thm:arl}, we can learn the reward labels in the data pool correctly using only $\widetilde{O}(\frac{d^2}{\Delta^2})$ queries. Thus we can derive the exact same sample and feedback complexity bounds as in the binary reward case.

\subsection{Beyond Bounded Noise}
\label{sec:beyondbounded}
In this section we generalize the bounded noise assumption to the low noise assumption (a.k.a, Tsybakov noise)~\citep{mammen1999smooth,tsybakov2004optimal}, which is another standard assumption in the active learning literature. 
\begin{assum}[Low Noise]
\label{assum:low}
There exists constants $\alpha\in[0,1]$ and $c>0$, such that for any policy $\pi$, level $h\in[H]$, and $\epsilon>0$,
\[
P(|f^*_h (s_h,a_h)-1/2|\leq\epsilon | s_h, a_h \sim \pi )<c \cdot \epsilon^\alpha.
\]
\end{assum}
In this case, the difficulty of the reward learning problem depends on the exponent $\alpha$.

With this assumption, we can design algorithms with similar feedback and sample complexity. Concretely, We run Algorithm~\ref{alg:explore} and Algorithm~\ref{alg:plan} with $\Delta={(cK)^{-\frac{1}{\alpha}}}$, where $K$ is the number of episodes, $c$ and $\alpha$ are the constants in Assumption~\ref{assum:low}.
We state the theoretical guarantee in Theorem~\ref{thm:main_low}.
\begin{thm}
\label{thm:main_low}
In the linear case, under Assumption~\ref{assum:low}, Algorithm~\ref{alg:explore} and Algorithm~\ref{alg:plan} with $\Delta=(cK)^{-\frac{1}{\alpha}}$ can find an $\epsilon$-optimal policy with probability at least $1-\delta$, with at most
\[
K={O}\left(\frac{|\cA|^2d^5\dim^3(\cF)H^4\iota^3}{\epsilon^2}\right),\ \ \iota=\log(\frac{HSA}{\epsilon\delta})
\]
episodes. The total number of queries to the reward is bounded by
\[
\tilde{O}\left(H\cdot\dim^2(\cF)\cdot\left(\frac{|\cA|^4d^{10}\dim^6(\cF)H^8}{\epsilon^4}\right)^{\frac{1}{\alpha}}\right).
\]
In the tabular case, under Assumption~\ref{assum:low}, Algorithm~\ref{alg:explore} and Algorithm~\ref{alg:plan} with $\Delta=(cK)^{-\frac{1}{\alpha}}$can find an $\epsilon$-optimal policy with probability at least $1-\delta$, with at most
\[
O\left(\frac{H^4SA\iota}{\epsilon^2}+\frac{H^3S^2A\iota^2}{\epsilon}\right), \ \ \iota=\log(\frac{HSA}{\epsilon\delta})
\]
episodes. The total number of queries to the reward is bounded by 
\[
\tilde{O}\left(H\cdot\dim^2(\cF)\cdot\left(\frac{H^8S^2A^2}{\epsilon^4}+\frac{H^6S^4A^2}{\epsilon^2}\right)^{\frac{1}{\alpha}}\right).
\]
\end{thm}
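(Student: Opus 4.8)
The plan is to reduce Theorem~\ref{thm:main_low} to the Massart-noise analysis of Theorem~\ref{thm:main} by running the same algorithms with the artificially chosen margin $\Delta=(cK)^{-1/\alpha}$, so that $c\Delta^{\alpha}=1/K$. First I would observe that the active-learning guarantee survives the loss of the hard margin: inspecting the proof of Theorem~\ref{thm:arl}, the bounded-noise assumption is invoked \emph{only} at the very last step, while the preceding argument already yields $|f^*(z)-\hat f(z)|\le\Delta$ for every pool point $z\in\cZ$. Consequently $\hat r(z)=r(z)$ still holds for all pool points with \emph{large margin}, i.e. those with $|f^*(z)-\tfrac12|>\Delta$; only small-margin pool points may be mislabeled. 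With this in hand I would retain the suboptimality decomposition $V_1^{\pi^*}-V_1^{\hat\pi}\le(\mathrm{i})+(\mathrm{ii})$ from the proof sketch of Theorem~\ref{thm:main}, where $(\mathrm{i})$ is the planning error and $(\mathrm{ii})$ collects the two reward-estimation errors.

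The planning term $(\mathrm{i})$ is untouched by the noise model: it is governed entirely by the reward-free exploration bonuses, and therefore inherits verbatim the bound from Theorem~\ref{thm:main}, giving the stated episode complexity $K=O(|\cA|^2 d^5\dim^3(\cF)H^4\iota^3/\epsilon^2)$ in the linear case (and the analogous tabular bound).

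The substance of the argument is the reward term $(\mathrm{ii})$, which I would handle by splitting each per-stage error $\EE_{\pi}[\sum_{a}|r(s_{h+1},a)-\hat r(s_{h+1},a)|]$ into a large-margin and a small-margin part. For the small-margin part the bound $|r-\hat r|\le1$ is crude, but the event is rare: the law of $(s_{h+1},a)$ under ``follow $\pi$ then play $\unif(\cA)$'' is policy-induced at level $h+1$, so Assumption~\ref{assum:low} bounds its contribution by $|\cA|\cdot c\Delta^{\alpha}=|\cA|/K$ per stage, hence $H|\cA|/K$ in total, which is lower order at the stated $K$. For the large-margin part, $\hat r$ agrees with $r$ on every large-margin pool sample, so I would rerun the latent-state generalization pipeline of Theorem~\ref{thm:main}, i.e. Lemma~\ref{lem:bound_error} followed by the supervised-learning risk bound, but with the risk $w_j$ replaced by the \emph{large-margin} risk $\err_{\lambda_j}^{\mathrm{large}}(\hat r)$. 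Since $\hat r$ makes zero large-margin errors on the $c_j$ i.i.d.\ samples drawn from latent state $j$, realizability over the finite class $\cR$ gives $c_j\,\err_{\lambda_j}^{\mathrm{large}}(\hat r)\le\log|\cR|+\log(K/\delta)=\tilde O(\dim(\cF))$, exactly as in the Massart case; crucially no noise condition enters here, since only i.i.d.\ sampling and consistency are used. Summing, $(\mathrm{ii})$ reproduces the consequence of Lemma~\ref{lem:bound_error}, and combined with $(\mathrm{i})$ it yields $\epsilon$-optimality at the claimed $K$; note that $\log(1/\Delta)=\tfrac1\alpha\log(cK)$ folds into $\log K$, which is why $\Delta$ disappears from $\iota$.

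For the query complexity I would simply substitute $\Delta=(cK)^{-1/\alpha}$ into the per-stage bound $\tilde O(\dim^2(\cF)/\Delta^2)$ of Theorem~\ref{thm:arl}, obtaining $\tilde O(H\dim^2(\cF)\,K^{2/\alpha})$ over the $H$ stages, and plug in $K$ to recover the stated $\tilde O(H\dim^2(\cF)(|\cA|^4 d^{10}\dim^6(\cF)H^8/\epsilon^4)^{1/\alpha})$; the tabular query bound follows identically from its tabular $K$. I expect the main obstacle to be precisely the large/small-margin decomposition of $(\mathrm{ii})$: one must check that the generalization machinery of Theorem~\ref{thm:main} goes through when $\hat r$ is consistent \emph{only} on large-margin samples, so that small-margin samples behave as harmless don't-cares, and that the single choice $\Delta=(cK)^{-1/\alpha}$ simultaneously keeps the small-margin leakage below $\epsilon$ and keeps each $c_j w_j$ logarithmic, so that the final reward error matches the noiseless-recovery bound up to logarithmic factors.
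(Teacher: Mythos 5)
Your proposal is correct and follows the same overall strategy as the paper: choose $\Delta=(cK)^{-1/\alpha}$ so that $c\Delta^{\alpha}=1/K$, observe that the proof of Theorem~\ref{thm:arl} uses Assumption~\ref{assum:margin} only in its final step so that the active learner still labels every \emph{large-margin} pool point correctly, reuse the planning-error analysis of Theorem~\ref{thm:main} unchanged, and obtain the query bound by substituting $\Delta$ into $\tilde{O}(H\dim^2(\cF)/\Delta^2)$. The one place you genuinely diverge is in how the mislabeled small-margin points are absorbed. The paper works at the level of the \emph{dataset}: it applies a martingale Chernoff bound to show that each $\cG_h=\{(s_h^k,a_h^k): |f_h^*(s_h^k,a_h^k)-\tfrac12|\le\Delta\}$ has at most $O(\log(H/\delta))$ elements, and then ``rehashes'' the Theorem~\ref{thm:main} generalization argument with a reward function that is consistent with all but that many samples. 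You instead work at the level of the \emph{population}: you restrict the consistency-based realizable-class bound to a large-margin risk (on which $\hat r$ is exactly consistent, so the $\frac1m(\log|\cR|+\log(K/\delta))$ bound goes through verbatim), and you charge the small-margin mass directly to Assumption~\ref{assum:low} applied to the evaluation distribution ``follow $\pi$, then play $\unif(\cA)$,'' giving an additive $H|\cA|/K\ll\epsilon$ leakage. Your route avoids the martingale Chernoff count entirely and makes explicit what the paper leaves as ``simply rehashing''; the paper's route keeps the original risk definition untouched at the cost of an extra concentration step and a slightly perturbed consistency lemma. The only point to be careful about in your version is that Assumption~\ref{assum:low} is stated for policy-induced $(s_h,a_h)$, so invoking it for the state-from-$\pi$-action-uniform law requires averaging over the $|\cA|$ deterministic completions (or reading ``policy'' as allowing stochastic ones); this is routine and does not affect the bound.
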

\begin{proof}
We denote $\Delta=(cK)^{-\frac{1}{\alpha}}$ in the proof. Let $\epsilon=\Delta=(cK)^{-\frac{1}{\alpha}}$ in Assumption~\ref{assum:low}. We have that for any policy $\pi$, and level $h\in[H]$,
\[
P(|f^*_h (s_h,a_h)-1/2|\leq \Delta | s_h, a_h \sim \pi )<\frac{1}{K}.
\]
By a martingale version of the Chernoff bound, we have that with probability at least $1-\delta$, the number of elements in \[
\cG_h=\left\{(s_h^k,a_h^k)\Big| k\in[K], |f_h^*(s_h^k,a_h^k)-\frac12|\leq \Delta\right\}
\]
is at most $O(\log(H/\delta))$ for all $h\in[H]$.

The active learning algorithm guarantees to learn the reward labels correctly for all the elements in the dataset $\cD=\{(s_h^k,a_h^k)\}_{(h,k)\in[H]\times[K]}$, except the ones in $\bigcup_{h=1}
^H\cG_h$. Simply rehashing the proof of Theorem~\ref{thm:main} shows the optimality of the output policy. Plugging in the value of $\Delta$ to Theorem~\ref{thm:main} gives the bound on the total number of queries.
\end{proof}

\end{document}